\newtheorem{lemma}{Lemma}
\newtheorem{theorem}{Theorem}
\newtheorem{corollary}{Corollary}
\newtheorem{definition}{Definition}
\newtheorem{remark}{Remark}
\newcommand{\ms}{\texttt{L}_2}
\newcommand{\sgdw}{\texttt{SGD-w}}
\newcommand{\sgdwo}{\texttt{SGD-w/o}}
\newcommand{\opt}{\text{opt}}
\newcommand{\argmin}{\mathop{\arg\min}}
\numberwithin{equation}{section}
\title{Boosting the Confidence of Generalization for $L_2$-Stable Randomized Learning Algorithms}
\author{\vspace{0.4in}\\
  \textbf{Xiao-Tong Yuan} \ and \ \textbf{Ping Li} \\\\
  Cognitive Computing Lab \\
   Baidu Research \\
  No. 10 Xibeiwang East Road, Beijing 100193, China \\
  10900 NE 8th St. Bellevue, Washington 98004, USA\\
  E-mail: \texttt{\{xtyuan1980, pingli98\}@gmail.com}
  }
\date{}
\begin{document}

\maketitle

\begin{abstract}\vspace{0.3in}

\noindent Exponential generalization bounds with near-tight rates have recently been established for uniformly stable learning algorithms~\citep{feldman2019high,bousquet2020sharper}. The notion of uniform stability, however, is stringent in the sense that it is invariant to the data-generating distribution. Under the weaker and distribution dependent notions of stability such as hypothesis stability and $L_2$-stability, the literature suggests that only polynomial generalization bounds are possible in general cases. The present paper addresses this long standing tension between these two regimes of results and makes progress towards relaxing it inside a classic framework of confidence-boosting. To this end, we first establish an in-expectation first moment generalization error bound for potentially randomized learning algorithms with $L_2$-stability, based on which we then show that a properly designed subbagging process leads to near-tight exponential generalization bounds over the randomness of both data and algorithm. We further substantialize these generic results to stochastic gradient descent (SGD) to derive improved high-probability generalization bounds for convex or non-convex optimization problems with natural time decaying learning rates, which have not been possible to prove with the existing hypothesis stability or uniform stability based results.
\end{abstract}

\subparagraph{Key words.} Algorithmic stability, Randomized learning algorithms, Subbagging, Generalization, Excess risk, Stochastic gradient methods

\newpage

\section{Introduction}

In many statistical learning problems, the ultimate goal is to design a suitable algorithm $A: \mathcal{Z}^N \mapsto \mathcal{W}$ that maps a training data set $S=\{Z_i\}_{i\in [N]} \in \mathcal{Z}^N$ to a model $A(S)$ in a closed subset $\mathcal{W}$ of an Euclidean space such that the following population risk evaluated at the model is as small as possible:
\[
 R(A(S)):=\mathbb{E}_{Z \sim \mathcal{D}}[\ell (A(S);Z)].
\]
Here $\ell:\mathcal{W}\times \mathcal{Z} \mapsto \mathbb{R}^+$ is a non-negative bounded loss function whose value $\ell(w;z)$ measures the loss evaluated at $z$ with parameter $w$, and $\mathcal{D}$ represents a data distribution over $\mathcal{Z}$. It is generally the case that the underlying data distribution is unknown, and in this case the training data is usually assumed to be an i.i.d. set, i.e., $S \overset{\text{i.i.d.}}{\sim} \mathcal{D}^N$. Then, a natural alternative measurement that mimics the computationally intractable population risk is the empirical risk given by
\[
R_S(A(S)): =\mathbb{E}_{Z \sim \texttt{Unif}(S)}[\ell (A(S);Z)] = \frac{1}{N}\sum_{i=1}^N \ell(A(S); Z_i).
\]
The bound on the difference between population and empirical risks is of central interest in understanding the generalization performance of learning algorithm $A$. Particularly, we hope to derive a suitable law of large numbers, i.e., a sample size vanishing rate $b_N$ such that the generalization bound $|R_S(A(S)) - R(A(S))|\lesssim b_N$ holds with high probability over the randomness of $S$ and potentially the randomness of $A$ as well. Provided that $A(S)$ is an almost minimizer of the empirical risk function $R_S$, say $R_S(A(S)) \lesssim \min_{w\in \mathcal{W}} R_S(w) + \Delta_{\opt}$, the generalization bound immediately implies an \emph{excess risk} bound $R(A(S)) - \min_{w\in \mathcal{W}}R(w)\lesssim b_N +\Delta_{\opt}+ \frac{1}{\sqrt{N}}$ in view of the standard risk decomposition and Hoeffding's inequality. Therefore, generalization bounds also play a crucial role in understanding the stochastic optimization performance of a learning algorithm.

A powerful proxy for analyzing the generalization bounds is the \emph{stability} of learning algorithms to changes in the training dataset. Since the seminal work of~\citet{bousquet2002stability}, stability has been extensively demonstrated to beget dimension-independent generalization bounds for deterministic learning algorithms~\citep{mukherjee2006learning,shalev2010learnability}, as well as for randomized learning algorithms such as bagging and SGD~\citep{elisseeff2005stability,hardt2016train}. So far, the best known results about generalization bounds are offered by approaches based on the strongest notion of uniform stability~\citep{feldman2018generalization,feldman2019high,bousquet2020sharper} which is independent to the underlying distribution of data. Under the weaker and distribution dependent notions of stability such as hypothesis stability and $L_2$-stability, the literature suggests that only in-expectation or polynomial generalization bounds are possible in general cases~\citep{bousquet2002stability,hardt2016train}. Motivated by this gap between the two regimes of results, we seek to explore the opportunity of \emph{deriving exponential generalization bounds for randomized learning algorithms under $L_2$-stability}. A concrete working example of our study is the widely used stochastic gradient descent (SGD) algorithm that carries out the following recursion for all $t\ge 1$ with learning rate $\eta_t>0$:
\begin{equation}\label{equat:sgd}
 w_t:=\Pi_{\mathcal{W}} \left(w_{t-1} - \eta_t \nabla_w \ell (w_{t-1};Z_{\xi_t})\right),
\end{equation}
where $\xi_t \in [N]$ is a uniform random index of data under with or without replacement sampling, and $\Pi_{\mathcal{W}}$ is the Euclidean projection operator associated with $\mathcal{W}$. In spite of its popularity in the study of algorithmic stability theory~\citep{hardt2016train,lei2020fine,zhou2019understanding}, the exponential generalization bounds of SGD are still under explored through the lens of algorithmic~stability.

\subsection{Prior Results}

Let us start by briefly reviewing some state-of-the-art exponential generalization bounds under the most stringent notion of uniform stability and its randomized variants. We denote by $S \doteq \tilde S$ if a pair of data sets $S$ and $\tilde S$ differ in a single data point. A randomized learning algorithm $A$ is said to have on-average $\gamma_{u,N}$-uniform stability if it satisfies the following uniform bound
\[
\sup_{S \doteq \tilde S,Z\in \mathcal{Z}}|\mathbb{E}_A[\ell(A(S);Z)] - \mathbb{E}_A[\ell(A(\tilde S);Z)]| \le \gamma_{u,N}.
\]
This definition is equivalent to the concept of uniform stability defined over the on-average loss $\mathbb{E}_A[\ell(A(S);Z)]$. Suppose that the loss function is uniformly bounded in the interval $(0,1]$. Then essentially it has been shown in~\citet{feldman2019high} that for any $\delta\in(0,1)$, with probability at least $1-\delta$ over $S$, the on-average generalization error is upper bounded by
\begin{equation}\label{inequat:feldman_2019}
\left|\mathbb{E}_A\left[R(A(S)) - R_S(A(S))\right]\right| \lesssim \gamma_{u,N} \log(N) \log\left(\frac{N}{\delta}\right) + \sqrt{\frac{\log\left(1/\delta\right)}{N}}.
\end{equation}
Recently, \citet{bousquet2020sharper} derived a slightly improved uniform stability bound that implies
\begin{equation}\label{inequat:bousquet_2020}
\left|\mathbb{E}_A\left[R(A(S)) - R_S(A(S))\right]\right|\lesssim \gamma_{u,N} \log(N) \log\left(\frac{1}{\delta}\right) + \sqrt{\frac{\log\left(1/\delta\right)}{N}}.
\end{equation}
These generalization bounds are near-tight (up to a logarithmic factor $\log(N)$) in the sense of an $\mathcal{O}\big(\gamma_{u,N} \log\left(\frac{1}{\delta}\right) + \sqrt{\frac{\log\left(1/\delta\right)}{N}}\big)$ lower deviation bound on sum of random functions with $\gamma_{u,N}$-uniform stability~\citep[Proposition 9]{bousquet2020sharper}. While sharp in the dependence on sample size, one common limitation of the above uniform stability implied generalization bounds lies in that these high-probability results only hold \emph{in expectation} with respect to the internal randomness of algorithm.

Further suppose that $A$ has $\gamma_{u,N}$-uniform stability with probability at least $1-\delta'$ for some $\delta'\in(0,1)$ over the randomness of $A$, i.e.,
\begin{equation}\label{inequat:uniform_stability_highprob}
\mathbb{P}_A\left\{\sup_{S \doteq \tilde S,z\in \mathcal{Z}}|\ell(A(S);z) - \ell(A(\tilde S);z)| \le \gamma_{u,N} \right\} \ge 1-\delta'.
\end{equation}
Suppose that the randomness of $A$ is independent of the training set $S$. Then with probability at least $1-\delta-\delta'$ over $S$ and $A$, the bound of~\citet{bousquet2020sharper} naturally implies
\begin{equation}\label{inequat:bousquet_2020_algrand}
\left|R(A(S)) - R_S(A(S))\right| \lesssim \gamma_{u,N} \log(N) \log\left(\frac{1}{\delta}\right) + \sqrt{\frac{\log\left(1/\delta\right)}{N}}.
\end{equation}
This is by far the best known generalization bound of randomized stable algorithms that hold with high probability jointly over data and algorithm. The result, however, relies heavily on the high-probability uniform stability condition expressed in~\eqref{inequat:uniform_stability_highprob}. For the SGD (see~\eqref{equat:sgd}) with fixed learning rate $\eta_t\equiv \eta$, it is possible to show that $\gamma_{u,N} \lesssim \eta\sqrt{T} + \frac{\eta T}{N}$ and $\delta'=N\exp(-\frac{N}{2})$ in~\eqref{inequat:uniform_stability_highprob}~\citep{bassily2020stability}. For SGD with time decaying learning rate that has been widely studied in theory~\citep{rakhlin2012making,harvey2019tight} and applied in practice for training popular deep nets such as ResNet and DenseNet~\citep{bengio2017deep}, it is not clear if the condition in~\eqref{inequat:uniform_stability_highprob} is still valid for $\gamma_{u,N}$ and $\delta'$ of interest.
In the meanwhile, it is possible to show (see the proofs of Corollary~\ref{corol:stability_sgd_w_convex_smooth} and~\ref{corol:stability_sgd_w_nsm_cvx}) that SGD with time decaying learning rate has $L_2$-stability (see Definition~\ref{def:mean_uniform_stability}) which is distribution dependent and much weaker than uniform stability. However, through the notions of hypothesis stability or $L_2$-stability, only in-expectation or $\mathcal{O}(1/\delta)$-polynomial-tail generalization bounds are available in literature~\citep{bousquet2002stability,hardt2016train}.

More specially for randomized learning methods such as bagging~\citep{breiman1996bagging} and SGD~\eqref{equat:sgd}, the randomness of algorithm can be precisely characterized by a vector of i.i.d. parameters $\xi=\{\xi_1,...,\xi_T\}$ which are independent on data $S$. In such cases, suppose that $A(S;\xi)$ has uniform stability with respect to $\xi$ at any given $S$, i.e., $\sup_{\xi \doteq \xi'} |\ell(A(S;\xi)) - \ell(A(S;\xi'))| \le \rho_{u,T}$. Then the high probability bound established in~\citet{elisseeff2005stability} shows that with probability at least $1-\delta$,
\begin{equation}\label{inequat:elisseeff_2005}
\left|R(A(S)) - R_S(A(S))\right| \lesssim \gamma_{u,N} + \left(\frac{1 + N\gamma_{u,N}}{\sqrt{N}} + \sqrt{T}\rho_{u,T}\right)\sqrt{\log\left(\frac{1}{\delta}\right)}.
\end{equation}
Provided that $\gamma_{u,N} \lesssim \frac{1}{N}$ and $\rho_{u,T} \lesssim \frac{1}{T}$, the above bound shows that the generalization bound scales as $\mathcal{O}\big(\frac{1}{\sqrt{N}} + \frac{1}{\sqrt{T}}\big)$ with high probability. However, the above bound will show no guarantee on convergence if $\gamma_{u,N} \gtrsim \frac{1}{\sqrt{N}}$ and/or $\rho_{u,T}\gtrsim \frac{1}{\sqrt{T}}$. For example, this is actually the case for SGD with decaying learning rate $\eta_t=O\big(\frac{1}{t}\big)$ on non-convex loss functions in which $\gamma_{u,N}\lesssim \frac{\sqrt{T}}{N}$ and $\rho_{u,T}$ can scale as large as $\mathcal{O}(1)$.

\vspace{0.2in}

\noindent\textbf{Open problem.} Keeping the merits and deficiencies of above recalled prior results in mind, it still remains an open issue if the exponential generalization bounds of uniformly stable algorithms might possibly be obtained for randomized algorithms under weaker and distribution dependent notions of stability like $L_2$-stability (e.g., fulfilled by SGD with time decaying learning rates). The focus of the present study is to explore the opportunity of answering this open question affirmatively.

\subsection{Overview of Our Results}
\label{ssect:overview}

The confidence-boosting technique of~\citet{schapire1990strength} is a classic meta approach that allows us to boost the dependence of a learning algorithm on the failure probability $\delta$ from $1/\delta$ to $\log(1/\delta)$, at a certain cost of computational complexity. The fundamental contribution of our work is to reveal that the confidence-boosting trick yields near-tight exponential generalization bounds for $L_2$-stable randomized learning algorithms. The novelty lies in a refined analysis of the in-expectation first moment generalization error bound for a randomized learning algorithm with $L_2$-stability, which leads to desirable exponential generalization bounds over the randomness of data and algorithm via confidence-boosting. More specifically, given a randomized learning algorithm $A$, we propose to study a subbagging based confidence-boosting procedure as following described.

\textbf{Boosting the confidence via subbagging}. We independently run $A$ over $K$ \emph{disjoint} and uniformly divided training subsets $\{S_k\}_{k\in [K]}$ to obtain solutions $\{A_k(S_k)\}_{k\in [K]}$. Then we evaluate the validation error of each candidate solution over its complementary training subset, and output $A_{k^*}(S_{k^*})$ that has the smallest gap between training error and validation error, i.e., $k^* = \argmin_{k\in [K]} \left|R_{S\setminus S_k }(A_k(S_k)) - R_{S_k}(A_k(S_k)) \right|$. Specially when $A$ is deterministic, this reduces to a standard subbagging process, namely a variation of bagging using without-replacement sampling for subsets generation~\citep[see, e.g.,][]{andonova2002simple}. In general, this is essentially a subbagging procedure with greedy model selection for randomized algorithms over multiple disjoint subsets. Throughout this paper, we assume without loss of generality that $N$ is a multiplier of $K$. The considered procedure of confidence-boosting for randomized learning is outlined in Algorithm~\ref{alg:randomized_model_selection}.

\begin{algorithm}[t]
\caption{Confidence-Boosting for Randomized Learning Algorithms}
\label{alg:randomized_model_selection}
\SetKwInOut{Input}{Input}\SetKwInOut{Output}{Output}\SetKw{Initialization}{Initialization}
\Input{A randomized learning algorithm $A$ and a training data set $S=\{Z_i\}_{i\in [N]} \overset{\text{i.i.d.}}{\sim} \mathcal{D}^N$.}
\Output{$A_{k^*}(S_{k^*})$.}

Uniformly divide $S$ into $K$ \emph{disjoint} subsets such that $S=\bigcup _{k\in [K]} S_k$ and $|S_k|= \frac{N}{K}$, $\forall k\in [K]$.

\For{$k=1, 2, ...,K$}{
Estimate $A_k(S_k)$ as an output of the randomized algorithm $A$ over subset $S_k$.
}

Select the subset index $k^*$ according to
$
k^* = \argmin_{k\in [K]} \left|R_{S\setminus S_k }(A_k(S_k)) - R_{S_k}(A_k(S_k)) \right|.
$
\end{algorithm}

\textbf{Main results.} In what follows, we highlight our main results on the generalization bounds of the output of Algorithm~\ref{alg:randomized_model_selection} along with the implications for SGD:
\begin{itemize}[leftmargin=*]
  \item \emph{General results.}  Suppose that the loss is Lipschitz and upper bounded. Our main result in Theorem~\ref{thrm:stability_genalization_error_highprob_bagging} show that for any $\delta\in(0,1)$, setting $K\asymp\log(\frac{1}{\delta})$ yields the following generalization bound of the output of Algorithm~\ref{alg:randomized_model_selection} that holds with probability at least $1-\delta$ over $S$ and $\{A_k\}_{k\in[K]}$:
\[
\left|R(A_{k^*}(S_{k^*})) - R_S(A_{k^*}(S_{k^*}))\right| \lesssim \frac{1}{K}\left(\gamma_{\ms,\frac{N}{K}} + \sqrt{\frac{K}{N}} \right) + \sqrt{\frac{\log(K/\delta)}{N}},
\]
where $\gamma_{\ms, N}$ is the $L_2$-stability parameter introduced in Definition~\ref{def:mean_uniform_stability}. In contrast to the bounds in~\eqref{inequat:bousquet_2020_algrand} and~\eqref{inequat:elisseeff_2005}, our bound is not relying on the uniform stability of algorithm with respect to data or internal random bits.
  \item \emph{Sharper deviation bounds for SGD via confidence-boosting.} We then use our general results to study the benefit of confidence-boosting on the generalization bounds of \texttt{SGD-w} (SGD via with-replacement sampling as outlined in Algorithm~\ref{alg:sgd_w}). The main results are a series of corollaries of Theorem~\ref{thrm:stability_genalization_error_highprob_bagging} when substantialized to SGD with smooth~(Corollary~\ref{corol:stability_sgd_w_convex_smooth}) or non-smooth (Corollary~\ref{corol:stability_sgd_w_nsm_cvx}) convex losses, and smooth non-convex losses (Corollary~\ref{corol:stability_sgd_w_sm_ncvx}) as well. For an instance, our result in Corollary~\ref{corol:stability_sgd_w_convex_smooth} showcases that when invoked to \texttt{SGD-w} on smooth convex loss with learning rates $\eta_t=\mathcal{O}(\frac{1}{\sqrt{t}})$, the generalization bound of the output of Algorithm~\ref{alg:randomized_model_selection} with $K\asymp\log(\frac{1}{\delta})$ is upper bounded by
\[
\left|R(A_{\sgdw,k^*}(S_{k^*})) - R_S(A_{\sgdw,k^*}(S_{k^*}))\right| \lesssim \sqrt{\frac{\log(T)}{N}} + \frac{\sqrt{T}+\sqrt{N}\log(1/\delta)}{N}.
\]
Compared with the $\mathcal{O}\big(\frac{\sqrt{T}}{N}\big)$ in-expectation bound of smooth convex SGD~\citep{hardt2016train}, our above bound is comparable for $T=\mathcal{O}(N)$ while it holds with high confidence over the randomness of data and stochastic sampling.
\end{itemize}
In addition to the generalization bounds, we have also derived an exponential excess risk bound for $L_2$-stable randomized learning with confidence-boosting. More specifically, with a proper modification of the output of Algorithm~\ref{alg:randomized_model_selection}, we can show that with probability at least $1-\delta$ over $S$ and $\{A_k\}_{k\in[K]}$:
\[
 R(A_{k^*}(S_{k^*})) - \min_{w\in \mathcal{W}} R(w) \lesssim \gamma_{\ms,\frac{N}{K}} + \Delta_{\opt} + \sqrt{\frac{\log(K/\delta)}{N}},
\]
where $\Delta_{\opt}$ is the in-expectation empirical risk minimization sub-optimality as given by~\eqref{equat:Delta_Opt}.

\section{Exponential Generalization Bounds for Confidence-Boosting}
\label{sect:general_results}

In this section, we present a set of generic results on the generalization bounds of randomized learning algorithms with confidence-boosting as described in Algorithm~\ref{alg:randomized_model_selection}.

\subsection{Preliminaries and a Key Lemma}

\textbf{Notation.} Recall that $S=\{Z_1, Z_2, ...,Z_N\}$ is a set of i.i.d. training data points. Denote by $S'=\{Z'_1,Z'_2,..., Z'_N\}$ another i.i.d. sample from the same distribution as that of $S$ and we write $S^{(i)} = \{Z_1,...,Z_{i-1},Z'_i,Z_{i+1},...,Z_N\}$. For a real-valued random variable $Y$, its $L_q$-norm for $q\ge 1$ is given by $ \|Y\|_q = \left(\mathbb{E}[|Y|^q]\right)^{1/q}$. We say a function $f$ is $G$-Lipschitz continuous over $\mathcal{W}$ if $|f(w) - f(w')|\le G\|w - w'\|$ for all $w, w'\in \mathcal{W}$, and it is $L$-smooth if $\|\nabla f(w) - \nabla f(w')\|\le L\|w - w'\|$ for all $w, w'\in \mathcal{W}$. We abbreviate $[N]:=\{1,...,N\}$.

Let us first introduce the following notion of $L_2$-stability due to~\citet{kale2011cross}, which extends the weak and distribution dependent hypothesis stability~\citep{bousquet2002stability} from $L_1$-norm criterion to $L_2$-norm criterion.
\begin{definition}[$L_2$-Stability of Randomized Learning Algorithms]\label{def:mean_uniform_stability}
Let $A: \mathcal{Z}^N \mapsto \mathcal{W}$ be a randomized learning algorithm that maps a data set $S \in \mathcal{Z}^N$ to a model $A(S) \in \mathcal{W}$. Then $A$ is said to have $L_2$-stability by $\gamma_{\ms,N}$ if for all $N\ge 1$ and $i\in [N]$,
\[
\left\|\ell(A(S);Z) -\ell( A(S^{(i)});Z)\right\|_2 \le \gamma_{\ms,N},
\]
where the expectation associated with $\|\cdot\|_2$ is taken over the randomness of $A, S,S^{(i)}$ and $Z\in \mathcal{Z}$, and the algorithm outputs $A(S)$ and $A(S^{(i)})$ share the same random bits associated with the algorithm.
\end{definition}

Inspired by a second moment bound for generalization error of uniformly stable algorithms from~\citet[Section 5]{bousquet2020sharper}, we first establish the following lemma which states that if a randomized learning algorithm has $L_2$-stability by $\gamma_{\ms,N}$, then its on-average first moment generalization error bound will be as small as $\mathcal{O}\big(\gamma_{\ms,N} + \frac{1}{\sqrt{N}}\big)$. For the sake of completeness, we provide a proof for this result in Appendix~\ref{apdsect:proof_lemma_stability_generalization_error_average}.
\begin{lemma}\label{lemma:stability_genalization_first_order_moment}
Suppose that a randomized learning algorithm $A: \mathcal{Z}^N \mapsto \mathcal{W}$ has $L_2$-stability by $\gamma_{\ms,N}$. Assume that the loss function satisfies $\|\ell(A(S);Z)\|_2 \le M$. Then we have
\[
\mathbb{E}_{A,S} \left[|R(A(S)) - R_S(A(S))|\right] \le  3\gamma_{\ms,N} + \frac{2M}{\sqrt{N}}.
\]
\end{lemma}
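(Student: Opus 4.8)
The plan is to bound the first moment $\mathbb{E}_{A,S}[|R(A(S)) - R_S(A(S))|]$ by controlling the second moment $\mathbb{E}_{A,S}[(R(A(S)) - R_S(A(S)))^2]$ and then applying Jensen's inequality, since $\mathbb{E}[|X|] \le \sqrt{\mathbb{E}[X^2]}$. Expanding the square, the generalization error $R(A(S)) - R_S(A(S)) = \frac{1}{N}\sum_{i=1}^N \big(\mathbb{E}_{Z}[\ell(A(S);Z)] - \ell(A(S);Z_i)\big)$ is an average of $N$ terms, each of which is centered in $Z_i$ but coupled through the common dependence of $A(S)$ on all of $S$. Writing $g_i(S) := \mathbb{E}_{Z}[\ell(A(S);Z)] - \ell(A(S);Z_i)$, we get $\mathbb{E}[(R - R_S)^2] = \frac{1}{N^2}\sum_{i,j}\mathbb{E}[g_i(S)g_j(S)]$, so there are $N$ diagonal terms and $N(N-1)$ off-diagonal terms to handle.

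First I would handle the diagonal terms: $\mathbb{E}[g_i(S)^2] \le \mathbb{E}[(\mathbb{E}_Z[\ell(A(S);Z)] - \ell(A(S);Z_i))^2]$, which by the boundedness assumption $\|\ell(A(S);Z)\|_2 \le M$ (and a variance-type bound, or simply $(a-b)^2 \le 2a^2 + 2b^2$ together with Jensen in $Z$) is at most something like $2M^2$ or $4M^2$; in any case it contributes $O(M^2/N)$ to the second moment, hence $O(M/\sqrt{N})$ after the square root. The main work is the off-diagonal terms $\mathbb{E}[g_i(S)g_j(S)]$ for $i \ne j$. Here I would use the standard symmetrization / "ghost sample" trick: introduce the perturbed dataset $S^{(i)}$ (replacing $Z_i$ by an independent copy $Z_i'$), and exploit that $Z_i$ appears in $g_i(S)$ both explicitly (as the evaluation point) and implicitly (through $A(S)$), whereas in $g_i(S^{(i)})$ it appears only implicitly. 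Swapping $Z_i$ with the fresh copy $Z_i'$ and using i.i.d.-ness, one rewrites $\mathbb{E}[g_i(S)g_j(S)]$ as an expression involving $g_i(S^{(i)})g_j(S^{(i)})$ (which can be made to vanish or become small in expectation because $Z_i'$ is independent of everything in $g_i(S^{(i)})$ after conditioning) plus a remainder that measures the discrepancy between $\ell(A(S);\cdot)$ and $\ell(A(S^{(i)});\cdot)$. That remainder is exactly what the $L_2$-stability parameter $\gamma_{\ms,N}$ controls: by Cauchy–Schwarz and Definition~\ref{def:mean_uniform_stability}, each such cross term is bounded by $O(M\gamma_{\ms,N})$, and summing $N(N-1)$ of them against the $\frac{1}{N^2}$ prefactor gives $O(M\gamma_{\ms,N})$ in the second moment. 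Taking square roots and being slightly careful with constants should yield the claimed $3\gamma_{\ms,N} + \frac{2M}{\sqrt{N}}$ — here one likely uses $\sqrt{a+b} \le \sqrt{a} + \sqrt{b}$ and possibly $\gamma_{\ms,N} \le \sqrt{\gamma_{\ms,N} M}$ type manipulations, or a direct AM–GM split, to convert a bound like $\sqrt{C_1 M \gamma_{\ms,N} + C_2 M^2/N}$ into the stated affine form.

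The step I expect to be the main obstacle is the bookkeeping in the off-diagonal analysis: correctly tracking which terms survive the $Z_i \leftrightarrow Z_i'$ swap, ensuring the "independent ghost" term genuinely drops out (this requires that the random bits of $A$ be shared across $S$ and $S^{(i)}$, which is precisely why Definition~\ref{def:mean_uniform_stability} insists on coupled randomness), and applying Cauchy–Schwarz in the right order so that one factor becomes the $L_2$-stability bound $\gamma_{\ms,N}$ and the other becomes the $L_2$-norm bound $M$ rather than an uncontrolled quantity. A secondary subtlety is making the constants come out to exactly $3$ and $2$; this is routine but requires choosing the splitting in the final $\sqrt{\cdot}$ step judiciously. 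Since the paper attributes the argument to the second-moment bound in \citet[Section 5]{bousquet2020sharper} and defers the full proof to Appendix~\ref{apdsect:proof_lemma_stability_generalization_error_average}, I would expect the actual proof to follow this symmetrization-plus-Cauchy–Schwarz template closely.
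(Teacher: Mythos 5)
Your high-level skeleton (pass to the $L_2$-norm via Jensen, expand the square into diagonal and off-diagonal terms, decouple the off-diagonal terms with ghost samples, invoke $L_2$-stability) matches the paper's strategy, but the way you propose to close the off-diagonal terms would not deliver the stated bound. You bound each cross term $\mathbb{E}[g_i(S)g_j(S)]$ by Cauchy--Schwarz with ``one factor the $L_2$-stability bound $\gamma_{\ms,N}$ and the other the $L_2$-norm bound $M$,'' which gives $O(M\gamma_{\ms,N})$ per term, hence $O(M\gamma_{\ms,N})$ in the second moment and $O\big(\sqrt{M\gamma_{\ms,N}}+M/\sqrt{N}\big)$ after the square root --- exactly the old hypothesis-stability rate of Bousquet--Elisseeff that this lemma is designed to improve upon. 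No AM--GM or splitting trick recovers the affine form $3\gamma_{\ms,N}+2M/\sqrt{N}$ from $\sqrt{M\gamma_{\ms,N}}$: in the regime of interest $\gamma_{\ms,N}\ll M$ one has $\sqrt{M\gamma_{\ms,N}}\gg\gamma_{\ms,N}$ (e.g.\ $\gamma_{\ms,N}\asymp 1/\sqrt{N}$ and $M\asymp 1$ give $N^{-1/4}$ versus the claimed $N^{-1/2}$). The manipulation you suggest, $\gamma_{\ms,N}\le\sqrt{\gamma_{\ms,N}M}$, goes in the wrong direction.

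The missing idea is that the summands must first be \emph{exactly centered} so that the off-diagonal terms decouple into a product of two stability increments. Note that your $g_i(S)=\mathbb{E}_Z[\ell(A(S);Z)]-\ell(A(S);Z_i)$ is not centered in $Z_i$, since $A(S)$ depends on $Z_i$; the paper calls this quantity $h_i(S)$ and works with $g_i(S)=h_i(S)-\mathbb{E}_{Z_i}[h_i(S)]$. With $\mathbb{E}_{Z_i}[g_i(S)]=0$ the three ghost-sample cross terms vanish identically, yielding the exact identity $\mathbb{E}[g_i(S)g_j(S)]=\mathbb{E}\big[(g_i(S)-g_i(S^{(j)}))(g_j(S)-g_j(S^{(i)}))\big]$ (Lemma~\ref{lemma:bousquet_second_moment}); now \emph{both} factors are stability differences of $L_2$-norm at most $2\gamma_{\ms,N}$, so each off-diagonal term is $O(\gamma_{\ms,N}^2)$, the second moment is $O(\gamma_{\ms,N}^2+M^2/N)$, and the square root preserves the linear-in-$\gamma_{\ms,N}$ rate. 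The centering in turn creates a bias term $\frac{1}{N}\sum_i\mathbb{E}_{Z_i}[h_i(S)]$ that your decomposition omits; it is bounded by $\gamma_{\ms,N}$ through a separate application of stability (this is where the third $\gamma_{\ms,N}$ in the constant $3$ comes from). Without these two steps the argument as you describe it stalls at the weaker $\sqrt{\gamma_{\ms,N}}$-type rate.
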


\begin{remark}
Under the notion of on-average uniform stability, it is well-known that the in-expectation generalization error can be bounded as $\left|\mathbb{E}_{A,S} \left[R(A(S)) - R_S(A(S))\right]\right| \lesssim \gamma_{u,N}$~\citep[see, e.g.,][Theorem 2.2]{hardt2016train}. In comparison, our on-average first moment bound in Lemma~\ref{lemma:stability_genalization_first_order_moment} is substantially stronger, despite that the $L_2$-stability condition is weaker. This result turns out to play a crucial role in deriving high probability bounds for randomized algorithms. When $A$ is deterministic, the above bound reduces to the explicitly or implicitly known first moment bound for uniformly stable algorithms~\citep[see, e.g.,][]{feldman2018generalization,bousquet2020sharper}. In this deterministic case, our bound is stronger than the bound from~\citet[Lemma 9]{bousquet2002stability} which essentially scales as $\mathcal{O}\left(\frac{1}{\sqrt{N}} + \sqrt{\gamma_{\ms,N}}\right)$ using our notation.
\end{remark}

\subsection{Main Results}

Let us recall the subbagging process as described in Algorithm~\ref{alg:randomized_model_selection}: we independently run $A$ over $K$ even and disjoint training subsets $\{S_k\}_{k\in [K]}$ to obtain solutions $\{A_k(S_k)\}_{k\in [K]}$, and then pick $A_{k^*}(S_{k^*})$ that has the smallest difference between training error and validation error (over the complementary training subset $S\setminus S_{k^*}$). The following theorem is our main result about the high probability generalization bound of the output $A_{k^*}(S_{k^*})$ evaluated over the entire training set $S$. See Appendix~\ref{apdsect:proof_thrm_stability_genalization_error_highprob_bagging} for its proof which builds largely on the first moment bound in Lemma~\ref{lemma:stability_genalization_first_order_moment} and the fact that at least one of the solutions generated by subbagging generalizes well with high probability.
\begin{theorem}\label{thrm:stability_genalization_error_highprob_bagging}
Suppose that a randomized learning algorithm $A: \mathcal{Z}^N \mapsto \mathcal{W}$ has $L_2$-stability by $\gamma_{\ms,N}$. Assume that the loss function $\ell$ is bounded in the range of $[0,M]$. Then for any $\alpha,\delta \in (0,1)$ and $K \ge\frac{1}{1-\alpha} \log(\frac{4}{\delta})$,  with probability at least $1 - \delta$ over the randomness of $S$ and $\{A_k\}_{k\in [K]}$, the output of Algorithm~\ref{alg:randomized_model_selection} satisfies
\[
\left|R(A_{k^*}(S_{k^*})) - R_S(A_{k^*}(S_{k^*}))\right| \lesssim \frac{1}{\alpha K}\left(\gamma_{\ms,\frac{N}{K}} + M\sqrt{\frac{K}{N}}\right) + M\sqrt{\frac{\log(K/\delta)}{N}}.
\]
\end{theorem}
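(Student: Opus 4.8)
The plan is to exploit the two ingredients already highlighted: (i) the in-expectation first moment bound of Lemma~\ref{lemma:stability_genalization_first_order_moment} applied to each of the $K$ disjoint subproblems of size $n := N/K$, and (ii) the fact that the model-selection rule of Algorithm~\ref{alg:randomized_model_selection} picks the index minimizing the \emph{observable} train-validation gap, which is a good proxy for the unobservable population-empirical gap. The first observation is that each $A_k(S_k)$ is a randomized learning algorithm run on an i.i.d. sample of size $n$, so by Lemma~\ref{lemma:stability_genalization_first_order_moment} we have $\mathbb{E}_{A_k,S_k}\!\left[|R(A_k(S_k)) - R_{S_k}(A_k(S_k))|\right] \le 3\gamma_{\ms,n} + 2M/\sqrt{n}$. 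By Markov's inequality, for a fixed $k$ the event that this absolute gap exceeds $\frac{1}{\alpha}(3\gamma_{\ms,n} + 2M/\sqrt{n})$ has probability at most $\alpha$. Since the $K$ subsets are disjoint and the random bits of the $A_k$ are independent, the $K$ "bad" events are mutually independent, so the probability that \emph{all} of them are bad is at most $\alpha^K$. Choosing $K \ge \frac{1}{1-\alpha}\log(4/\delta)$ makes $\alpha^K \le \delta/4$ (using $\alpha^K = e^{K\log\alpha} \le e^{-K(1-\alpha)}$). Hence with probability at least $1 - \delta/4$ there exists some index $k_0$ — depending on the sample — with $|R(A_{k_0}(S_{k_0})) - R_{S_{k_0}}(A_{k_0}(S_{k_0}))| \le \frac{1}{\alpha}(3\gamma_{\ms,n} + 2M/\sqrt{n})$.

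The next step is to control the discrepancy between the selection criterion $|R_{S\setminus S_k}(A_k(S_k)) - R_{S_k}(A_k(S_k))|$ used in the algorithm and the quantity we actually care about, $|R(A_k(S_k)) - R_{S}(A_k(S_k))|$. For the validation term, conditionally on $A_k(S_k)$ (which depends only on $S_k$), the set $S\setminus S_k$ is an i.i.d. sample of size $N - n$ independent of $A_k(S_k)$, so $R_{S\setminus S_k}(A_k(S_k))$ is an empirical average of $N-n$ i.i.d. bounded terms with mean $R(A_k(S_k))$; Hoeffding's inequality plus a union bound over the $K$ indices gives $|R_{S\setminus S_k}(A_k(S_k)) - R(A_k(S_k))| \lesssim M\sqrt{\log(K/\delta)/(N-n)} \lesssim M\sqrt{\log(K/\delta)/N}$ simultaneously for all $k$, with probability at least $1-\delta/4$. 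For the training term we need to relate $R_S(A_k(S_k))$ (average over all $N$ points) to $R_{S_k}(A_k(S_k))$ (average over the $n$ points in $S_k$) and to the full-sample target $R_S(A_{k^*}(S_{k^*}))$. Writing $R_S = \frac{n}{N}R_{S_{k^*}} + \frac{N-n}{N}R_{S\setminus S_{k^*}}$, the $R_S$-versus-$R$ gap for the selected model decomposes into a $\frac{n}{N}$-weighted piece controlled by the selection criterion and a $\frac{N-n}{N}$-weighted piece controlled by the same Hoeffding bound on the validation set. Assembling these: on the intersection of the three good events, the selected index satisfies (by minimality of the criterion at $k^*$, dominating it by its value at $k_0$)
\[
\left|R_{S\setminus S_{k^*}}(A_{k^*}(S_{k^*})) - R_{S_{k^*}}(A_{k^*}(S_{k^*}))\right| \le \frac{1}{\alpha}\left(3\gamma_{\ms,n} + \frac{2M}{\sqrt{n}}\right) + c\,M\sqrt{\frac{\log(K/\delta)}{N}},
\]
and then converting from the criterion to $|R(A_{k^*}(S_{k^*})) - R_S(A_{k^*}(S_{k^*}))|$ via the decomposition and the Hoeffding bounds costs only another additive $\mathcal{O}\!\left(M\sqrt{\log(K/\delta)/N}\right)$. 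Substituting $n = N/K$ so that $\gamma_{\ms,n}/\alpha = \gamma_{\ms,N/K}/\alpha$ and $\frac{M}{\alpha\sqrt{n}} = \frac{M}{\alpha}\sqrt{K/N} = \frac{1}{\alpha K}\cdot M\sqrt{K/N}\cdot K$… — more precisely, one writes $\frac{1}{\alpha}(\gamma_{\ms,N/K} + M\sqrt{K/N})$ and notes this matches $\frac{1}{\alpha K}(\gamma_{\ms,N/K} + M\sqrt{K/N})$ up to the interpretation of the claimed bound (the $\frac{1}{K}$ appears because the theorem as stated absorbs one factor of $K$; care is needed here to match the exact form, and the cleanest route is to track constants so the final expression reads $\lesssim \frac{1}{\alpha K}(\gamma_{\ms,N/K} + M\sqrt{K/N}) + M\sqrt{\log(K/\delta)/N}$). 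A final union bound over the three events of probability $\delta/4$ each (plus the slack from $\alpha^K \le \delta/4$) yields the overall failure probability at most $\delta$.

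The step I expect to be the main obstacle is the passage from the \emph{observable} selection criterion to the \emph{target} quantity $|R(A_{k^*}(S_{k^*})) - R_S(A_{k^*}(S_{k^*}))|$, and in particular getting the $\frac{1}{K}$ factor (rather than just $\frac{1}{1}$) in front of $\gamma_{\ms,N/K} + M\sqrt{K/N}$ to come out cleanly. This requires being careful that the Markov/independence argument is applied to the right normalization of the per-subset gap, and that the convex-combination decomposition $R_S = \frac{n}{N}R_{S_{k^*}} + \frac{N-n}{N}R_{S\setminus S_{k^*}}$ is used to trade the small-sample empirical risk $R_{S_{k^*}}$ (which has the large $\gamma_{\ms,N/K}$ error) against the large validation sample, so that the dominant $\gamma$ term only enters with weight $n/N = 1/K$. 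The Lipschitz/boundedness hypotheses enter only to guarantee $\|\ell(A(S);Z)\|_2 \le M$ for Lemma~\ref{lemma:stability_genalization_first_order_moment} and the bounded-range condition $\ell \in [0,M]$ for Hoeffding, so no further structural work is needed there.
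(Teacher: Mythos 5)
Your proposal is correct and follows essentially the same route as the paper's proof: apply the first-moment bound of Lemma~\ref{lemma:stability_genalization_first_order_moment} to each subset, use Markov plus independence to get one good index with probability $1-\alpha^K$, control the validation errors uniformly by Hoeffding, and then exploit the decomposition $R_S = \frac{1}{K}R_{S_{k^*}} + \frac{K-1}{K}R_{S\setminus S_{k^*}}$ together with the minimality of the selection criterion so that the $\frac{1}{\alpha}\left(\gamma_{\ms,N/K} + M\sqrt{K/N}\right)$ term enters only with weight $1/K$. The point you flagged as the main obstacle — where the $1/K$ factor comes from — is resolved exactly as you describe, and the paper's proof (events $\mathcal{E}_1,\mathcal{E}_2$ and the chain of triangle inequalities in Appendix~\ref{apdsect:proof_thrm_stability_genalization_error_highprob_bagging}) is just a cleaner bookkeeping of the same argument.
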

\begin{proof}[Proof in sketch]
In light of the first moment bound in Lemma~\ref{lemma:stability_genalization_first_order_moment}, we first establish a key intermediate result, Lemma~\ref{thrm:stability_genalization_error_highprob}, which confirms that for large enough $K$, at least one of the $K$ base models generated by subbagging generalizes well with high probability over data. To be more precise, under the given conditions, Lemma~\ref{thrm:stability_genalization_error_highprob} shows that if $K\gtrsim \frac{1}{1-\alpha}\log(\frac{1}{\delta})$, then
$\min_{k\in [K]} \left|R(A_k(S_k)) - R_{S_k}(A_k(S_k))\right|$ can be well upper bounded with high probability in terms of $\gamma_{\ms, \frac{N}{K}}$. Then we show that the proposed greedy model selection strategy guarantees that the selected $A_{k^*}(S_{k^*})$ mimics the generalization behavior of that best performer among the $K$ candidates, also with high probability. Finally, the desired bound follows immediately via union probability. See Appendix~\ref{apdsect:proof_thrm_stability_genalization_error_highprob_bagging} for a full proof.
\end{proof}

\begin{remark}
To gain some intuition on the superiority of our bound in Theorem~\ref{thrm:stability_genalization_error_highprob_bagging}, let us consider $K \asymp \log\left(\frac{1}{\delta}\right)$ as allowed in the conditions. If $\gamma_{\ms,N}\lesssim \frac{1}{\sqrt{N}}$, then our high-probability bound in Theorem~\ref{thrm:stability_genalization_error_highprob_bagging} roughly scales as $\mathcal{O}\left(\sqrt{\frac{\log(1/\delta)}{N}}\right)$ which is sharper than  the on-average bounds in~\eqref{inequat:feldman_2019} and~\eqref{inequat:bousquet_2020} with $\gamma_{u,N}\lesssim\frac{1}{\sqrt{N}}$. More precise consequences of these general results on SGD will be discussed shortly in the sections to follow.
\end{remark}

\newpage

\begin{remark}
In sharp contrast to the bound in~\eqref{inequat:bousquet_2020_algrand} that requires high probability uniform stability and the bound in~\eqref{inequat:elisseeff_2005} that assumes uniform stability over the random bits of algorithm, our bound in Theorem~\ref{thrm:stability_genalization_error_highprob_bagging} holds under a substantially milder notion of $L_2$-stability over data. In terms of the tightness of bound, note that the confidence term $\sqrt{\frac{\log(1/\delta)}{N}}$ is necessary even for an algorithm with fixed output. The stability term $\gamma_{\ms,\frac{N}{K}}$ are also near-tight as the algorithm output can change arbitrarily with respect to the quantity.
\end{remark}

Let us consider a specification of Theorem~\ref{thrm:stability_genalization_error_highprob_bagging} to a deterministic algorithm $A$ with $L_2$-stability. Since there is no randomness contained in $A$, we have that Algorithm~\ref{alg:randomized_model_selection} outputs $A_k(S_k)=A(S_k)$ for all $k\in [K]$ in such a deterministic case. In the regime $\gamma_{\ms,N} \lesssim \frac{1}{\sqrt{N}}$ which is of interest in many popular deterministic learning algorithms such as regularized ERM~\citep{shalev2009stochastic} and full gradient descent~\citep{feldman2019high}, the generalization bound in Theorem~\ref{thrm:stability_genalization_error_highprob_bagging} with $\alpha = 0.5$ and $K\asymp \log(\frac{1}{\delta})$ implies a generalization bound for $A(S_{k^*})$ over the data set $S$ that scales as $ \left|R(A(S_{k^*})) - R_S(A(S_{k^*}))\right| \lesssim \sqrt{\frac{\log\left(1/\delta\right)}{N}}$. In comparison, the best known bound in~\eqref{inequat:bousquet_2020_algrand} gives $\left|R(A(S)) - R_S(A(S))\right| \lesssim \frac{\log(N)}{\sqrt{N}} \log(\frac{1}{\delta})$ (keep in mind that $\delta'=0$ in the deterministic case)~\citet{bousquet2020sharper}. As we can see that inside the carefully designed framework of confidence-boosting via subbagging, our generalization bound gets rid of the logarithmic factor $\log(N)$ from the above best known result, though the generalization is with respect to the estimation over a specific part of the training sample. We expect this result will fuel future research towards fully resolving the corresponding open question raised by~\citet{bousquet2020sharper}.

The result in Theorem~\ref{thrm:stability_genalization_error_highprob_bagging} is about the generalization bound with respect to the empirical risk over the entire training set $S$. We comment in passing that up to the multipliers of $K$, these bounds are also valid for the generalization error $\left|R(A_{k^*}(S_{k^*})) - R_{S_{k^*}}(A_{k^*}(S_{k^*}))\right|$ over the subset $S_{k^*}$.

\subsection{On Excess Risk Bounds}

To understand the optimization performance of a randomized learning algorithm $A$ with confidence-boosting, we further study here the excess risk bounds of Algorithm~\ref{alg:randomized_model_selection} which are of special interest for stochastic convex optimization problems. In the following analysis, the global minimizer of the population risk and in-expectation empirical risk sub-optimality of the randomized algorithm are respectively denoted by $w^*:=\argmin_{w \in \mathcal{W}}R(w) $ and
\begin{equation}\label{equat:Delta_Opt}
\Delta_{\opt} := \mathbb{E}_{A,S} \left[R_S(A(S)) - \min_{w\in \mathcal{W}} R_S(w)\right].
\end{equation}
Throughout this paper , we assume that the sub-optimality $\Delta_{\opt}$ is invariant to the size of training data $S$. In order to derive the excess risk guarantees, we first need to slightly modify the output of Algorithm~\ref{alg:randomized_model_selection} as $A_{k^*}(S_{k^*})$ with
\begin{equation}\label{equat:output_modify_subbagging}
 k^* = \argmin_{k\in [K]} R_{S\setminus S_k }(A_k(S_k)).
\end{equation}
 The following theorem is our main result about the high probability excess risk bounds of such a modified output of confidence-boosting. See Appendix~\ref{apdsect:proof_thrm_stability_excess_risk_highprob_bagging} for its proof.
 
\newpage
 
\begin{theorem}\label{thrm:stability_excess_risk_highprob_bagging}
Suppose that a randomized learning algorithm $A: \mathcal{Z}^N \mapsto \mathcal{W}$ has $L_2$-stability by $\gamma_{\ms,N}$. Assume that the loss function $\ell$ is bounded in $[0,M]$. Then for any $\alpha,\delta \in (0,1)$ and $K \ge\frac{1}{1-\alpha} \log(\frac{4}{\delta})$,  with probability at least $1 - \delta$ over the randomness of $S$ and $\{A_k\}_{k\in [K]}$, the modified output of Algorithm~\ref{alg:randomized_model_selection} given by~\eqref{equat:output_modify_subbagging} satisfies
\[
R(A_{k^*}(S_{k^*})) - R(w^*) \lesssim \frac{1}{\alpha}\left( \gamma_{\ms,\frac{N}{K}} + \Delta_{\opt} \right)  + M\sqrt{\frac{\log(K/\delta)}{N}} .
\]
\end{theorem}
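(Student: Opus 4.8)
The plan is to mirror the proof of Theorem~\ref{thrm:stability_genalization_error_highprob_bagging} but replace the two-sided generalization-gap control by a one-sided validation-error control, and then pay an extra $O(1/\sqrt{N})$ for converting the empirical-risk sub-optimality on a subset into a population-risk sub-optimality. First I would invoke the first moment bound of Lemma~\ref{lemma:stability_genalization_first_order_moment} applied to $A$ run on a subset of size $N/K$: this gives $\mathbb{E}_{A_k,S_k}[|R(A_k(S_k)) - R_{S_k}(A_k(S_k))|] \le 3\gamma_{\ms,\frac{N}{K}} + 2M\sqrt{K/N}$. Since the $K$ subsets are disjoint and the internal randomness is independent, a Markov plus independence argument (exactly as in the unstated Lemma~\ref{thrm:stability_genalization_error_highprob}) shows that with probability at least $1-\delta/4$, at least one index $k_0$ has $|R(A_{k_0}(S_{k_0})) - R_{S_{k_0}}(A_{k_0}(S_{k_0}))|$ bounded by roughly $\frac{1}{\alpha}(3\gamma_{\ms,\frac{N}{K}} + 2M\sqrt{K/N})$, provided $K \ge \frac{1}{1-\alpha}\log(4/\delta)$.

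Next I would bound $R_{S\setminus S_{k_0}}(A_{k_0}(S_{k_0}))$ for that good index. Because $S\setminus S_{k_0}$ is an i.i.d. sample independent of $A_{k_0}(S_{k_0})$ of size $N - N/K \ge N/2$, Hoeffding's inequality (conditionally on $A_{k_0}(S_{k_0})$) gives, with probability at least $1-\delta/4$, that $|R_{S\setminus S_{k_0}}(A_{k_0}(S_{k_0})) - R(A_{k_0}(S_{k_0}))| \lesssim M\sqrt{\log(K/\delta)/N}$; I carry the $\log(K/\delta)$ slack because I will eventually union-bound over all $K$ indices in the model-selection step. Combining with the previous step, the good candidate $k_0$ has small validation error $R_{S\setminus S_{k_0}}(A_{k_0}(S_{k_0}))$, which is at most $R(w^*)$ plus the ERM sub-optimality term plus $O(\gamma_{\ms,\frac{N}{K}}/\alpha + M\sqrt{K/N}/\alpha + M\sqrt{\log(K/\delta)/N})$ — here I use $\mathbb{E}[R_{S_{k_0}}(A_{k_0}(S_{k_0}))] \le \mathbb{E}[\min_w R_{S_{k_0}}(w)] + \Delta_{\opt} \le R(w^*) + \Delta_{\opt}$ together with another first-moment/Markov bound for the fluctuation of $R_{S_{k_0}}(A_{k_0}(S_{k_0}))$ around its mean, absorbed into the same $\frac{1}{\alpha}$ terms.

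Now the selection rule~\eqref{equat:output_modify_subbagging} picks $k^*$ minimizing $R_{S\setminus S_k}(A_k(S_k))$, so $R_{S\setminus S_{k^*}}(A_{k^*}(S_{k^*})) \le R_{S\setminus S_{k_0}}(A_{k_0}(S_{k_0}))$, hence the validation error of the selected model inherits the same upper bound. To convert this back to a population-risk statement, I apply a uniform (over $k\in[K]$) Hoeffding bound: with probability at least $1-\delta/2$, for every $k$, $R(A_k(S_k)) \le R_{S\setminus S_k}(A_k(S_k)) + M\sqrt{2\log(2K/\delta)/(N-N/K)}$, which is $\lesssim M\sqrt{\log(K/\delta)/N}$; this step needs a union bound over the $K$ candidates, but since each $A_k(S_k)$ is independent of its own complement $S\setminus S_k$, each term is a legitimate Hoeffding bound and the union only costs the $\log K$ already present. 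Chaining the three high-probability events (each of probability $\ge 1-\delta/4$ or $\ge 1-\delta/2$, summing to $\ge 1-\delta$) yields $R(A_{k^*}(S_{k^*})) - R(w^*) \lesssim \frac{1}{\alpha}(\gamma_{\ms,\frac{N}{K}} + \Delta_{\opt}) + M\sqrt{\log(K/\delta)/N}$, noting that the $M\sqrt{K/N}/\alpha$ term is dominated by $M\sqrt{\log(K/\delta)/N}$ whenever $K \lesssim \log(1/\delta)$, or can simply be kept and folded into the confidence term.

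The main obstacle I anticipate is the clean handling of the ERM sub-optimality on a \emph{subset}: $\Delta_{\opt}$ is defined via $A$ on a size-$N$ sample, and the theorem implicitly uses the assumed invariance of $\Delta_{\opt}$ to the training-set size to replace it by the sub-optimality on a size-$N/K$ subset. I would make this dependence explicit, verify that $\mathbb{E}[\min_w R_{S_{k_0}}(w)] \le \min_w \mathbb{E}[R_{S_{k_0}}(w)] = R(w^*)$ by Jensen, and check that the fluctuation of $R_{S_{k_0}}(A_{k_0}(S_{k_0}))$ about its mean is controlled by the same $L_2$-stability machinery (via McDiarmid-type or moment arguments already used for Lemma~\ref{lemma:stability_genalization_first_order_moment}) rather than requiring a fresh assumption. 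Everything else is a routine repackaging of Hoeffding, Markov, and the first-moment lemma.
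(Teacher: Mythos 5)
Your overall architecture matches the paper's: (i) use Markov plus independence across the $K$ disjoint runs to show that at least one candidate has small excess risk with probability $1-\alpha^K\ge 1-\delta/2$; (ii) use Hoeffding plus a union bound over $k$ to show that every validation error $R_{S\setminus S_k}(A_k(S_k))$ concentrates around $R(A_k(S_k))$ at rate $M\sqrt{\log(K/\delta)/N}$, which is legitimate because each $A_k(S_k)$ is independent of its own complement $S\setminus S_k$; (iii) chain through the selection rule $k^*=\argmin_k R_{S\setminus S_k}(A_k(S_k))$. Steps (ii) and (iii) are exactly the paper's event $\mathcal{E}_1$ and its final chaining inequality.

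Where you diverge, and where the one genuine gap sits, is step (i). The paper proves a dedicated in-expectation lemma, $\mathbb{E}_{A,\tilde S}\left[R(A(\tilde S)) - R(w^*)\right] \le \gamma_{\ms,N/K} + \Delta_{\opt}$, via the three-term risk decomposition in which $\mathbb{E}[R - R_{\tilde S}]$ is bounded by $\gamma_{\ms,N/K}$ \emph{without} absolute values and $\mathbb{E}[R_{\tilde S}(w^*) - R(w^*)]=0$ exactly; it then applies Markov once to the single non-negative variable $R(A_k(S_k))-R(w^*)$. You instead route through Lemma~\ref{lemma:stability_genalization_first_order_moment} (the absolute first-moment bound, which is overkill here and imports an unnecessary $M\sqrt{K/N}/\alpha$ term) and then propose to handle the empirical sub-optimality and ``the fluctuation of $R_{S_{k_0}}(A_{k_0}(S_{k_0}))$ around its mean'' by ``another first-moment/Markov bound \dots absorbed into the same $1/\alpha$ terms.'' Two problems. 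First, separate Markov applications each certify their own good index, not necessarily the same $k_0$; you must bundle all per-candidate requirements into one non-negative variable, e.g.\ $Y_k = |R(A_k(S_k)) - R_{S_k}(A_k(S_k))| + \left(R_{S_k}(A_k(S_k)) - \min_{w}R_{S_k}(w)\right) + |R_{S_k}(w^*) - R(w^*)|$, and apply Markov to $Y_k$ once per index before invoking independence across $k$. Second, the quantity you need is not the deviation of $R_{S_{k_0}}(A_{k_0}(S_{k_0}))$ from its mean (for which no first-moment bound is readily available from the stated assumptions); decompose instead through $\min_w R_{S_k}(w) \le R_{S_k}(w^*)$ and use $\mathbb{E}\left[|R_{S_k}(w^*) - R(w^*)|\right] \le M\sqrt{K/N}$. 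With these repairs your argument closes and recovers the stated bound; your closing remark that $\Delta_{\opt}$ must be treated as invariant to the training-set size is precisely the assumption the paper makes explicitly after~\eqref{equat:Delta_Opt}.
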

\begin{remark}
Consider $K \asymp \log\left(\frac{1}{\delta}\right)$ and $\gamma_{\ms,N}\lesssim \frac{1}{\sqrt{N}}$. Then the bound in Theorem~\ref{thrm:stability_excess_risk_highprob_bagging} is roughly $\mathcal{O}\big(\sqrt{\frac{\log(1/\delta)}{N}}+\Delta_{\opt}\big)$. Compared with the generalization error bound in Theorem~\ref{thrm:stability_genalization_error_highprob_bagging}, the excess risk bound established in Theorem~\ref{thrm:stability_excess_risk_highprob_bagging} has an additional term of in-expectation optimization error for minimizing the empirical risk. For deterministic optimization algorithms such as ERMs with $\Delta_{\opt}=0$, similar excess risk bounds can be implied by the generic results of~\citet[Theorem 26]{shalev2010learnability} developed for the confidence-boosting approach.
\end{remark}
Finally, we comment on the difference between the generalization error and excess risk analysis inside the considered confidence-boosting framework. Since the excess risk is always non-negative and its in-expectation bound more or less standardly known under $L_2$-stability , the high-confidence bound in Theorem~\ref{thrm:stability_excess_risk_highprob_bagging} can be easily derived via invoking Markov inequality to the independent runs of algorithm over $K$ disjoint subsets. The generalization error analysis, however, is significantly more challenging in the sense that establishing tight in-expectation first moment generalization bound (see Lemma~\ref{lemma:stability_genalization_first_order_moment}) for $L_2$-stable randomized algorithms is by itself non-trivial.

\section{Implications for SGD}
\label{sect:implication_sgd}

In this section we demonstrate the applications of the generic bounds in Theorem~\ref{thrm:stability_genalization_error_highprob_bagging} and Theorem~\ref{thrm:stability_excess_risk_highprob_bagging} to the widely used SGD algorithm. We focus on a variant of SGD under with-replacement sampling as outlined in Algorithm~\ref{alg:sgd_w}, which we call $A_{\sgdw}$. In what follows, we denote by $\{A_{\sgdw,k}\}_{k\in K}$ the outputs of $A_{\sgdw}$ over subsets $\{S_k\}_{k\in K}$ when implemented with Algorithm~\ref{alg:randomized_model_selection}. Our results readily extend to the without-replacement variant of SGD and the corresponding results can be found in Appendix~\ref{apdsect:results_for_SGD_wo}.

\begin{algorithm}[t]\caption{$A_{\sgdw}$: SGD under With-Replacement Sampling}
\label{alg:sgd_w}
\SetKwInOut{Input}{Input}\SetKwInOut{Output}{Output}\SetKw{Initialization}{Initialization}
\Input{Data set $S=\{Z_i\}_{i\in [N]} \overset{\text{i.i.d.}}{\sim} \mathcal{D}^N$, step-sizes $\{\eta_t\}_{t\ge 1}$, \#iterations $T$, initialization $w_0$.}
\Output{$\bar w_{T}=\frac{1}{T}\sum_{t\in [T]}w_t$.}

\For{$t=1, 2, ...,T$}{

Uniformly randomly sample an index $\xi_t\in [N]$ with replacement;

Compute $w_t = \Pi_{\mathcal{W}}\left( w_{t-1} - \eta_t \nabla_w \ell(w_{t-1}; Z_{\xi_t})\right)$.
}
\end{algorithm}

\newpage

\subsection{Convex Optimization with Smooth Loss}

For smooth and convex losses such as logistic loss, we can derive the following result as a direct consequence of Theorem~\ref{thrm:stability_genalization_error_highprob_bagging} with $\alpha=1/2$ when applied to $A_{\sgdw}$. See Appendix~\ref{apdsect:proof_lemma:uniform_stability_sgd_w_sm_cvx} for its proof.
\begin{corollary}\label{corol:stability_sgd_w_convex_smooth}
Suppose that the loss function is $\ell(\cdot; \cdot)$ is convex, $G$-Lipschitz and $L$-smooth with respect to its first argument, and is bounded in $[0,M]$. Consider Algorithm~\ref{alg:randomized_model_selection} specified to $A_{\sgdw}$ with learning rate $\eta_t\le 2/L$ for $t\ge 1$. Then for any $\delta \in (0,1)$ and $K \asymp\log(\frac{4}{\delta}) $, with probability at least $1 - \delta$ over the randomness of $S$ and $\{A_{\sgdw,k}\}_{k\in [K]}$, the generalization error is upper bounded as
\[
\left|R(A_{\sgdw,k^*}(S_{k^*})) - R_S(A_{\sgdw,k^*}(S_{k^*}))\right| \lesssim G^2\sqrt{\frac{1}{N}\left(\sum_{t=1}^T \eta^2_t + \frac{1}{N} \left(\sum_{t=1}^T \eta_t\right)^2\right)} + M\sqrt{\frac{\log(1/\delta)}{N}}.
\]
\end{corollary}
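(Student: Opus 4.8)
The plan is to invoke Theorem~\ref{thrm:stability_genalization_error_highprob_bagging} with $\alpha = 1/2$, which reduces the task to bounding the $L_2$-stability parameter $\gamma_{\ms, N/K}$ of $A_{\sgdw}$ run on a subset of size $N/K$, and then absorbing the various $K \asymp \log(1/\delta)$ factors. Concretely, Theorem~\ref{thrm:stability_genalization_error_highprob_bagging} gives a bound of order $\frac{1}{K}(\gamma_{\ms, N/K} + M\sqrt{K/N}) + M\sqrt{\log(K/\delta)/N}$, so if I can show $\gamma_{\ms, n} \lesssim G^2 \sqrt{\frac{1}{n}(\sum_t \eta_t^2 + \frac{1}{n}(\sum_t \eta_t)^2)}$ for $A_{\sgdw}$ on $n$ points (convex, $G$-Lipschitz, $L$-smooth, $\eta_t \le 2/L$), then plugging $n = N/K$ and using $K \asymp \log(1/\delta)$ should collapse the stability term's $1/K$ prefactor against the $\sqrt{K}$ growth inside $\gamma_{\ms, N/K}$ (since $\gamma$ scales like $1/\sqrt{n} = \sqrt{K/N}$ up to the step-size-dependent factors), yielding the stated form. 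I should be slightly careful here: replacing $n = N/K$ inside $\sum_t \eta_t^2$ changes nothing (the sum over $t \in [T]$ is unaffected by subset size), but the factor $\frac{1}{n}$ becomes $\frac{K}{N}$, so the stability term becomes $\frac{1}{K} G^2 \sqrt{\frac{K}{N}\sum \eta_t^2 + \frac{K^2}{N^2}(\sum\eta_t)^2} = G^2\sqrt{\frac{1}{KN}\sum\eta_t^2 + \frac{1}{N^2}(\sum\eta_t)^2}$, which for $K \ge 1$ is bounded by $G^2\sqrt{\frac{1}{N}(\sum\eta_t^2 + \frac{1}{N}(\sum\eta_t)^2)}$; likewise $\frac{1}{K}M\sqrt{K/N} \le M\sqrt{\log(1/\delta)/N}$ is harmless and $\log(K/\delta) \lesssim \log(1/\delta)$ when $K\asymp\log(1/\delta)$ (with a mild absolute-constant caveat). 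So the bookkeeping works out cleanly.

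The substantive step, then, is the $L_2$-stability estimate for $A_{\sgdw}$ on smooth convex losses. The standard route (following the co-coercivity / nonexpansiveness arguments of \citet{hardt2016train}) is to run two coupled copies of the SGD recursion on $S$ and $S^{(i)}$ with shared random indices $\{\xi_t\}$ and shared initialization, and track $\delta_t := \|w_t - w_t'\|$. At each step, conditioned on $\xi_t$: if $\xi_t \ne i$ (probability $1 - 1/n$) the two updates apply the same gradient map $w \mapsto \Pi_{\mathcal{W}}(w - \eta_t \nabla\ell(w; Z_{\xi_t}))$, which is nonexpansive for convex $L$-smooth losses with $\eta_t \le 2/L$ (projection is nonexpansive and the gradient step is nonexpansive by co-coercivity), so $\delta_t \le \delta_{t-1}$; if $\xi_t = i$ (probability $1/n$) the two gradients differ, and the triangle inequality plus $G$-Lipschitzness gives $\delta_t \le \delta_{t-1} + 2\eta_t G$. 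Taking expectations over $\xi_t$ yields a recursion $\mathbb{E}[\delta_t] \le \mathbb{E}[\delta_{t-1}] + \frac{2\eta_t G}{n}$, hence $\mathbb{E}[\delta_T] \le \frac{2G}{n}\sum_t \eta_t$. For the $L_2$ (second-moment) control one instead tracks $\mathbb{E}[\delta_t^2]$: on the good event $\delta_t^2 \le \delta_{t-1}^2$, on the bad event $\delta_t^2 \le (\delta_{t-1} + 2\eta_t G)^2 = \delta_{t-1}^2 + 4\eta_t G \delta_{t-1} + 4\eta_t^2 G^2$, so $\mathbb{E}[\delta_t^2] \le \mathbb{E}[\delta_{t-1}^2] + \frac{4\eta_t G}{n}\mathbb{E}[\delta_{t-1}] + \frac{4\eta_t^2 G^2}{n}$; feeding in the already-established bound $\mathbb{E}[\delta_{t-1}] \le \frac{2G}{n}\sum_{s<t}\eta_s$ and summing gives $\mathbb{E}[\delta_T^2] \lesssim \frac{G^2}{n}\sum_t\eta_t^2 + \frac{G^2}{n^2}(\sum_t\eta_t)^2$. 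Finally, since the output is the average $\bar w_T = \frac{1}{T}\sum_t w_t$, convexity of $\|\cdot\|^2$ (or just the triangle inequality) gives $\|\bar w_T - \bar w_T'\|^2 \le \frac{1}{T}\sum_t \delta_t^2$, and monotonicity of the per-step bound (each $\mathbb{E}[\delta_t^2]$ is dominated by the $T$-version) keeps the same order. Then $G$-Lipschitzness of $\ell$ converts the parameter distance into the loss distance: $\|\ell(\bar w_T; Z) - \ell(\bar w_T'; Z)\|_2 \le G \|\bar w_T - \bar w_T'\|_2 \lesssim G^2\sqrt{\frac{1}{n}\sum_t\eta_t^2 + \frac{1}{n^2}(\sum_t\eta_t)^2}$, which is exactly the $\gamma_{\ms, n}$ I need. (The expectation in Definition~\ref{def:mean_uniform_stability} also averages over $Z \sim \mathcal{D}$ and over $S, S^{(i)}$, but the bound above is already uniform in those, so it passes through unchanged; one also needs that shared random bits $= $ shared $\{\xi_t\}$, which is the coupling just used.)

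The main obstacle, as I see it, is not any single estimate but making sure the second-moment recursion is carried through with the cross term $\mathbb{E}[\eta_t G \delta_{t-1}]$ handled correctly rather than crudely — using the first-moment bound on $\mathbb{E}[\delta_{t-1}]$ there is what produces the honest $(\sum\eta_t)^2/n^2$ term (as opposed to a lossy $(\sum\eta_t^2)/n$ type bound), and getting the constants and the $1/n$ versus $1/n^2$ split right is the crux of matching the stated corollary. A secondary point to be careful about is that $A_{\sgdw}$ uses with-replacement sampling, so the "shared index" coupling genuinely decouples the two runs only through the single point $Z_i$ vs $Z_i'$; the event $\{\xi_t = i\}$ has probability exactly $1/n$ at every step independently, which is what makes the expectation step clean — this would be more delicate for the without-replacement variant (handled separately in Appendix~\ref{apdsect:results_for_SGD_wo}). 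Everything else — nonexpansiveness of the projected gradient step under $\eta_t \le 2/L$, the Lipschitz conversion, and the averaging via convexity — is routine and can be cited or stated in one line each.
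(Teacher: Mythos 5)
Your proposal is correct and follows the paper's route: the corollary is obtained by feeding an $L_2$-stability estimate for $A_{\sgdw}$ into Theorem~\ref{thrm:stability_genalization_error_highprob_bagging} with $\alpha=1/2$, and your stability estimate uses the same coupling of two SGD trajectories with shared sample path, nonexpansiveness (via Lemma~\ref{lemma:non_expansion_convex}) on the agreement event and a $2G\eta_t$ perturbation on the disagreement event. The one place you genuinely diverge is in how the second-moment recursion is closed. The paper keeps the recursion path-wise and deterministic, writing $\|w_t-w'_t\|^2 \le \sum_{\tau\le t} 4G\beta_\tau\eta_\tau\|w_{\tau-1}-w'_{\tau-1}\| + \sum_{\tau\le t}4G^2\beta_\tau\eta^2_\tau$, resolving it with the auxiliary induction Lemma~\ref{lemma:key_sequence_bound}, and only then taking expectations of the Bernoulli products ($\mathbb{E}[\beta_\tau\beta_{\tau'}]=1/N^2$ for $\tau\neq\tau'$). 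You instead take conditional expectations step by step and substitute the first-moment bound $\mathbb{E}[\delta_{t-1}]\le \frac{2G}{n}\sum_{s<t}\eta_s$ into the cross term; this is valid because under with-replacement sampling the event $\{\xi_t=i\}$ has probability $1/n$ independently of the past, and it lands on the same $\frac{1}{n}\sum_t\eta^2_t + \frac{1}{n^2}\left(\sum_t\eta_t\right)^2$ order (with slightly better constants) while avoiding the auxiliary lemma entirely. Your bookkeeping of the $K\asymp\log(1/\delta)$ factors when passing from $\gamma_{\ms,N/K}$ to the stated bound is more explicit than the paper's (which simply asserts the corollary follows from the theorem) and is correct, modulo the $\log(1/\delta)\gtrsim 1$ caveat you already flag.
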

\begin{remark}
For the conventional step-size choice of $\eta_t = \frac{2}{L\sqrt{t}}$, the high probability generalization bound in Corollary~\ref{corol:stability_sgd_w_convex_smooth} is of scale $
\mathcal{O}\big(\sqrt{\frac{\log(T)}{N}} + \frac{\sqrt{T}+\sqrt{N\log(1/\delta)}}{N}\big)$, which matches the corresponding $\mathcal{O}(\frac{\sqrt{T}}{N})$ in-expectation bound for SGD with smooth and convex losses~\citep{hardt2016train}.
\end{remark}
Combining with the standard in-expectation optimization error bound of convex SGD~\citep[see, e.g.,][]{shamir2013stochastic}, we can show the following excess risk bound as a direct consequence of Theorem~\ref{thrm:stability_excess_risk_highprob_bagging} to $A_{\sgdw}$ with convex and smooth losses:
\[
\begin{aligned}
&R(A_{\sgdw, k^*}(S_{k^*})) - R(w^*) \\
\lesssim& G^2\sqrt{\frac{1}{N}\left(\sum_{t=1}^T \eta^2_t + \frac{1}{N} \left(\sum_{t=1}^T \eta_t\right)^2\right)} + M\sqrt{\frac{\log(1/\delta)}{N}} + \frac{\|w_0 - w^*\|^2+G^2\sum_{t=1}^T \eta_t^2}{\sum_{t=1}^T\eta_t}.
\end{aligned}
\]
With learning rate $\eta_t = \frac{2}{L\sqrt{t}}$, the right hand side of the above scales as $\mathcal{O}\big(\sqrt{\frac{\log(T)}{N}} + \frac{\sqrt{T}+\sqrt{N\log(1/\delta)}}{N} + \frac{\log(T)}{\sqrt{T}}\big)$, which matches the high-probability excess risk bonds of convex SGD by~\citet[Remark 3.7]{harvey2019tight}.

\subsection{Convex Optimization with Non-smooth Loss}

Now we turn to study the case where the loss is convex but not necessarily smooth, such as the hinge loss and absolute loss. The following result as a direct consequence of Theorem~\ref{thrm:stability_genalization_error_highprob_bagging} for the specification of Algorithm~\ref{alg:randomized_model_selection} to $A_{\sgdw}$ with non-smooth convex loss and time varying learning rate $\{\eta_t\}_{t\ge 1}$. Its proof is provided in Appendix~\ref{apdsect:proof_lemma:uniform_stability_sgd_w_nsm_cvx}.
\begin{corollary}\label{corol:stability_sgd_w_nsm_cvx}
Suppose that the loss function is $\ell(\cdot; \cdot)$ is convex and $G$-Lipschitz with respect to its first argument, and is bounded in the range of $[0,M]$. Consider Algorithm~\ref{alg:randomized_model_selection} specified to $A_{\sgdw}$. Then for any $\delta \in (0,1)$ and $K \asymp \log(\frac{4}{\delta}) $,  with probability at least $1 - \delta$ over the randomness of $S$ and $\{A_{\sgdw,k}\}_{k\in [K]}$,  the generalization error satisfies
\[
\left|R(A_{\sgdw,k^*}(S_{k^*})) - R_S(A_{\sgdw,k^*}(S_{k^*}))\right| \lesssim G^2\sqrt{\sum_{t=1}^{T}\eta^2_t + \frac{1}{N^2} \left(\sum_{t=1}^T \eta_t\right)^2} + M\sqrt{\frac{\log(1/\delta)}{N}}.
\]
\end{corollary}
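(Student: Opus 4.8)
The plan is to obtain the corollary as a direct application of Theorem~\ref{thrm:stability_genalization_error_highprob_bagging} with $\alpha=1/2$ and $K\ge 2\log(4/\delta)$ (which is consistent with $K\asymp\log(4/\delta)$), so that the only substantive work is to pin down the $L_2$-stability parameter $\gamma_{\ms,n}$ of $A_{\sgdw}$ run on a data set of size $n=N/K$ under a convex, $G$-Lipschitz, possibly non-smooth loss. Concretely, I would prove
\[
\gamma_{\ms,n}\;\lesssim\;G^2\sqrt{\sum_{t=1}^T\eta_t^2+\frac{1}{n^2}\Big(\sum_{t=1}^T\eta_t\Big)^2}\,,
\]
after which substituting $n=N/K$, using $K\ge 1$ to absorb the $1/(\alpha K)$ prefactor, bounding $\frac{1}{\alpha K}M\sqrt{K/N}\le M/\sqrt N\lesssim M\sqrt{\log(K/\delta)/N}$, and simplifying $\log(K/\delta)\asymp\log(1/\delta)$ gives exactly the stated bound. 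This last bookkeeping is routine.

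To establish the stability bound I would fix $i\in[n]$ and couple the two runs of $A_{\sgdw}$ on $S$ and on $S^{(i)}$ by feeding them the same random index sequence $\{\xi_t\}_{t\in[T]}$ and the same initialization $w_0$; denote the resulting iterates $w_t,w_t^{(i)}$ and set $\delta_t:=\|w_t-w_t^{(i)}\|$. Since $\ell(\cdot;z)$ is $G$-Lipschitz uniformly in $z$, we have $|\ell(A_{\sgdw}(S);Z)-\ell(A_{\sgdw}(S^{(i)});Z)|\le G\|\bar w_T-\bar w_T^{(i)}\|$, and by Jensen's inequality $\|\bar w_T-\bar w_T^{(i)}\|^2\le\frac1T\sum_{t=1}^T\delta_t^2$; hence it suffices to bound $\Delta_t:=\mathbb{E}[\delta_t^2]$ uniformly in $t\le T$, with the expectation over $\{\xi_t\}$ and over $S,S^{(i)}$.

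The heart of the argument is a one-step recursion for $\Delta_t$. With probability $1-1/n$ the index $\xi_t\ne i$, so both runs take a subgradient step on the \emph{same} loss $\ell(\cdot;Z_{\xi_t})$; monotonicity of the subdifferential of a convex function gives $\langle g-g',w_{t-1}-w_{t-1}^{(i)}\rangle\ge 0$ for the chosen subgradients $g,g'$, and together with $\|g\|,\|g'\|\le G$ and the non-expansiveness of $\Pi_{\mathcal W}$ this yields $\delta_t^2\le\delta_{t-1}^2+4\eta_t^2G^2$. With probability $1/n$ we have $\xi_t=i$, the two updates use different losses, and the crude triangle-inequality bound $\delta_t\le\delta_{t-1}+2\eta_tG$ gives $\delta_t^2\le\delta_{t-1}^2+4\eta_tG\,\delta_{t-1}+4\eta_t^2G^2$. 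Averaging over $\xi_t$, taking full expectation, and using $\mathbb{E}[\delta_{t-1}]\le\sqrt{\Delta_{t-1}}$ gives $\Delta_t\le\Delta_{t-1}+4\eta_t^2G^2+\frac{4\eta_tG}{n}\sqrt{\Delta_{t-1}}$ with $\Delta_0=0$. To close this I would set $D:=\sqrt{\max_{t\le T}\Delta_t}$, unroll the recursion and bound $\sqrt{\Delta_{t-1}}\le D$ to get the quadratic inequality $D^2\le 4G^2\sum_t\eta_t^2+\frac{4G}{n}\big(\sum_t\eta_t\big)D$, solve it to obtain $D\lesssim G\sqrt{\sum_t\eta_t^2}+\frac{G}{n}\sum_t\eta_t$, and hence $\frac1T\sum_t\Delta_t\le D^2\lesssim G^2\big(\sum_t\eta_t^2+\frac1{n^2}(\sum_t\eta_t)^2\big)$; combined with the Lipschitz reduction this is exactly $\gamma_{\ms,n}^2\le G^2\cdot\frac1T\sum_t\Delta_t\lesssim G^4\big(\sum_t\eta_t^2+\frac1{n^2}(\sum_t\eta_t)^2\big)$.

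The hard part will be this last analytic block. Unlike the smooth convex case (Corollary~\ref{corol:stability_sgd_w_convex_smooth}), a subgradient step on a non-smooth convex loss is not genuinely non-expansive --- only non-expansive in the squared norm up to an additive $4\eta_t^2G^2$ --- so one cannot propagate a clean $\mathcal O(1/n)$ bound on the trajectory divergence, and I expect the delicate point to be solving the quadratic recursion carefully enough to extract the sharp $1/n^2$ (rather than a lossy $1/n$) dependence on $\big(\sum_t\eta_t\big)^2$; it is precisely this degradation that makes the non-smooth bound here weaker than its smooth counterpart.
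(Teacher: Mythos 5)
Your proof is correct and follows the same overall architecture as the paper's: reduce to Theorem~\ref{thrm:stability_genalization_error_highprob_bagging} with $\alpha=1/2$, and devote the real work to showing that $A_{\sgdw}$ on a convex $G$-Lipschitz loss has $L_2$-stability of order $G^2\sqrt{\sum_t\eta_t^2+\frac{1}{n^2}(\sum_t\eta_t)^2}$ on a sample of size $n$. The one genuine difference is in how that stability bound is obtained. The paper imports the pathwise trajectory-deviation bound of \citet[Lemma 3.1]{bassily2020stability} (Lemma~\ref{lemma:expansion_convex_nonsm}), squares it, and computes $\mathbb{E}[(\sum_\tau\beta_\tau\eta_\tau)^2]$ exactly from the i.i.d.\ $\texttt{Bernoulli}(1/n)$ structure of the indicators. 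You instead run a recursion directly on $\Delta_t=\mathbb{E}[\|w_t-w_t^{(i)}\|^2]$ — using monotonicity of the subdifferential to get $\delta_t^2\le\delta_{t-1}^2+4\eta_t^2G^2$ on same-loss steps and the crude triangle bound on the probability-$1/n$ differing step — and close it via a self-bounding quadratic inequality; this last step is exactly what the paper's Lemma~\ref{lemma:key_sequence_bound} is designed for, so you could cite it instead of the $D=\sqrt{\max_t\Delta_t}$ device. Both routes rest on the identical two mechanisms (squared non-expansiveness up to $O(\eta_t^2G^2)$ from monotone subgradients, plus a $1/n$-probability linear drift), and both yield the $1/n^2$ coefficient on $(\sum_t\eta_t)^2$ because the cross terms $\mathbb{E}[\beta_\tau\beta_{\tau'}]$ (paper) or the $\frac{1}{n}\sqrt{\Delta_{t-1}}$ drift (yours) each carry a factor $1/n$; your version is self-contained at the cost of slightly looser constants, and like the paper it accumulates $\sum_{t=1}^T\eta_t^2$ over all steps rather than only from the first differing index, which is harmless here. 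The final bookkeeping (substituting $n=N/K$ with $K\asymp\log(1/\delta)$ and absorbing $K$ into $\lesssim$) is exactly as loose as in the paper's own statement, so no complaint there.
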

\begin{remark}
For constant learning rates $\eta_t\equiv \eta$, Corollary~\ref{corol:stability_sgd_w_nsm_cvx} admits a high-probability generalization bound of scale $\mathcal{O}\big(\eta\sqrt{T}+\eta\frac{T}{N} + \sqrt{\frac{\log(1/\delta)}{N}}\big)$ which matches the near-optimal rate by~\citet[Theorem 3.3]{bassily2020stability}.
\end{remark}

\subsection{Non-convex Optimization with Smooth Loss}

We further study the performance of Algorithm~\ref{alg:randomized_model_selection} for SGD on smooth but not necessarily convex loss functions, such as normalized sigmoid loss~\citep{mason1999boosting}. The following result is a direct application of Theorem~\ref{thrm:stability_genalization_error_highprob_bagging} to $A_{\sgdw}$ with smooth non-convex loss. See Appendix~\ref{apdsect:proof_lemma:uniform_stability_sgd_w_sm_ncvx} for its proof.
\begin{corollary}\label{corol:stability_sgd_w_sm_ncvx}
Suppose that the loss function is $\ell(\cdot; \cdot)$ is $G$-Lipschitz and $L$-smooth with respect to its first argument, and is bounded in $[0,M]$. Consider Algorithm~\ref{alg:randomized_model_selection} specified to $A_{\sgdw}$ with $\eta_t\le \frac{1}{L}$. Let $u_t:=\eta^2_t + 2\eta_t\sum_{\tau =1}^{t-1} \exp(L\sum_{i=\tau+1}^{t-1} \eta_i ) \eta_\tau$ for all $t\ge 1$. Then for any $\delta \in (0,1)$ and $K \asymp\log(\frac{4}{\delta}) $,  with probability at least $1 - \delta$ over the randomness of $S$ and $\{A_{\sgdw,k}\}_{k\in [K]}$, the generalization error is upper bounded as
\[
\left|R(A_{\sgdw,k^*}(S_{k^*})) - R_S(A_{\sgdw,k^*}(S_{k^*}))\right| \lesssim G^2\sqrt{\frac{1}{N} \sum_{t=1}^T \exp\left(L\sum_{\tau=t+1}^T \eta_\tau \right) u_t} + M\sqrt{\frac{\log(1/\delta)}{N}}.
\]
\end{corollary}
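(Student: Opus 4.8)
The plan is to obtain Corollary~\ref{corol:stability_sgd_w_sm_ncvx} as a direct instantiation of Theorem~\ref{thrm:stability_genalization_error_highprob_bagging} with the choices $\alpha=1/2$ and $K\asymp\log(\frac{4}{\delta})$; the only genuinely new ingredient is an upper bound on the $L_2$-stability parameter $\gamma_{\ms,n}$ of $A_{\sgdw}$ (run for $T$ steps with $\eta_t\le 1/L$) on a training set of size $n$. Concretely, I expect to prove $\gamma_{\ms,n}\lesssim G^2\sqrt{\frac{1}{n}\sum_{t=1}^T\exp(L\sum_{\tau=t+1}^T\eta_\tau)\,u_t}$. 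Given this, substituting $n=N/K$, using $\frac{1}{\alpha K}\asymp\frac{1}{\log(1/\delta)}$ together with the $\sqrt{1/n}$ scaling of $\gamma_{\ms,n}$, and absorbing the lower-order terms $M\sqrt{1/(NK)}$ and the $\log\log(1/\delta)$ inside $\log(K/\delta)$ into $M\sqrt{\log(K/\delta)/N}$, reproduces exactly the displayed bound.

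To bound $\gamma_{\ms,n}$, I would run $A_{\sgdw}$ on $S$ and on $S^{(i)}$ using the \emph{same} sequence of sampled indices $\xi_1,\dots,\xi_T$ (the shared-random-bits coupling demanded by Definition~\ref{def:mean_uniform_stability}) and track the divergence $\delta_t:=\|w_t-w_t^{(i)}\|$ of the two iterate sequences. The projection $\Pi_{\mathcal W}$ is non-expansive, so it never increases $\delta_t$. When $\xi_t\neq i$ both runs apply the gradient map $w\mapsto w-\eta_t\nabla_w\ell(w;Z_{\xi_t})$ of the \emph{same} function, which by $L$-smoothness is $(1+\eta_tL)$-Lipschitz, so $\delta_t\le(1+\eta_tL)\delta_{t-1}$. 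On the probability-$1/n$ event $\xi_t=i$ the two runs use the differing points $Z_i$ and $Z'_i$, and $G$-Lipschitzness of $\ell$ (hence $\|\nabla_w\ell\|\le G$) gives the additive bound $\delta_t\le\delta_{t-1}+2\eta_tG$. Taking the conditional expectation over $\xi_t$ and combining the two cases with weights $1-\frac1n$ and $\frac1n$ yields the first-moment recursion $\mathbb{E}[\delta_t]\le(1+\eta_tL)\mathbb{E}[\delta_{t-1}]+\frac{2\eta_tG}{n}$ and, using $(1-\frac1n)(1+\eta_tL)^2+\frac1n\le(1+\eta_tL)^2$, the second-moment recursion $\mathbb{E}[\delta_t^2]\le(1+\eta_tL)^2\mathbb{E}[\delta_{t-1}^2]+\frac{4\eta_tG}{n}\mathbb{E}[\delta_{t-1}]+\frac{4\eta_t^2G^2}{n}$.

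Next I would solve these recursions. Unrolling the first gives $\mathbb{E}[\delta_t]\le\frac{2G}{n}\sum_{\tau\le t}\eta_\tau\prod_{s=\tau+1}^t(1+\eta_sL)$; feeding this into the second recursion, unrolling again, and bounding each product $\prod(1+\eta_sL)\le\exp(L\sum\eta_s)$ (valid since $\eta_s\le 1/L$, which also keeps every factor bounded) collects the per-step contribution into precisely the quantity $u_t=\eta_t^2+2\eta_t\sum_{\tau=1}^{t-1}\exp(L\sum_{i=\tau+1}^{t-1}\eta_i)\eta_\tau$ and gives $\mathbb{E}[\delta_T^2]\lesssim\frac{G^2}{n}\sum_{t=1}^T\exp(L\sum_{\tau=t+1}^T\eta_\tau)u_t$. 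Since the output is the average $\bar w_T=\frac1T\sum_t w_t$, its divergence is at most $\frac1T\sum_t\delta_t$, controlled by the same quantity, and $G$-Lipschitzness of $\ell$ turns this into $\gamma_{\ms,n}=\|\ell(A(S);Z)-\ell(A(S^{(i)});Z)\|_2\le G\,\|\bar w_T-\bar w_T^{(i)}\|_2\lesssim G^2\sqrt{\frac1n\sum_t\exp(L\sum_{\tau>t}\eta_\tau)u_t}$. Plugging this into Theorem~\ref{thrm:stability_genalization_error_highprob_bagging} and doing the arithmetic described above closes the argument.

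The main obstacle is the second-moment (i.e. $L_2$, as opposed to merely first-moment) control of the coupled divergence in the genuinely non-convex regime: the gradient map is expansive, so $\delta_t$ may grow like $\exp(L\sum\eta_\tau)$, and the second-moment recursion is not self-contained—it is driven by $\mathbb{E}[\delta_{t-1}]$, which forces the two-level unrolling and the careful bookkeeping of expansion factors that produces $u_t$. It is exactly here that the confidence-boosting route pays off: we only ever need a second-moment stability estimate, never a pathwise or high-probability one, so the argument still goes through even though $\gamma_{\ms,n}$—and the analogous uniform-stability-over-random-bits quantity $\rho_{u,T}$ used in~\eqref{inequat:elisseeff_2005}—can be as large as $\Omega(1)$ for time-decaying learning rates.
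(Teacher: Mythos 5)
Your proposal is correct and follows essentially the same route as the paper: the paper's Appendix proof of Lemma~\ref{lemma:uniform_stability_sgd_w_sm_ncvx} derives exactly your two coupled recursions for $\Delta_t=\mathbb{E}[\|w_t-w'_t\|]$ and $\tilde\Delta_t=\mathbb{E}[\|w_t-w'_t\|^2]$ under the shared-index coupling, unrolls the first into the second to produce $u_t$, bounds the expansion products by exponentials, averages via convexity of the squared norm, and then invokes Theorem~\ref{thrm:stability_genalization_error_highprob_bagging} with $\alpha=1/2$. The only cosmetic difference is in the constant inside the exponent ($\exp(3L\sum\eta_\tau)$ in the paper's lemma versus your $\exp(2L\sum\eta_\tau)$ from $(1+\eta_tL)^2\le e^{2\eta_tL}$), which is immaterial to the stated bound.
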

\begin{remark}
For the constant learning rates $\eta_t\equiv \frac{1}{LT}$, Corollary~\ref{corol:stability_sgd_w_sm_ncvx} admits high-probability generalization bound of scale $\mathcal{O}\big(\sqrt{\frac{\log(1/\delta)}{N}}\big)$. For time decaying learning rates $\eta_t= \frac{1}{L\nu t}$ with arbitrary $\nu\ge 1$, it can be verified that the corresponding bound is of scale $\mathcal{O}\big(\sqrt{\frac{T^{1/\nu}\log(T)}{\nu N}}+\sqrt{\frac{\log(1/\delta)}{N}}\big)$.
\end{remark}

\section{Other Related Work }

The idea of using stability of a learning algorithm, namely the sensitivity of estimated model to the changes in training data, for generalization performance analysis dates back to the seventies~\citep{vapnik74theory,rogers1978finite,devroye1979distribution}. For deterministic learning algorithms, algorithmic stability has been extensively studied with a bunch of applications to establishing strong generalization and excess risk bounds for stable learning models like $k$-NN and regularized ERMs~\citep{bousquet2002stability,zhang2003leave,klochkov2021stability}. The stability theory for randomized learning algorithms was formally introduced and investigated by~\citet{elisseeff2005stability}. In a recent breakthrough work~\citep{hardt2016train}, it was shown in that the solution obtained via stochastic gradient descent is expected to be stable and generalize well for smooth convex and non-convex loss functions. For non-smooth convex losses, the stability induced generalization bounds of SGD have been established in expectation~\citep{lei2020fine} or deviation~\citep{bassily2020stability}. In the work of~\citet{kuzborskij2018data}, a set of data-dependent generalization bounds for SGD were derived based on the stability of algorithm. More broadly, generalization bounds for stable learning algorithms that converge to global minima were established in~\citet{charles2018stability,lei2021sharper}.
For non-convex sparse learning, algorithmic stability theory has been applied to derive the generalization bounds of the popularly used iterative hard thresholding (IHT) algorithm~\citep{yuan2021stability}. The uniform stability bounds on SGD have also been extensively used for designing differential privacy stochastic optimization algorithms~\citep{bassily2019private,feldman2020private}.

Bagging (or bootstrap aggregating) is one of the earliest yet most popular ensemble methods that has been widely applied to reduce the variance for unstable learning algorithms such as decision tree and neural networks~\citep{breiman1996bagging,opitz1999popular}, and sometimes stable algorithms such as SVMs~\citep{valentini2003low}. As an important variant of bagging, subbagging has been proposed to reduce the computational cost of bagging via training base models under without-replacement sampling~\citep{buhlmann2012bagging}. The stability and generalization bounds of bagging have been analyzed for both uniform~\citep{elisseeff2005stability} and non-uniform~\citep{foster2019hypothesis} averaging schemes. Unlike these prior results for bagging with averaging aggregation, our bounds are obtained based on a confidence-boosting greedy aggregation scheme which turns out to yield sharper dependence on the uniform stability parameter.

The confidence-boosting technique has long been applied for obtaining sharp high-probability excess risk bounds from the corresponding strong in-expectation bounds~\citep{shalev2010learnability,mehta2017fast}. For generic statistical learning problems, confidence-boosting has been used to convert any low-confidence learning algorithm with linear dependence on $1/\delta$ to a high-confidence algorithm with logarithmic factor $\log(1/\delta)$. For learning with exp-concave losses, a relevant ERM estimator with in-expectation fast rate of convergence was converted to a high-confidence learning algorithm with an almost identical fast rate of convergence up to a logarithmic factor on $1/\delta$~\citep{mehta2017fast}. While sharing a similar spirit of boosting the confidence, our generalization analysis is substantially more challenging than those prior excess risk analysis in terms of tightly deriving in-expectation first moment generalization bound for $L_2$-stable randomized algorithms.

\section{Conclusions}
\label{sect:conclusions}

In this paper we presented a meta framework of confidence-boosting for deriving near-optimal exponential generalization bounds for $L_2$-stable randomized learning algorithms. At a nutshell, our main results in Theorem~\ref{thrm:stability_genalization_error_highprob_bagging} and Theorem~\ref{thrm:stability_excess_risk_highprob_bagging} reveal that a carefully designed subbagging process in Algorithm~\ref{alg:randomized_model_selection} can yield high-confidence generalization and risk bounds under the weak and distribution dependent notion of $L_2$-stability. Our theory has been substantialized to SGD on both convex and non-convex losses to obtain stronger generalization bounds especially in the case of time decaying learning rates. When reduced to deterministic algorithms, the proposed method removes a logarithmic factor on sample size from the best known bounds through uniform stability. While sharper in the dependence on tail bound, our confidence-boosting results are only applicable to one of the independent runs of algorithm $A$ over $K$ disjoint training subsets of equal size with $K\asymp \log\left(\frac{1}{\delta}\right)$. It is so far not clear if these near-optimal bounds can be further extended to the full-batch setting where the generalization is with respect to the evaluation of algorithm over the entire training sample. We leave the full understanding of such an open question raised by~\citet{bousquet2020sharper} for future investigation.

\section*{Acknowledgements}

Xiao-Tong Yuan was supported in part by the National Key Research and Development Program of China under Grant No. 2018AAA0100400 and in part by Natural Science Foundation of China (NSFC) under Grant No.61876090, No.61936005 and No.U21B2049.

\newpage\clearpage

\bibliography{mybib2}
\bibliographystyle{plainnat}


\newpage
\appendix

\section{Auxiliary Lemmas}
\label{apdsect:auxiliary lemmas}

We need the following lemma from~\citet{hardt2016train} which shows that SGD iteration is non-expansive on convex and smooth loss.
\begin{lemma}[\citet{hardt2016train}]\label{lemma:non_expansion_convex}
Assume that $f$ is convex and $L$-smooth. Then for any $w,w' \in \mathcal{W}$ and $\alpha \le 2/L$, we have the following bound holds
\[
\|w - \alpha \nabla f(w) - (w' - \alpha \nabla f(w'))\| \le \|w - w'\|.
\]
\end{lemma}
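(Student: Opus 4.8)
The plan is to reduce the claim to the standard co-coercivity property of the gradient of a convex $L$-smooth function, namely that
\[
\langle \nabla f(w) - \nabla f(w'),\, w - w'\rangle \ \ge\ \frac{1}{L}\,\|\nabla f(w) - \nabla f(w')\|^2
\qquad \text{for all } w, w'.
\]
First I would establish this inequality from convexity plus smoothness. Fix $w'$ and consider the auxiliary function $g(x) := f(x) - \langle \nabla f(w'), x\rangle$, which is again convex and $L$-smooth and satisfies $\nabla g(w') = 0$, so $w'$ is a global minimizer of $g$. Applying the descent lemma for $L$-smooth functions at the point $w$ with the gradient step $w - \tfrac{1}{L}\nabla g(w)$ gives
\[
g(w') \ \le\ g\Big(w - \tfrac{1}{L}\nabla g(w)\Big) \ \le\ g(w) - \tfrac{1}{2L}\|\nabla g(w)\|^2 .
\]
Rewriting this in terms of $f$ yields $f(w) - f(w') - \langle \nabla f(w'), w - w'\rangle \ge \tfrac{1}{2L}\|\nabla f(w) - \nabla f(w')\|^2$; adding the symmetric inequality obtained by swapping $w$ and $w'$ produces exactly the co-coercivity bound above.

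Next I would expand the square of the left-hand side of the lemma:
\[
\big\|w - \alpha \nabla f(w) - (w' - \alpha \nabla f(w'))\big\|^2 = \|w - w'\|^2 - 2\alpha \langle \nabla f(w) - \nabla f(w'),\, w - w'\rangle + \alpha^2 \|\nabla f(w) - \nabla f(w')\|^2 .
\]
Substituting the co-coercivity lower bound for the middle inner product term upper bounds the right-hand side by $\|w - w'\|^2 - \alpha\big(\tfrac{2}{L} - \alpha\big)\|\nabla f(w) - \nabla f(w')\|^2$. Since $\alpha \le 2/L$, the coefficient $\tfrac{2}{L} - \alpha$ is nonnegative, so the last term is nonpositive; hence the whole expression is at most $\|w - w'\|^2$, and taking square roots finishes the proof.

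The only nontrivial ingredient is the co-coercivity inequality, and even that is a textbook fact (e.g., Nesterov's monograph) that could simply be cited rather than re-derived; conditional on it, the argument is a one-line algebraic expansion followed by the sign observation $\alpha \le 2/L$. So I do not anticipate a genuine obstacle here — the main point to get right is just the careful bookkeeping in passing from the descent-lemma bound on $g$ back to the statement about $\nabla f$.
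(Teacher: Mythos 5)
Your proof is correct and is essentially the standard argument behind this lemma, which the paper does not reprove but imports directly from \citet{hardt2016train}: co-coercivity of the gradient of a convex $L$-smooth function, followed by expanding the square and using $\alpha \le 2/L$ to kill the residual term. No gaps; this matches the cited source's proof of non-expansiveness of the gradient step.
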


The following lemma, which can be proved by induction~\citep[see, e.g., ][]{schmidt2011convergence}, will be used to prove the main results in Section~\ref{sect:implication_sgd}.
\begin{lemma}\label{lemma:key_sequence_bound}
Assume that the nonnegative sequence $\{u_\tau\}_{\tau\ge1}$ satisfies the following recursion for all $t\ge 1$:
\[
u^2_t \le S_t + \sum_{\tau=1}^t \alpha_\tau u_\tau,
\]
with $\{S_\tau\}_{\tau\ge1}$ an increasing sequence, $S_0\ge u_0^2$ and $\alpha_\tau\ge 0$ for all $\tau$. Then, the following inequality holds for all $t\ge 1$:
\[
u_t \le \sqrt{S_t} + \sum_{\tau=1}^t \alpha_\tau.
\]
\end{lemma}

For analyzing SGD with convex and non-smooth loss functions, we need the following lemma by~\citet[Lemma 3.1]{bassily2020stability} that quantifies the deviation between the online gradient descent trajectories.

\begin{lemma}[\citet{bassily2020stability}]\label{lemma:expansion_convex_nonsm}
Consider the two sequences $\{w_t\}_{t\ge 0}$ and $\{w'_t\}_{t\ge 0}$ generated according to the following recursions respectively over the convex and $G$-Lipschitz objectives  $\{f_t\}_{t\ge 0}$ and $\{f'_t\}_{t\ge 0}$ from $w_0 = w'_0$:
\[
\begin{aligned}
w_t =& \Pi_{\mathcal{W}}\left( w_{t-1} - \eta_t \nabla f_{t-1}(w_{t-1}) \right) \\
w'_t =& \Pi_{\mathcal{W}}\left( w'_{t-1} - \eta_t \nabla f'_{t-1}(w'_{t-1}) \right) .
\end{aligned}
\]
Let $t_0:=\inf\{t: f_t\neq f'_t\}$ and $\beta_t:= \mathbf{1}_{\left\{f_t\neq f'_t\right\}}$. Then for any $T\ge 1$,
\[
\|w_T - w'_T\| \le 2G\sqrt{\sum_{t=t_0}^{T-1}\eta_t^2} + 4G\sum_{t=t_0+1}^{T-1}\eta_t \beta_t.
\]
\end{lemma}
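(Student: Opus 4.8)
Write $\delta_t := \|w_t - w'_t\|$, so $\delta_0 = 0$. The plan is to establish a single one-step recursion for $\delta_t$ that matches the template of Lemma~\ref{lemma:key_sequence_bound}, treating separately the update steps on which the two trajectories see the same function and those on which they do not.

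First I would observe that $\delta_t = 0$ for all $t \le t_0$: since $w_0 = w'_0$ and $f_s = f'_s$ for every $s < t_0$, the identical updates keep the two iterates equal up to and including step $t_0$. Next, consider an update step at which the same convex, $G$-Lipschitz function $f$ drives both iterates. Dropping the projection by its non-expansiveness and expanding the square, the new squared distance equals $\delta^2_{\text{old}}$ minus $2\eta\langle\nabla f(w) - \nabla f(w'),\, w - w'\rangle$ plus $\eta^2\|\nabla f(w) - \nabla f(w')\|^2$; the inner product is nonnegative because the subgradient of a convex function is monotone, and $\|\nabla f(w) - \nabla f(w')\| \le 2G$ because $G$-Lipschitzness forces every subgradient to have norm at most $G$. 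Hence on such a step $\delta^2_{\text{new}} \le \delta^2_{\text{old}} + 4G^2\eta^2$. On a step where the driving functions differ, the triangle inequality plus the same gradient-norm bound gives only $\delta_{\text{new}} \le \delta_{\text{old}} + 2G\eta$, and squaring yields $\delta^2_{\text{new}} \le \delta^2_{\text{old}} + 4G\eta\,\delta_{\text{old}} + 4G^2\eta^2$.

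Unifying the two cases, every step obeys $\delta^2_{\text{new}} \le \delta^2_{\text{old}} + 4G^2\eta^2 + 4G\eta\,\beta\,\delta_{\text{old}}$, where $\beta$ is the indicator that the functions differ on that step. Telescoping this from step $t_0$ onward and using $\delta_{t_0} = 0$ --- which in particular kills the cross term produced by the very first disagreement step, and is precisely why the $\beta$-sum in the statement starts one index after the $\eta^2$-sum --- gives an inequality of the form $\delta^2_T \le S_T + \sum_\tau \alpha_\tau \delta_\tau$, where $\{S_\tau\}$ is increasing and proportional to $G^2\sum\eta^2$, with $S_0 = 0 \ge \delta_0^2$, and the coefficients $\alpha_\tau \ge 0$ are proportional to $G\eta_\tau\beta_\tau$. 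Applying Lemma~\ref{lemma:key_sequence_bound} with $u_\tau = \delta_\tau$ then delivers $\delta_T \le \sqrt{S_T} + \sum_\tau \alpha_\tau$, which is the claimed bound $2G\sqrt{\sum\eta_t^2} + 4G\sum\eta_t\beta_t$ over the stated ranges. (Alternatively, one can bypass Lemma~\ref{lemma:key_sequence_bound}: from the same telescoped inequality, $D := \max_{s\le T}\delta_s$ satisfies $D^2 \le a + bD$ with $a = 4G^2\sum\eta^2$ and $b = 4G\sum\eta\beta$, and solving the quadratic gives $D \le \sqrt a + b$.)

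The step I expect to be the main obstacle is this unification: blending a bound that is additive in $\delta^2$ on agreement steps with one that is additive in $\delta$ on disagreement steps. Squaring the disagreement estimate is the right move, but it introduces the cross term $4G\eta\beta\,\delta_{\text{old}}$, and one must check both that the resulting recursion genuinely satisfies the hypotheses of Lemma~\ref{lemma:key_sequence_bound} (monotone $\{S_\tau\}$, nonnegative $\alpha_\tau$, valid initialization) and that the index bookkeeping --- which step size pairs with which update, and the vanishing of the $\tau = t_0$ cross term --- reproduces exactly the ranges $\sum_{t=t_0}^{T-1}\eta_t^2$ and $\sum_{t=t_0+1}^{T-1}\eta_t\beta_t$ appearing in the statement. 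The other ingredients (non-expansiveness of the Euclidean projection, monotonicity of convex subgradients, and the uniform bound $\|\nabla f\|\le G$) are standard.
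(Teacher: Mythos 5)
The paper does not actually prove this lemma: it is imported verbatim from \citet[Lemma 3.1]{bassily2020stability} and stated without proof in the auxiliary-lemmas appendix, so there is no in-paper argument to compare against. Your reconstruction is correct and is essentially the argument of the cited source: zero drift up to the hitting time $t_0$, the per-step estimate $\delta_{\mathrm{new}}^2 \le \delta_{\mathrm{old}}^2 + 4G^2\eta^2$ on agreement steps (non-expansive projection, monotonicity of the convex subdifferential, and the uniform subgradient bound $\|\partial f\|\le G$), the squared triangle-inequality estimate with the cross term $4G\eta\beta\,\delta_{\mathrm{old}}$ on disagreement steps, telescoping, and a self-bounding step. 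Two small remarks on the final step. First, if you route the telescoped inequality through Lemma~\ref{lemma:key_sequence_bound} literally, you must pad the cross-term sum up to $\tau=t$ to match that lemma's template, which leaves one extra $\beta$-term in the conclusion; your alternative quadratic argument on $D=\max_{s\le T}\delta_s$ (solving $D^2\le a+bD$ to get $D\le\sqrt a+b$) is the cleaner way to land exactly on the stated ranges, and is in fact what the original proof does. Second, the pairing of $\eta_t$ with $\beta_t$ in the statement versus the $\eta_s$-with-$\beta_{s-1}$ pairing your derivation naturally produces is an off-by-one artifact of translating between the recursion convention $w_t=\Pi_{\mathcal{W}}(w_{t-1}-\eta_t\nabla f_{t-1}(w_{t-1}))$ used here and the convention of the cited source; you correctly flag this bookkeeping, and it does not affect the substance of the bound.
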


\newpage

\section{Proofs for the Results in Section~\ref{sect:general_results}}
\label{apdsect:proof_main_results}
In this section, we present the technical proofs for the main results stated in Section~\ref{sect:general_results}.
\subsection{Proof of Lemma~\ref{lemma:stability_genalization_first_order_moment}}
\label{apdsect:proof_lemma_stability_generalization_error_average}

We need the following lemma essentially from~\citet{bousquet2020sharper} that provides a first moment bound for the sum of random functions.
\begin{lemma}\label{lemma:bousquet_second_moment}
Let $S=\{Z_1, Z_2, ...,Z_N\}$ be a set of i.i.d. random variables valued in $\mathcal{Z}$. Let $g_1,...,g_N$ be a set of measurable functions $g_i: \mathcal{Z}^N \mapsto \mathbb{R}$ that satisfy $\|g_i(S)\|_2 \le M$ and $\mathbb{E}_{Z_i}[g_i(S)] =0$ for all $i\in[N]$. Then we have
\[
\left\|\sum_{i=1}^N g_i(S)\right\|_2 \le \sqrt{\sum_{i\neq j} \left\|g_i(S) - g_i(S^{(j)})\right\|_2^2 } + M \sqrt{N}.
\]
\end{lemma}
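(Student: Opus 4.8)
The plan is to first control the second moment $\big\|\sum_{i=1}^{N}g_i(S)\big\|_2^2$ and then read off the $L_2$-norm bound from the elementary inequality $\sqrt{a^2+ab}\le a+b$, valid for all $a,b\ge 0$. Write $F:=\sum_{i=1}^{N}g_i(S)$. Since $\mathbb{E}_{Z_i}[g_i(S)]=0$ forces $\mathbb{E}[g_i(S)]=0$, we have $\mathbb{E}[F]=0$ and hence $\|F\|_2^2=\mathbb{E}[F^2]=\sum_{i}\|g_i(S)\|_2^2+\sum_{i\neq j}\mathbb{E}[g_i(S)g_j(S)]$. The diagonal sum is at most $NM^2$ by the boundedness assumption, so the whole difficulty sits in the off-diagonal part.

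Next I would decouple the off-diagonal terms with an i.i.d.\ copy. Because $g_i(S^{(j)})$ does not depend on $Z_j$ whereas $\mathbb{E}_{Z_j}[g_j(S)]=0$, conditioning on every coordinate except $Z_j$ gives $\mathbb{E}[g_j(S)\,g_i(S^{(j)})]=0$ for every $i$; hence for $i\neq j$ we may rewrite $\mathbb{E}[g_i(S)g_j(S)]=\mathbb{E}\big[g_j(S)\big(g_i(S)-g_i(S^{(j)})\big)\big]$. After this substitution it suffices to establish
\[
\sum_{i\neq j}\mathbb{E}\big[g_j(S)\big(g_i(S)-g_i(S^{(j)})\big)\big]\ \le\ M\sqrt{N}\,\Big(\textstyle\sum_{i\neq j}\big\|g_i(S)-g_i(S^{(j)})\big\|_2^2\Big)^{1/2},
\]
because then $\|F\|_2^2\le NM^2+M\sqrt{N}\,B$ with $B:=\big(\sum_{i\neq j}\|g_i(S)-g_i(S^{(j)})\|_2^2\big)^{1/2}$, and $NM^2+M\sqrt{N}\,B=(M\sqrt{N})^2+(M\sqrt{N})B\le(M\sqrt{N}+B)^2$ yields exactly the claimed $\|F\|_2\le B+M\sqrt{N}$.

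The displayed bound is the technical heart, and it is where a naive argument breaks down. Grouping the left-hand side as $\sum_j\mathbb{E}[g_j(S)\,h_j]$ with $h_j:=\sum_{i\neq j}(g_i(S)-g_i(S^{(j)}))$ and applying Cauchy--Schwarz over $j$ only gives $M\sqrt{N}\big(\sum_j\|h_j\|_2^2\big)^{1/2}$, and $\sum_j\|h_j\|_2^2$ can strictly exceed $B^2$: the summands $g_i(S)-g_i(S^{(j)})$, $i\neq j$, are in general positively correlated, as one already sees on a small Rademacher instance, so this route loses a factor of order $\sqrt{N}$. The remedy, which is exactly the refined second-moment estimate of \citet[Section 5]{bousquet2020sharper}, is to exploit that each of those summands is itself centered, $\mathbb{E}_{Z_i}[g_i(S)-g_i(S^{(j)})]=0$, and to reorganize the double sum so that this extra centering cancels the spurious cross-correlations rather than being bounded term by term. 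The main obstacle is precisely this cancellation step; the reduction to the second moment, the independent-copy trick, and the closing one-line estimate are all routine.
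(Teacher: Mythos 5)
Your setup is correct---expanding $\bigl\|\sum_i g_i(S)\bigr\|_2^2$, bounding the diagonal by $NM^2$, and using the conditional centering $\mathbb{E}_{Z_j}[g_j(S)]=0$ to insert resampled copies into the off-diagonal terms---but there is a genuine gap exactly where you flag one: you never prove the displayed inequality $\sum_{i\neq j}\mathbb{E}\bigl[g_j(S)\bigl(g_i(S)-g_i(S^{(j)})\bigr)\bigr]\le M\sqrt{N}\,B$, where $B:=\bigl(\sum_{i\neq j}\|g_i(S)-g_i(S^{(j)})\|_2^2\bigr)^{1/2}$, and you concede that the natural Cauchy--Schwarz routes fail. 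The missing idea is that the centering should be applied \emph{symmetrically in both indices}, not just in $j$. Because $g_i(S^{(j)})$ does not depend on $Z_j$ and $g_j(S^{(i)})$ does not depend on $Z_i$, the same conditioning argument that gives you $\mathbb{E}[g_j(S)g_i(S^{(j)})]=0$ also gives $\mathbb{E}[g_i(S)g_j(S^{(i)})]=0$ and $\mathbb{E}[g_i(S^{(j)})g_j(S^{(i)})]=0$, whence for $i\neq j$
\[
\mathbb{E}[g_i(S)g_j(S)]=\mathbb{E}\bigl[(g_i(S)-g_i(S^{(j)}))(g_j(S)-g_j(S^{(i)}))\bigr].
\]
Now each off-diagonal term is bounded \emph{individually} by $\tfrac12\bigl(\|g_i(S)-g_i(S^{(j)})\|_2^2+\|g_j(S)-g_j(S^{(i)})\|_2^2\bigr)$ via $|xy|\le\tfrac12(x^2+y^2)$, and summing over the symmetric index set $\{i\neq j\}$ gives $\sum_{i\neq j}|\mathbb{E}[g_i(S)g_j(S)]|\le B^2$, so that $\bigl\|\sum_i g_i(S)\bigr\|_2\le\sqrt{B^2+NM^2}\le B+M\sqrt{N}$. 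This is precisely the paper's argument. In particular, no reorganization of the double sum and no cancellation of cross-correlations is needed once the double centering is in place---the estimate is purely term-by-term---so the ``cancellation step'' you identify as the main obstacle is not the actual mechanism; the single-centering identity you chose leads to a different and harder intermediate inequality, which you leave unproven.
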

\begin{proof}
We reproduce the proof in view of the argument in~\citet[Section 5]{bousquet2020sharper} showing that $\{g_i\}$ are weakly correlated. For any $i\neq j$, since $\mathbb{E}_{Z_i}[g_i(S)]=0$ and $\mathbb{E}_{Z_j}[g_j(S)] = 0$, we can verify that
\[
\begin{aligned}
\mathbb{E}_S\left[g_i(S^{(j)})g_j(S)\right] =& \mathbb{E}_{S\setminus Z_j}\left[\mathbb{E}_{Z_j}\left[g_i(S^{(j)})g_j(S)\mid S\setminus Z_j \right]\right] \\
=& \mathbb{E}_{S\setminus Z_j}\left[g_i(S^{(j)})\mathbb{E}_{Z_j}\left[g_j(S)\mid S\setminus Z_j \right]\right]=0,
\end{aligned}
\]
where we have used the independence of the elements in $S \cup \{Z'_j\}$. Similarly, we can show that
\[
\mathbb{E}_S\left[g_i(S)g_j(S^{(i)})\right] =  \mathbb{E}_S \left[g_i(S^{(j)})g_j(S^{(i)})\right]=0.
\]
Then it follows that for any $i\neq j$,
\[
\begin{aligned}
\left|\mathbb{E}_S\left[g_i(S) g_j(S)\right]\right| =& \left|\mathbb{E}_{S,S^{(i)},S^{(j)}}\left[g_i(S) g_j(S)\right]\right|\\
=& \left|\mathbb{E}_{S,S^{(i)},S^{(j)}}\left[(g_i(S) - g_i(S^{(j)})) (g_j(S) - g_j(S^{(i)}))\right]\right| \\
\le& \mathbb{E}_{S,S^{(i)},S^{(j)}} \left[\left|(g_i(S) - g_i(S^{(j)})) (g_j(S) - g_j(S^{(i)}))\right|\right].
\end{aligned}
\]
Based on the above bound we can further show that
\[
\begin{aligned}
\left\|\sum_{i=1}^N g_i(S)\right\|_2 =& \sqrt{\mathbb{E}_{S}\left[\left(\sum_{i=1}^N g_i(S)\right)^2 \right]} =  \sqrt{\sum_{i\neq j}\mathbb{E}_{S}\left[g_i(S) g_j(S) \right] + \sum_{i=1}^N \mathbb{E}_S[g^2_i(S)]} \\
\le&  \sqrt{\sum_{i\neq j} \mathbb{E}_{S,S^{(i)},S^{(j)}}\left[\left|(g_i(S) - g_i(S^{(j)})) (g_j(S) - g_j(S^{(i)}))\right|\right] } + M \sqrt{N} \\
\le& \sqrt{\frac{1}{2}\sum_{i\neq j} \mathbb{E}_{S,S^{(i)},S^{(j)}}\left[\left(g_i(S) - g_i(S^{(j)})\right)^2 +\left(g_j(S) - g_j(S^{(i)})\right)^2\right] } + M \sqrt{N}\\
=& \sqrt{\sum_{i\neq j} \mathbb{E}_{S,S^{(j)}}\left[\left(g_i(S) - g_i(S^{(j)})\right)^2 \right] } + M \sqrt{N} \\
=& \sqrt{\sum_{i\neq j} \left\|g_i(S) - g_i(S^{(j)})\right\|_2^2 } + M \sqrt{N}.
\end{aligned}
\]
This proves the desired bound.
\end{proof}
Now we are ready to prove the result in Lemma~\ref{lemma:stability_genalization_first_order_moment}.
\begin{proof}[Proof of Lemma~\ref{lemma:stability_genalization_first_order_moment}]
Let us consider $h_i(S):= R(A(S)) - \ell(A(S);Z_i)$ and $g_i(S)= h_i(S) - \mathbb{E}_{Z_i}[h_i(S)]$ for $i\in [N]$. Then by assumption we have
\[
\mathbb{E}_{Z_i} [g_i(S)] = 0, \ \ \ \|g_i(S)\|_2 \le \|h_i(S)\|_2 \le 2M.
\]
For any $j\neq i$, since the loss is non-negative, it can be verified that
\[
|g_i(S) - g_i(S^{(j)})| \le \max\left\{\left|h_i(S) - h_i(S^{(j)})\right|, \left|\mathbb{E}_{Z_i}[h_i(S) - h_i(S^{(j)})]\right| \right\},
\]
which readily implies
\[
\left\|g_i(S) - g_i(S^{(j)})\right\|_2 \le \left\| h_i(S) - h_i(S^{(j)}) \right\|_2 \le 2 \gamma_{\ms,N},
\]
where in the last inequality we have used the $L_2$-stability conditions on $A$. Then invoking Lemma~\ref{lemma:bousquet_second_moment} to $\{g_i\}$ yields
\begin{equation}\label{inequat:second_moment_key1}
\left\|\sum_{i=1}^N g_i(S)\right\|_2 \le \sqrt{\sum_{i\neq j} \left\|g_i(S) - g_i(S^{(j)})\right\|_2^2 } + 2M \sqrt{N} \le 2N\gamma_{\ms,N} + 2M \sqrt{N}.
\end{equation}
Further, it can be verified that
\begin{equation}\label{inequat:second_moment_key2}
\begin{aligned}
 \left\|\sum_{i=1}^N\mathbb{E}_{Z_i} [ h_i(S) ]\right\|_2 
=& \left\|\sum_{i=1}^N\mathbb{E}_{Z_i} [ R(A(S)) - \ell(A(S);Z_i)] \right\|_2 \\
=& \left\|\sum_{i=1}^N\mathbb{E}_{Z_i,Z'_i} [\ell(A(S);Z'_i) - \ell(A(S);Z_i)] \right\|_2 \\
=& \left\|\sum_{i=1}^N\mathbb{E}_{Z_i} \left[ \mathbb{E}_{Z'_i} [\ell(A(S);Z'_i)] - \mathbb{E}_{Z'_i}[\ell(A(S^{(i)});Z'_i)]\right] \right\|_2 \\
\le& \sum_{i=1}^N \left\|\ell(A(S);Z'_i) - \ell(A(S^{(i)});Z'_i) \right\|_2 \le N\gamma_{\ms,N}.
\end{aligned}
\end{equation}
By combining~\eqref{inequat:second_moment_key1} and~\eqref{inequat:second_moment_key2} we obtain
\[
\begin{aligned}
&\mathbb{E}_{A,S}\left[\left|R(A(S)) - R_S(A(S))\right| \right] \\
=& \left\|R(A(S)) - R_S(A(S))\right\|_1 \\
\le& \left\|R(A(S)) - R_S(A(S))\right\|_2 \\
=& \frac{1}{N} \left\|\sum_{i=1}^N \left( g_i(S)+\mathbb{E}_{Z_i} [ h_i(S) ]\right)\right\|_2\\
\le&  \frac{1}{N} \left\|\sum_{i=1}^N g_i(S)\right\|_2 +\frac{1}{N} \left\|\sum_{i=1}^N \mathbb{E}_{Z_i} [ h_i(S) ]\right\|_2 \\
\le& 3 \gamma_{\ms,N} + \frac{2M}{\sqrt{N}}.
\end{aligned}
\]
The proof is completed.
\end{proof}

\subsection{Proof of Theorem~\ref{thrm:stability_genalization_error_highprob_bagging}}
\label{apdsect:proof_thrm_stability_genalization_error_highprob_bagging}
We first establish the following intermediate result that captures the effects of subbagging on randomized algorithms: it basically tells that with $K\asymp\log(\frac{1}{\delta})$, at least one of the solutions generated by subbagging generalizes well with high probability.

\begin{lemma}\label{thrm:stability_genalization_error_highprob}
Suppose that a randomized learning algorithm $A: \mathcal{Z}^N \mapsto \mathcal{W}$ has $L_2$-stability by $\gamma_{\ms,N}$.  Assume that the loss function satisfies $\|\ell(A(S);Z)\|_2 \le M$.  Then for any $\alpha,\delta \in (0,1)$ and $K \ge\frac{\log(2/\delta)}{1-\alpha} $, with probability at least $1 - \delta$ over the randomness of $\{(A_k, S_k)\}_{k\in [K]}$, the sequence $\{A_k(S_k)\}_{k \in [K]}$ generated by Algorithm~\ref{alg:randomized_model_selection} satisfies
\[
\min_{k\in [K]} \left|R( A_k(S_k)) - R_{S_k}( A_k(S_k))\right| \lesssim \frac{1}{\alpha}\left( \gamma_{\ms,\frac{N}{K}}+ M\sqrt{\frac{K}{N}}\right).
\]
\end{lemma}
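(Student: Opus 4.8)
The plan is to combine the in-expectation first moment bound of Lemma~\ref{lemma:stability_genalization_first_order_moment} with an independence-and-boosting argument across the $K$ disjoint subsets. First I would observe that since the subsets $\{S_k\}_{k\in[K]}$ are disjoint and $S\overset{\text{i.i.d.}}{\sim}\mathcal{D}^N$, the pairs $\{(A_k,S_k)\}_{k\in[K]}$ are mutually independent, and each $S_k$ has cardinality $N/K$ drawn i.i.d. from $\mathcal{D}$. Applying Lemma~\ref{lemma:stability_genalization_first_order_moment} to the run of $A$ on $S_k$ therefore gives, for every fixed $k$,
\[
\mathbb{E}_{A_k,S_k}\left[\left|R(A_k(S_k)) - R_{S_k}(A_k(S_k))\right|\right] \le 3\gamma_{\ms,\frac{N}{K}} + 2M\sqrt{\frac{K}{N}} =: \mu.
\]
By Markov's inequality, $\mathbb{P}\{|R(A_k(S_k)) - R_{S_k}(A_k(S_k))| \ge \mu/\alpha\} \le \alpha$ for each $k$.

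Next I would exploit independence: the event that \emph{every} one of the $K$ runs has generalization gap at least $\mu/\alpha$ has probability at most $\alpha^K$, since these events are independent across $k$. Hence
\[
\mathbb{P}\left\{\min_{k\in[K]}\left|R(A_k(S_k)) - R_{S_k}(A_k(S_k))\right| \ge \frac{\mu}{\alpha}\right\} \le \alpha^K.
\]
It then suffices to choose $K$ large enough that $\alpha^K \le \delta/2$ (the constant $2$ being harmless slack for the downstream union bound in Theorem~\ref{thrm:stability_genalization_error_highprob_bagging}); taking $K \ge \log(2/\delta)/\log(1/\alpha)$ works, and since $\log(1/\alpha) \ge 1-\alpha$ for $\alpha\in(0,1)$, the stated condition $K \ge \log(2/\delta)/(1-\alpha)$ suffices. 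On the complementary event, which has probability at least $1-\delta$ (I would state it with $\delta$ rather than $\delta/2$ to match the lemma as written, absorbing the factor into the $\lesssim$), we get $\min_{k\in[K]}|R(A_k(S_k)) - R_{S_k}(A_k(S_k))| < \mu/\alpha = \frac{1}{\alpha}(3\gamma_{\ms,N/K} + 2M\sqrt{K/N}) \lesssim \frac{1}{\alpha}(\gamma_{\ms,N/K} + M\sqrt{K/N})$, which is exactly the claim.

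I do not anticipate a serious obstacle here — this intermediate lemma is the "easy half" of the main theorem, and the genuinely hard work was already done in establishing the first moment bound of Lemma~\ref{lemma:stability_genalization_first_order_moment}. The only points requiring a little care are (i) making sure the independence of $\{(A_k,S_k)\}_k$ is legitimate, which it is because the subsets are disjoint and the internal random bits of the $K$ runs are drawn independently; (ii) being precise that Markov applies to the nonnegative random variable $|R(A_k(S_k)) - R_{S_k}(A_k(S_k))|$; and (iii) the elementary inequality $\log(1/\alpha)\ge 1-\alpha$ used to translate the clean threshold on $K$ into the form quoted in the statement. The subsequent step — showing the \emph{greedy-selected} index $k^*$ inherits this guarantee — is deferred to the proof of Theorem~\ref{thrm:stability_genalization_error_highprob_bagging} and is not needed for this lemma.
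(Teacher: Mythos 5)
Your proposal is correct and follows essentially the same route as the paper: apply Lemma~\ref{lemma:stability_genalization_first_order_moment} to each subset of size $N/K$, use Markov's inequality together with the independence of the $K$ runs to bound the failure probability by $\alpha^K$, and verify that the stated condition on $K$ forces $\alpha^K\le\delta$ via $\log(1/\alpha)\ge 1-\alpha$. The only difference is that you spell out the last elementary step, which the paper leaves implicit.
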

\begin{proof}
From Lemma~\ref{lemma:stability_genalization_first_order_moment} we have that over randomized algorithm $A$ and data $\tilde S$ with $|\tilde S|=\frac{N}{K}$,
\[
\mathbb{E}_{A,\tilde S} \left[|R(A(\tilde S)) - R_{\tilde S} (A(\tilde S))|\right] \le 3G\gamma_{\ms,\frac{N}{K}} + 2M\sqrt{\frac{K}{N}}.
\]
Since $\{A_k,S_k\}_{k\in[K]}$ are independent to each other, by Markov inequality we know that
\[
\mathbb{P}_{\{A_k,S_k\}} \left(\min_{k\in [K]} \left|R( A_k(S_k)) - R_{S_k}( A_k(S_k))\right| \ge \frac{1}{\alpha}\left(3\gamma_{\ms,\frac{N}{K}} + 2M\sqrt{\frac{K}{N}}\right)\right) \le \alpha^K \le \delta,
\]
which implies the desired bound.
\end{proof}

With Lemma~\ref{thrm:stability_genalization_error_highprob} in place, we can proceed to prove the main result in Theorem~\ref{thrm:stability_genalization_error_highprob_bagging}.
\begin{proof}[Proof of Theorem~\ref{thrm:stability_genalization_error_highprob_bagging}]
Let us consider the following three events:
\[
\begin{aligned}
\mathcal{E}:=& \left\{ \left|R(A_{k^*}(S_{k^*})) - R_S(A_{k^*}(S_{k^*}))\right| \lesssim \frac{1}{\alpha K}\left(\gamma_{\ms,\frac{N}{K}} + M\sqrt{\frac{K}{N}}\right)  + M\sqrt{\frac{K\log(K/\delta)}{(K-1)N}} \right\}, \\
\mathcal{E}_1:=& \left\{ \max_{k\in [K]} |R(A_k(S_k)) - R_{S\setminus S_k} (A_k(S_k)) | \lesssim M\sqrt{\frac{K\log(K/\delta)}{(K-1)N}}\right\}, \\
\mathcal{E}_2:=& \left\{ \min_{k\in [K]} \left|R(A_k(S_k)) - R_{S_k}(A_k(S_k))\right| \lesssim \frac{1}{\alpha}\left(\gamma_{\ms,\frac{N}{K}} + M\sqrt{\frac{K}{N}}\right) \right\}.
\end{aligned}
\]
We can show that $\mathcal{E} \supseteq \mathcal{E}_1 \cap \mathcal{E}_2$. Indeed, suppose that $\mathcal{E}_1$ and $\mathcal{E}_2$ simultaneously occur. Consequently the following inequality is valid:
\[
\begin{aligned}
&\left|R(A_{k^*}(S_{k^*})) - R_S(A_{k^*}(S_{k^*}))\right| \\
=& \left|R(A_{k^*}(S_{k^*})) - \frac{1}{K}R_{S_{k^*}}(A_{k^*}(S_{k^*})) - \frac{K-1}{K}R_{S\setminus S_{k^*}}(A_{k^*}(S_{k^*}))\right| \\
\le& \frac{1}{K} \left|R(A_{k^*}(S_{k^*})) - R_{S_{k^*}}(A_{k^*}(S_{k^*}))\right| + \frac{K-1}{K} \left|R(A_{k^*}(S_{k^*})) - R_{S\setminus S_{k^*}}(A_{k^*}(S_{k^*}))\right| \\
\le& \frac{1}{K}\left|R_{S\setminus S_{k^*}}(A_{k^*}(S_{k^*})) - R_{S_{k^*}}(A_{k^*}(S_{k^*}))\right| + \left|R(A_{k^*}(S_{k^*})) - R_{S\setminus S_{k^*}}(A_{k^*}(S_{k^*}))\right| \\
\overset{\zeta_1}{=}& \frac{1}{K}\min_{k\in [K]} \left|R_{S\setminus S_{k}}(A_{k}(S_k)) - R_{S_k}(A_{k}(S_k))\right| + \left|R(A_{k^*}(S_{k^*})) - R_{S\setminus S_{k^*}}(A_{k^*}(S_{k^*}))\right| \\
=& \frac{1}{K}\min_{k\in [K]} \left|R_{S\setminus S_{k}}(A_{k}(S_k)) - R(A_{k}(S_k)) + R(A_{k}(S_k)) - R_{S_k}(A_{k}(S_k))\right| \\
&+ \left|R(A_{k^*}(S_{k^*})) - R_{S\setminus S_{k^*}}(A_{k^*}(S_{k^*}))\right| \\
\le& \frac{1}{K}\min_{k\in [K]} \left|R(A_{k}(S_k)) - R_{S_k}(A_{k}(S_k))\right| + \frac{1}{K}\max_{k\in [K]}\left|R_{S\setminus S_k}(A_{k}(S_k)) - R(A_{k}(S_k))\right| \\
&+ \left|R(A_{k^*}(S_{k^*})) - R_{S\setminus S_{k^*}}(A_{k^*}(S_{k^*}))\right|\\
\le& \frac{1}{K}\min_{k\in [K]} \left|R(A_{k}(S_k)) - R_{S_k}(A_{k}(S_k))\right| + \frac{K+1}{K}\max_{k\in [K]}\left|R(A_{k}(S_k)) - R_{S\setminus S_k}(A_{k}(S_k))\right| \\
\overset{\zeta_2}{\lesssim} & \frac{1}{\alpha K}\left(\gamma_{\ms,\frac{N}{K}} + M\sqrt{\frac{K}{N}}\right) + M \sqrt{\frac{K\log(K/\delta)}{(K-1)N}},
\end{aligned}
\]
where in ``$\zeta_1$'' we have used the definition of $k^*$, and ``$\zeta_2$'' follows from $\mathcal{E}_1, \mathcal{E}_2$. With leading terms preserved in the above we can see that $\mathcal{E}$ occurs.

Next we can show that $\mathbb{P}_{S,\{A_k\}} (\overline{\mathcal{E}_1}) \le \frac{\delta}{2}$. Toward this end, let us consider the following events for all $k\in [K]$:
\[
\mathcal{E}^k_{1}:= \left\{ |R(A_k(S_k)) - R_{S\setminus S_k} (A_k(S_k)) | \lesssim M\sqrt{\frac{K\log(K/\delta)}{(K-1)N}}\right\}.
\]
Clearly, it is true that $\mathcal{E}_1 =\bigcap_{k=1}^K \mathcal{E}^k_{1}$. It is sufficient to prove that $\mathbb{P}_{S, A_k} \left( \overline{\mathcal{E}^k_1}\right) \le \frac{\delta}{2K}$ holds for each $k\in [K]$. Indeed, consider the random indication function $\beta (S, A_k):= \mathbf{1}_{\overline{\mathcal{E}^k_1}}$ associated with the event $\overline{\mathcal{E}^k_1}$. Then we have the following holds for each $k\in [K]$:
\[
\begin{aligned}
&\mathbb{P}_{S, A_k} \left( \overline{\mathcal{E}^k_1}\right) \\
=& \mathbb{E}_{S,A_k}\left[\beta(S,A_k)\right] \\
=& \mathbb{E}_{A_k,S_k} \left[\mathbb{E}_{S\setminus S_k \mid A_k,S_k} \left[ \beta(S,A_k) \mid A_k,S_k \right]\right] \\
\overset{\zeta_1}{=}& \mathbb{E}_{A_k,S_k} \left[\mathbb{E}_{S\setminus S_k} \left[ \beta(S,A_k) \mid A_k,S_k \right]\right] \\
=& \mathbb{E}_{A_k,S_k} \left[ \mathbb{P}_{S\setminus S_k}\left(|R(A_k(S_k)) - R_{S\setminus S_k} (A_k(S_k)) | \gtrsim M \sqrt{\frac{K\log(K/\delta)}{(K-1)N}} \right) \mid  A_k,S_k\right] \\
\overset{\zeta_2}{\le}& \mathbb{E}_{A_k,S_k} \left[\frac{\delta}{2K} \mid A_k,S_k\right] = \frac{\delta}{2K},
\end{aligned}
\]
where in ``$\zeta_1$'' we have used the independence between $\{A_k,S_k\}$ and $S\setminus S_k$, and ``$\zeta_2$'' is due to Hoeffding's inequality conditioned on $\{A_k,S_k\}$, keeping in mind that $A_k(S_k)$ is independent on the data set $S\setminus S_k$ of size $(1-1/K)N$. It follows by union probability that
\[
\mathbb{P}_{S, \{A_k\}} \left( \overline{\mathcal{E}_1}\right) = \mathbb{P}_{S, \{A_k\}} \left( \bigcup_{k=1}^K \overline{\mathcal{E}^k_1}\right) \le \sum_{k=1}^K \mathbb{P}_{S, A_k} \left( \overline{\mathcal{E}^k_1}\right) \le \frac{\delta}{2}.
\]
Further, from the part(b) of Lemma~\ref{thrm:stability_genalization_error_highprob} we have $\mathbb{P}_{S,\{A_k\}} (\bar{\mathcal{E}_2}) \le \frac{\delta}{2}$. Combining this and the preceding bound yields
\[
\mathbb{P}_{S, \{A_k\}} \left(\mathcal{E}\right) \ge \mathbb{P}_{S,\{A_k\}}\left(\mathcal{E}_1\cap \mathcal{E}_2 \right) \ge 1- \mathbb{P}_{S, \{A_k\}} \left(\bar{\mathcal{E}_1}\right) - \mathbb{P}_{S,\{A_k\}} (\bar{\mathcal{E}_2}) \ge 1-\frac{\delta}{2} - \frac{\delta}{2} = 1-\delta.
\]
This implies the desired result in part(b) as $K/(K-1)\le 2$ for $K\ge 2$.
\end{proof}

\subsection{Proof of Theorem~\ref{thrm:stability_excess_risk_highprob_bagging}}
\label{apdsect:proof_thrm_stability_excess_risk_highprob_bagging}

We first present the following simple lemma about the in-expectation risk bounds of a randomized algorithm which will be used in our analysis.
\begin{lemma}\label{lemma:stability_excessrisk}
Suppose that a randomized learning algorithm $A: \mathcal{Z}^N \mapsto \mathcal{W}$ has $L_2$-stability by $\gamma_{\ms,N}$. Then,
\[
\mathbb{E}_{A,S} \left[ R(A(S)) - R(w^*) \right] \le \gamma_{\ms,N} + \Delta_{\opt}.
\]
\end{lemma}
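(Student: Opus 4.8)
The plan is to use the classical risk decomposition and to control the three resulting terms by, respectively, the first-moment generalization machinery already in place, the definition of $\Delta_{\opt}$, and an elementary mean-zero argument. For the fixed global minimizer $w^*$, write
\[
R(A(S)) - R(w^*) = \big(R(A(S)) - R_S(A(S))\big) + \big(R_S(A(S)) - R_S(w^*)\big) + \big(R_S(w^*) - R(w^*)\big),
\]
and take $\mathbb{E}_{A,S}$ of both sides. The left-hand side is automatically nonnegative because $w^*$ minimizes $R$ over $\mathcal{W}$, so the inequality to be proved is a genuine upper bound.

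Two of the three terms are immediate. Since $w^*$ does not depend on $S$ and $\mathbb{E}_S[R_S(w)] = R(w)$ for every fixed $w$, the last term obeys $\mathbb{E}_S[R_S(w^*) - R(w^*)] = 0$. For the middle term, $w^* \in \mathcal{W}$ gives $R_S(w^*) \ge \min_{w\in\mathcal{W}} R_S(w)$, hence $R_S(A(S)) - R_S(w^*) \le R_S(A(S)) - \min_{w\in\mathcal{W}} R_S(w)$, and taking $\mathbb{E}_{A,S}$ and recalling~\eqref{equat:Delta_Opt} gives $\mathbb{E}_{A,S}[R_S(A(S)) - R_S(w^*)] \le \Delta_{\opt}$.

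The remaining task is to show $\mathbb{E}_{A,S}[R(A(S)) - R_S(A(S))] \le \gamma_{\ms,N}$, which is the one-sided, first-order analogue of the estimate~\eqref{inequat:second_moment_key2} appearing in the proof of Lemma~\ref{lemma:stability_genalization_first_order_moment}. Introducing an independent ghost sample $S'=\{Z'_i\}_{i\in[N]}$ and writing $R_S(A(S)) = \frac1N\sum_{i=1}^N \ell(A(S);Z_i)$ and $R(A(S)) = \frac1N\sum_{i=1}^N \mathbb{E}_{Z'_i}[\ell(A(S);Z'_i)]$, the relabelling identity $\mathbb{E}[\ell(A(S);Z_i)] = \mathbb{E}[\ell(A(S^{(i)});Z'_i)]$ (valid because $Z_i$ and $Z'_i$ are i.i.d. and swapping them turns $S$ into $S^{(i)}$, leaving the internal random bits of $A$ untouched as in Definition~\ref{def:mean_uniform_stability}) yields
\[
\mathbb{E}_{A,S}\big[R(A(S)) - R_S(A(S))\big] = \frac1N\sum_{i=1}^N \mathbb{E}\big[\ell(A(S);Z'_i) - \ell(A(S^{(i)});Z'_i)\big].
\]
Each summand has absolute value at most $\big\|\ell(A(S);Z'_i) - \ell(A(S^{(i)});Z'_i)\big\|_1 \le \big\|\ell(A(S);Z'_i) - \ell(A(S^{(i)});Z'_i)\big\|_2 \le \gamma_{\ms,N}$ by Jensen's inequality and Definition~\ref{def:mean_uniform_stability}; averaging over $i\in[N]$ and combining the three pieces completes the argument.

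The one step that calls for care — and thus the main (though mild) obstacle — is the ghost-sample bookkeeping in the last display: one must keep straight which variables index the datasets $S$ and $S^{(i)}$ and which serve as the evaluation point, so that the relabelling is applied correctly and $L_2$-stability is invoked with the evaluation point occurring in Definition~\ref{def:mean_uniform_stability}. Since this is exactly the manipulation already carried out for~\eqref{inequat:second_moment_key2}, it introduces no genuinely new difficulty.
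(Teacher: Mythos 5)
Your proposal is correct and follows essentially the same route as the paper: the same three-term risk decomposition, the same ghost-sample relabelling $\mathbb{E}[\ell(A(S);Z_i)]=\mathbb{E}[\ell(A(S^{(i)});Z'_i)]$ to bound $\mathbb{E}_{A,S}[R(A(S))-R_S(A(S))]$ by $\gamma_{\ms,N}$ via $\|\cdot\|_1\le\|\cdot\|_2$ (the paper's~\eqref{inequat:risk_key1}), and the same handling of the optimization and zero-mean terms. No gaps.
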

\begin{proof}
It can be verified that
\begin{equation}\label{inequat:risk_key1}
\begin{aligned}
&\left| \mathbb{E}_{A,S} \left[R(A(S)) - R_S(A(S)) \right] \right|  \\
=& \frac{1}{N} \left| \sum_{i=1}^N\mathbb{E}_{A,S} [ R(A(S)) - \ell(A(S);Z_i)] \right| \\
=& \frac{1}{N} \left|\sum_{i=1}^N\mathbb{E}_{A, S,Z'_i} [\ell(A(S);Z'_i) - \ell(A(S);Z_i)] \right| \\
=& \frac{1}{N} \left|\sum_{i=1}^N\mathbb{E}_{A, S,Z'_i} [\ell(A(S);Z'_i) - \ell(A(S^{(i)});Z'_i)] \right| \\
\le& \frac{1}{N} \sum_{i=1}^N \left\|\ell(A(S);Z'_i) - \ell(A(S^{(i)});Z'_i) \right\|_2 \le \gamma_{\ms,N}.
\end{aligned}
\end{equation}
By standard risk decomposition we can show that
\[
\begin{aligned}
&\mathbb{E}_{A,S} \left[ R(A(S)) - R(w^*) \right] \\
=&  \mathbb{E}_{A,S} \left[R(A(S)) - R_S(A(S)) + R_S(A(S)) - R_S(w^*) + R_S(w^*) - R(w^*) \right] \\
\le& \left| \mathbb{E}_{A,S} \left[R(A(S)) - R_S(A(S)) \right] \right| + \Delta_{\opt} \\
\le& \gamma_{\ms, N} + \Delta_{\opt},
\end{aligned}
\]
where in the last inequality we have used~\eqref{inequat:risk_key1}.
\end{proof}

\begin{lemma}\label{thrm:stability_excess_risk_highprob}
Suppose that a randomized learning algorithm $A: \mathcal{Z}^N \mapsto \mathcal{W}$ has $L_2$-stability by $\gamma_{\ms,N}$. Then for any $\alpha,\delta \in (0,1)$ and $K \ge\frac{\log(2/\delta)}{1-\alpha} $, with probability at least $1 - \delta$ over the randomness of $\{(A_k, S_k)\}_{k\in [K]}$, the sequence $\{A_k(S_k)\}_{k \in [K]}$ generated by Algorithm~\ref{alg:randomized_model_selection} with the modified output satisfies
\[
\min_{k\in [K]} R( A_k(S_k)) - R(w^*) \lesssim \frac{1}{\alpha}\left( \gamma_{\ms,\frac{N}{K}} + \Delta_{\opt} \right).
\]
\end{lemma}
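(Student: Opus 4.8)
The plan is to mirror the argument behind Lemma~\ref{thrm:stability_genalization_error_highprob}, substituting the in-expectation excess risk bound from Lemma~\ref{lemma:stability_excessrisk} for the first moment generalization bound, and exploiting the fact that the excess risk is non-negative so that a one-sided Markov inequality is all that is needed. Note that the conclusion concerns $\min_{k\in[K]}R(A_k(S_k))-R(w^*)$, i.e.\ the best-in-hindsight candidate; it is agnostic to the selection rule, so the phrase ``with the modified output'' plays no role here and matters only when this lemma is fed into Theorem~\ref{thrm:stability_excess_risk_highprob_bagging}.

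First I would apply Lemma~\ref{lemma:stability_excessrisk} at a generic data set $\tilde S$ with $|\tilde S| = N/K$. Since $A$ has $L_2$-stability by $\gamma_{\ms,\cdot}$ and $\Delta_{\opt}$ is assumed invariant to the sample size, this gives $\mathbb{E}_{A,\tilde S}\big[R(A(\tilde S))-R(w^*)\big] \le \gamma_{\ms,N/K}+\Delta_{\opt}$. Each $A_k(S_k)$ is a fresh copy of $A(\tilde S)$ with $|\tilde S| = N/K$, so this expectation bound holds for every $k\in[K]$.

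Because $w^* = \argmin_{w\in\mathcal{W}}R(w)$, the quantity $R(A_k(S_k))-R(w^*)$ is non-negative, so Markov's inequality yields, for each fixed $k$,
\[
\mathbb{P}_{A_k,S_k}\Big(R(A_k(S_k))-R(w^*) \ge \tfrac{1}{\alpha}\big(\gamma_{\ms,N/K}+\Delta_{\opt}\big)\Big) \le \alpha .
\]
The pairs $\{(A_k,S_k)\}_{k\in[K]}$ are mutually independent (the $S_k$ are disjoint blocks of the i.i.d.\ sample and the internal random bits of the $A_k$ are independent), so the probability that this deviation occurs for all $K$ indices simultaneously is at most $\alpha^K$. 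Using $\log(1/\alpha)\ge 1-\alpha$ together with $K \ge \frac{\log(2/\delta)}{1-\alpha}$ gives $\alpha^K \le e^{-K(1-\alpha)} \le \delta/2 \le \delta$, so with probability at least $1-\delta$ at least one $k$ achieves $R(A_k(S_k))-R(w^*) < \frac{1}{\alpha}(\gamma_{\ms,N/K}+\Delta_{\opt})$, which is the claimed bound on the minimum.

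I do not expect a genuine obstacle here: this is the ``easy direction'' that the paper contrasts with the generalization-error analysis, precisely because the in-expectation excess risk bound is already in hand (Lemma~\ref{lemma:stability_excessrisk}) and non-negativity lets Markov's inequality apply without any two-sided concentration. The only points requiring care are bookkeeping the independence across the $K$ runs and the elementary estimate $\log(1/\alpha)\ge 1-\alpha$; the factor $2$ inside $\log(2/\delta)$ is retained so that this event can later be combined with a Hoeffding-type bound via a union bound in the proof of Theorem~\ref{thrm:stability_excess_risk_highprob_bagging}.
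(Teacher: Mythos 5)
Your proposal is correct and follows essentially the same route as the paper: apply Lemma~\ref{lemma:stability_excessrisk} to each $N/K$-sized block, use non-negativity of the excess risk plus Markov's inequality to bound each failure probability by $\alpha$, and multiply over the $K$ independent runs to get $\alpha^K\le\delta$ under the stated condition on $K$. Your additional remarks (that the selection rule is irrelevant to this particular statement, and the explicit estimate $\log(1/\alpha)\ge 1-\alpha$) are accurate refinements of what the paper leaves implicit.
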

\begin{proof}
Recall the modified output $A_{k^*}(S_{k^*})$ where $k^* = \argmin_{k\in [K]} R_{S\setminus S_k }(A_k(S_k))$. From Lemma~\ref{lemma:stability_excessrisk} we have that over randomized algorithm $A$ and data $\tilde S$ with $|\tilde S|=\frac{N}{K}$,
\[
\mathbb{E}_{A,\tilde S} \left[R(A(\tilde S)) - R(w^*) \right] \le \gamma_{\ms,\frac{N}{K}} + \Delta_{\opt}.
\]
Since $\{A_k,S_k\}_{k\in[K]}$ are independent to each other, by Markov inequality we know that
\[
\mathbb{P}_{\{A_k,S_k\}} \left(\min_{k\in [K]} R( A_k(S_k)) - R( w^*) \ge \frac{1}{\alpha}\left( \gamma_{\ms,\frac{N}{K}} + \Delta_{\opt} \right)\right) \le \alpha^K \le \delta,
\]
which implies the desired bound in part(b).
\end{proof}

Next we proceed to prove the main result in Theorem~\ref{thrm:stability_excess_risk_highprob_bagging}.
\begin{proof}[Proof of Theorem~\ref{thrm:stability_excess_risk_highprob_bagging}]
Let us consider the following three events:
\[
\begin{aligned}
\mathcal{E}:=& \left\{ \left|R(A_{k^*}(S_{k^*})) - R_S(A_{k^*}(S_{k^*}))\right| \lesssim \frac{1}{\alpha}\left(\gamma_{\ms,\frac{N}{K}} + \Delta_{\opt} \right)  + M\sqrt{\frac{K\log(K/\delta)}{(K-1)N}} \right\}, \\
\mathcal{E}_1:=& \left\{ \max_{k\in [K]} |R(A_k(S_k)) - R_{S\setminus S_k} (A_k(S_k)) | \lesssim M\sqrt{\frac{K\log(K/\delta)}{(K-1)N}}\right\}, \\
\mathcal{E}_2:=& \left\{ \min_{k\in [K]} R(A_k(S_k)) - R(w^*) \lesssim \frac{1}{\alpha}\left(\gamma_{\ms,\frac{N}{K}} + \Delta_{\opt} \right) \right\}.
\end{aligned}
\]
Similarly, we show that $\mathcal{E} \supseteq \mathcal{E}_1 \cap \mathcal{E}_2$. Indeed, suppose that $\mathcal{E}_1$ and $\mathcal{E}_2$ simultaneously occur. Consequently the following inequality is valid:
\[
\begin{aligned}
&R(A_{k^*}(S_{k^*})) - R(w^*) \\
=& R(A_{k^*}(S_{k^*})) - R_{S\setminus S_{k^*}}(A_{k^*}(S_{k^*})) + R_{S\setminus S_{k^*}}(A_{k^*}(S_{k^*})) - R(w^*) \\
\overset{\zeta_1}{=}& R(A_{k^*}(S_{k^*})) - R_{S\setminus S_{k^*}}(A_{k^*}(S_{k^*})) + \min_{k\in[K]}R_{S\setminus S_{k}}(A_{k}(S_{k})) - R(w^*) \\
=& R(A_{k^*}(S_{k^*})) - R_{S\setminus S_{k^*}}(A_{k^*}(S_{k^*})) + \min_{k\in[K]} \left\{R_{S\setminus S_{k}}(A_{k}(S_{k})) - R(A_{k}(S_{k})) + R(A_{k}(S_{k})) - R(w^*)\right\} \\
\le& \min_{k\in[K]} (R(A_k(S_k)) - R(w^*)) + 2\max_{k\in [K]}\left|R_{S\setminus S_k}(A_{k}(S_k)) - R(A_{k}(S_k))\right| \\
\overset{\zeta_2}{\lesssim} & \frac{1}{\alpha}\left(\gamma_{\ms,\frac{N}{K}} + \Delta_{\opt}\right) + M \sqrt{\frac{K\log(K/\delta)}{(K-1)N}},
\end{aligned}
\]
where in ``$\zeta_1$'' we have used the definition of $k^*$, and ``$\zeta_2$'' follows from $\mathcal{E}_1, \mathcal{E}_2$. With leading terms preserved in the above we can see that $\mathcal{E}$ occurs.

Based on the identical proof argument as that of the part(b) of Theorem~\ref{thrm:stability_genalization_error_highprob_bagging} we can show that $\mathbb{P}_{S,\{A_k\}} (\overline{\mathcal{E}_1}) \le \frac{\delta}{2}$.
Further, from the part(b) of Lemma~\ref{thrm:stability_excess_risk_highprob} we have $\mathbb{P}_{S,\{A_k\}} (\bar{\mathcal{E}_2}) \le \frac{\delta}{2}$. Combining this and the preceding bound yields
\[
\mathbb{P}_{S, \{A_k\}} \left(\mathcal{E}\right) \ge \mathbb{P}_{S,\{A_k\}}\left(\mathcal{E}_1\cap \mathcal{E}_2 \right) \ge 1- \mathbb{P}_{S, \{A_k\}} \left(\bar{\mathcal{E}_1}\right) - \mathbb{P}_{S,\{A_k\}} (\bar{\mathcal{E}_2}) \ge 1-\frac{\delta}{2} - \frac{\delta}{2} = 1-\delta.
\]
This implies the desired result in part(b) as $K/(K-1)\le 2$ for $K\ge 2$.
\end{proof}

\section{Proofs for the Results in Section~\ref{sect:implication_sgd}}
\label{apdsect:proof_sgd_results}
In this section, we present the technical proofs for the main results stated in Section~\ref{sect:implication_sgd}.

\subsection{Proof of Corollary~\ref{corol:stability_sgd_w_convex_smooth}}
\label{apdsect:proof_lemma:uniform_stability_sgd_w_sm_cvx}

We begin with presenting and proving the following lemma that gives the $L_2$-stability bounds for $A_{\sgdw}$ on convex and smooth loss functions such as logistic loss.

\begin{lemma}\label{lemma:uniform_stability_sgd_w_sm_cvx}
Suppose that the loss function is $\ell(\cdot; \cdot)$ is convex, $G$-Lipschitz and $L$-smooth with respect to its first argument. Assume that $\eta_t\le 2/L$ for all $t\ge 1$. Then $A_{\sgdw}$ has $L_2$-stability by
\[
\gamma_{\ms,N}  = G^2\sqrt{\frac{40}{N}\left(\sum_{t=1}^T \eta^2_t + \frac{1}{N} \left(\sum_{t=1}^T \eta_t\right)^2\right)}.
\]
\end{lemma}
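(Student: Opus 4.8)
The plan is to adapt the classical coupling argument of Hardt--Recht--Singer to the $L_2$-stability criterion, taking the extra care needed to control the relevant trajectory quantity \emph{in $L_2$} rather than merely in expectation. Fix $i\in[N]$ and run $A_{\sgdw}$ twice, once on $S$ and once on $S^{(i)}$, feeding both runs the \emph{same} sequence of sampled indices $\xi_1,\dots,\xi_T$ --- this is exactly the "shared random bits" stipulated in Definition~\ref{def:mean_uniform_stability}. Write $w_t,w'_t$ for the two iterate sequences, both started at the fixed $w_0$, and set $\delta_t:=w_t-w'_t$, so $\delta_0=0$.

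First I would establish a one-step growth recursion for $\|\delta_t\|$. When $\xi_t\neq i$, both iterates take a gradient step on the \emph{same} function $\ell(\cdot;Z_{\xi_t})$, so Lemma~\ref{lemma:non_expansion_convex} (applicable since $\ell$ is convex, $L$-smooth and $\eta_t\le 2/L$) together with non-expansiveness of the projection $\Pi_{\mathcal{W}}$ gives $\|\delta_t\|\le\|\delta_{t-1}\|$. When $\xi_t=i$, the two updates use the distinct functions $\ell(\cdot;Z_i)$ and $\ell(\cdot;Z'_i)$; inserting $\pm\eta_t\nabla_w\ell(w'_{t-1};Z_i)$, applying the same non-expansiveness estimate to the matching-function part and $G$-Lipschitzness (i.e.\ $\|\nabla_w\ell\|\le G$) to the mismatch part, yields $\|\delta_t\|\le\|\delta_{t-1}\|+2G\eta_t$. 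Unrolling from $\delta_0=0$ gives $\|\delta_t\|\le 2G\sum_{s\le t,\ \xi_s=i}\eta_s$ for every $t$, hence for the averaged outputs $\|\bar w_T-\bar w'_T\|\le\frac1T\sum_{t\le T}\|\delta_t\|\le 2G D_i$ where $D_i:=\sum_{t=1}^T\eta_t\,\mathbf{1}_{\{\xi_t=i\}}$. Finally, $G$-Lipschitzness of $\ell$ in its first argument converts this into the pointwise bound $|\ell(A_{\sgdw}(S);Z)-\ell(A_{\sgdw}(S^{(i)});Z)|\le 2G^2 D_i$, valid for every realization of the data, the sampling, and the test point $Z$.

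The remaining, and conceptually central, step is the second-moment computation. Since $D_i$ is a function of the i.i.d.\ sampling indices $\{\xi_t\}$ alone, which are independent of $S$, $S^{(i)}$ and $Z$, I would compute $\mathbb{E}[D_i^2]=\frac1N\sum_{t=1}^T\eta_t^2+\frac1{N^2}\sum_{t\neq s}\eta_t\eta_s\le\frac1N\sum_{t=1}^T\eta_t^2+\frac1{N^2}\big(\sum_{t=1}^T\eta_t\big)^2$, using $\mathbb{E}[\mathbf{1}_{\{\xi_t=i\}}]=1/N$ and independence across $t$. Taking $L_2$ norms in the pointwise bound then gives
\[
\big\|\ell(A_{\sgdw}(S);Z)-\ell(A_{\sgdw}(S^{(i)});Z)\big\|_2 \;\le\; 2G^2\sqrt{\mathbb{E}[D_i^2]} \;\le\; 2G^2\sqrt{\tfrac1N\Big(\textstyle\sum_{t=1}^T\eta_t^2+\tfrac1N\big(\sum_{t=1}^T\eta_t\big)^2\Big)},
\]
which is the asserted form; the larger numerical constant $\sqrt{40}$ in the statement simply absorbs the slack incurred if the growth recursion and moment bookkeeping are carried out with crude constants (e.g.\ squaring the recursion term by term rather than taking norms directly).

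The main obstacle is precisely this last point. One must square the trajectory estimate and take expectations \emph{before} invoking Jensen/Cauchy--Schwarz on the loss difference: an in-expectation bound on $\|\delta_T\|$ alone would only yield hypothesis ($L_1$) stability, not the $L_2$-stability needed for Lemma~\ref{lemma:stability_genalization_first_order_moment} and Theorem~\ref{thrm:stability_genalization_error_highprob_bagging}. It is exactly the with-replacement sampling that makes this tractable --- it renders $D_i$ a sum of \emph{independent} Bernoulli-weighted step sizes, so its second moment is explicitly computable and produces the characteristic $\frac1N\sum\eta_t^2+\frac1{N^2}(\sum\eta_t)^2$ expression (the cross terms being suppressed by the extra $1/N$). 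The without-replacement case handled in the appendix requires a small variant of this variance computation.
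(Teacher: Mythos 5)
Your proof is correct, and it follows the same coupling strategy as the paper: run the two trajectories on $S$ and $S^{(i)}$ with shared sampling indices, split each step into the ``same data point'' case (non-expansive by Lemma~\ref{lemma:non_expansion_convex}) and the ``perturbed index'' case (expansion by at most $2G\eta_t$), and exploit that $\mathbf{1}_{\{\xi_t=i\}}$ is an i.i.d.\ $\texttt{Bernoulli}(1/N)$ sequence to compute the second moment exactly, which is indeed where the $\frac{1}{N}\sum_t\eta_t^2+\frac{1}{N^2}\big(\sum_t\eta_t\big)^2$ structure comes from. The one genuine difference is how the per-step recursion is resolved: the paper squares the recursion first, obtaining $\|w_t-w_t'\|^2\le\sum_{\tau\le t}4G\beta_\tau\eta_\tau\|w_{\tau-1}-w_{\tau-1}'\|+\sum_{\tau\le t}4G^2\beta_\tau\eta_\tau^2$, and then invokes the auxiliary sequence lemma (Lemma~\ref{lemma:key_sequence_bound}) to extract $\|w_t-w_t'\|\le 2G\sqrt{\sum_\tau\beta_\tau\eta_\tau^2}+4G\sum_\tau\beta_\tau\eta_\tau$ before squaring again, which is what produces the constant $40$. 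You instead unroll the \emph{linear} recursion $\|\delta_t\|\le\|\delta_{t-1}\|+2G\eta_t\mathbf{1}_{\{\xi_t=i\}}$ directly to get the pointwise bound $\|\delta_t\|\le 2G D_i$ and only square at the very end; this bypasses Lemma~\ref{lemma:key_sequence_bound} entirely and yields the sharper constant $2G^2\sqrt{\mathbb{E}[D_i^2]}$, i.e.\ $\sqrt{4}$ in place of $\sqrt{40}$, which of course still establishes the lemma as stated (and your closing remark correctly identifies the source of the paper's extra slack). Your version is the more elementary route for this convex--smooth case; the paper's squared-recursion machinery is the one that generalizes to the non-convex setting where the contraction factor is $(1+\eta_tL)$ and a coupled recursion in $\mathbb{E}\|\delta_t\|$ and $\mathbb{E}\|\delta_t\|^2$ is genuinely needed.
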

\begin{proof}
For any pair of data sets $S,S'$ that differ in a single element, let us define the sequences $\{w_t\}_{t\in[T]}$ and $\{w'_t\}_{t\in [T]}$ that are respectively generated over $S$ and $S'$ via $A_{\sgdw}$  via sample path $\xi=\{\xi_t\}_{t\in [T]}$. Note by assumption that $w_0=w'_0$. We distinguish the following two complementary cases.

\textbf{Case I: $z_{\xi_t} = z'_{\xi_t}$.} In this case, by invoking Lemma~\ref{lemma:non_expansion_convex} we immediately get
\begin{equation}\label{inequat:distance_expansion_1}
\begin{aligned}
\|w_t - w'_t\|^2 =& \|\Pi_{\mathcal{W}}(w_{t-1} - \eta_t \nabla_w \ell(w_{t-1};z_{\xi_t})) - \Pi_{\mathcal{W}}(w'_{t-1} - \eta_t \nabla_w \ell(w'_{t-1};z'_{\xi_t}))\|^2 \\
\le& \|w_{t-1} - \eta_t \nabla_w \ell(w_{t-1};z_{\xi_t}) - (w'_{t-1} - \eta_t \nabla_w \ell(w'_{t-1};z'_{\xi_t}))\|^2 \\
\le& \|w_{t-1} - w'_{t-1}\|^2.
\end{aligned}
\end{equation}

\textbf{Case II: $z_{\xi_t} \neq z'_{\xi_t}$.} In this case, we have
\begin{equation}\label{inequat:distance_expansion_2}
\begin{aligned}
\|w_t - w'_t\|^2 =& \|\Pi_{\mathcal{W}}(w_{t-1} - \eta_t \nabla f(w)) - \Pi_{\mathcal{W}}(w' - \alpha \nabla f(w'))\|^2 \\
\le& \|w_{t-1} - \eta_t \nabla_w \ell(w_{t-1};z_{\xi_t}) - (w'_{t-1} - \eta_t \nabla_w \ell(w'_{t-1};z'_{\xi_t}))\|^2 \\
\le& \left(\|w_{t-1} - w'_{t-1}\| + \eta_t (\| \nabla_w \ell(w_{t-1};z_{\xi_t})\| + \| \nabla_w \ell(w'_{t-1};z'_{\xi_t})\|)\right)^2\\
\le& \left(\|w_{t-1} - w'_{t-1}\| + 2G\eta_t\right)^2\\
=& \|w_{t-1} - w'_{t-1}\|^2 + 4G\eta_t \|w_{t-1} - w'_{t-1}\| + 4G^2\eta^2_t,
\end{aligned}
\end{equation}
where in the last but inequality we have used $\ell(\cdot;\cdot)$ is $G$-Lipschitz with respect to its first argument.

Let $\beta_t=\beta_t(S,S',\xi) := \mathbf{1}_{\left\{z_{\xi_t}\neq z'_{\xi_t}\right\}}$ be the random indication function associated with event $z_{\xi_t}\neq z'_{\xi_t}$. Based on the recursion forms~\eqref{inequat:distance_expansion_1} and~\eqref{inequat:distance_expansion_2} and the condition $w_0=w'_0$ we can show that for all $t\ge 1$,
\[
\|w_t - w'_t\|^2 \le \sum_{\tau=1}^t 4G\beta_\tau\eta_\tau \|w_{\tau-1} - w'_{\tau-1}\| + \sum_{\tau=1}^t 4G^2\beta_\tau \eta^2_\tau.
\]
Then applying Lemma~\ref{lemma:key_sequence_bound} with simple algebraic manipulation yields
\[
\|w_t - w'_t\|^2 \le 8G^2\left(\sum_{\tau=1}^t \beta_\tau \eta^2_\tau + 4 \left(\sum_{\tau=1}^t \beta_\tau\eta_\tau\right)^2\right).
\]
Since by assumption $S$ and $S'$ differ only in a single element, under the scheme of uniform sampling without replacement, we can see that $\beta_t(S,S',\xi) \sim \texttt{Bernoulli}(1/N)$ and $\{\beta_t(S,S',\xi)\}$ is an i.i.d. sequence of Bernoulli random variables. It follows that
\[
\begin{aligned}
&\mathbb{E}_{\xi_{[t]}}\left[\|w_t - w'_t\|^2 \right] \\
\le& 8G^2\left(\sum_{\tau=1}^t \mathbb{E}_{\xi_{[t]}}[\beta_\tau] \eta^2_\tau + 4 \mathbb{E}_{\xi_{[t]}}\left[\left(\sum_{\tau=1}^t \beta_\tau\eta_\tau\right)^2\right]\right)\\
=& 8G^2\left(\sum_{\tau=1}^t \mathbb{E}_{\xi_{[t]}}[\beta_\tau + 4\beta^2_\tau] \eta^2_\tau + 4 \sum_{\tau\neq \tau'} 1 \mathbb{E}_{\xi_{[t]}}\left[\beta_\tau\beta_{\tau'}\right]\eta_\tau\eta_{\tau'}\right) \\
=& 8G^2\left(\frac{5}{N}\sum_{\tau=1}^t \eta^2_\tau + \frac{4}{N^2} \left(\sum_{\tau=1}^t \eta_\tau\right)^2\right) \le 40G^2\left(\frac{1}{N}\sum_{\tau=1}^T \eta^2_\tau + \frac{1}{N^2} \left(\sum_{\tau=1}^T \eta_\tau\right)^2\right),
\end{aligned}
\]
where we have used $\mathbb{E}_{\xi_t}[\beta_t] = \mathbb{E}_{\xi_t}[\beta^2_t] =\frac{1}{N}$. The convexity of squared Euclidean norm leads to
\[
 \mathbb{E}_{\xi}\left[\|\bar w_T - \bar w'_T\|^2\right] \le \frac{\sum_{t=1}^{T} \mathbb{E}_{\xi_{[t]}}\left[\|w_t - w'_t\|^2\right] }{T}\le 40G^2\left(\frac{1}{N}\sum_{t=1}^T \eta^2_t + \frac{1}{N^2} \left(\sum_{t=1}^T \eta_t\right)^2\right) .
\]
Since the above holds for any $S\doteq S'$, we have that for all $i\in [N]$,
\[
\mathbb{E}_{\xi,S,S^{(i)}} \left[\|\bar w_T - \bar w^{(i)}_T\|^2\right] \le \frac{40G^2}{N}\left(\sum_{t=1}^T \eta^2_t + \frac{1}{N} \left(\sum_{t=1}^T \eta_t\right)^2\right),
\]
where $\{w^{(i)}_t\}_{t\in [T]}$ is generated over $S^{(i)}$ via $A_{\sgdw}$. Finally, since the loss is $G$-Lipschitz, it follows from the above that for all $i\in [N]$,
\[
\left\|\ell(\bar w_T;Z) - \ell(\bar w^{(i)}_T;Z) \right\|_2 \le G \left\|\bar w_T - \bar w^{(i)}_T\right\|_2 \le G^2\sqrt{\frac{40}{N}\left(\sum_{t=1}^T \eta^2_t + \frac{1}{N} \left(\sum_{t=1}^T \eta_t\right)^2\right)}.
\]
This proves the desired $L_2$-stability bound.
\end{proof}

With Lemma~\ref{lemma:uniform_stability_sgd_w_sm_cvx} in place, we are ready to prove Corollary~\ref{corol:stability_sgd_w_convex_smooth}.

\begin{proof}[Proof of Corollary~\ref{corol:stability_sgd_w_convex_smooth}]
From Lemma~\ref{lemma:uniform_stability_sgd_w_sm_cvx} we know that $A_{\sgdw}$ has $L_2$-stability with parameter
\[
\gamma_{\ms, N}=G^2\sqrt{\frac{40}{N}\left(\sum_{t=1}^T \eta^2_t + \frac{1}{N} \left(\sum_{t=1}^T \eta_t\right)^2\right)}.
\]
The desired results then follow immediately via invoking Theorem~\ref{thrm:stability_genalization_error_highprob_bagging} with $\alpha=1/2$.
\end{proof}

\subsection{Proof of Corollary~\ref{corol:stability_sgd_w_nsm_cvx}}
\label{apdsect:proof_lemma:uniform_stability_sgd_w_nsm_cvx}

We first establish the following lemma on the mean(-square)-uniform stability of $A_{\sgdw}$ in the case of non-smooth convex loss.
\begin{lemma}\label{lemma:uniform_stability_sgd_w_nsm_cvx}
Suppose that the loss function is $\ell(\cdot; \cdot)$ is convex and $G$-Lipschitz with respect to its first argument. Then $A_{\sgdw}$ has $L_2$-stability by
\[
\gamma_{\ms,N}= G^2 \sqrt{40\sum_{t=1}^{T}\eta^2_t + \frac{32}{N^2} \left(\sum_{t=1}^T \eta_t\right)^2}.
\]
\end{lemma}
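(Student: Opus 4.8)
The plan is to repeat the structure of the proof of Lemma~\ref{lemma:uniform_stability_sgd_w_sm_cvx} almost verbatim, replacing the non-expansiveness of the gradient step (Lemma~\ref{lemma:non_expansion_convex}), which is unavailable here, by the weaker per-step estimates that hold for subgradient steps on convex $G$-Lipschitz objectives. Fix a pair $S\doteq S'$ differing only at index $i_0$, run $A_{\sgdw}$ on $S$ and $S'$ along a common sampling path $\xi=\{\xi_t\}$ from $w_0=w'_0$, and put $\beta_t:=\mathbf{1}_{\{\xi_t=i_0\}}$; the two iterations use the same objective at step $t$ iff $\beta_t=0$. When $\beta_t=0$, the projection is non-expansive, the cross term $\langle g_{t-1}-g'_{t-1},\,w_{t-1}-w'_{t-1}\rangle$ is nonnegative by monotonicity of subgradients of the common loss, and $\|g_{t-1}-g'_{t-1}\|\le 2G$, so $\|w_t-w'_t\|^2\le\|w_{t-1}-w'_{t-1}\|^2+4G^2\eta_t^2$; when $\beta_t=1$, the triangle inequality and $G$-Lipschitzness give $\|w_t-w'_t\|^2\le\|w_{t-1}-w'_{t-1}\|^2+4G\eta_t\|w_{t-1}-w'_{t-1}\|+4G^2\eta_t^2$. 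Combining the two cases and telescoping from $w_0=w'_0$,
\[
\|w_t-w'_t\|^2\;\le\;4G^2\sum_{\tau=1}^t\eta_\tau^2\;+\;4G\sum_{\tau=1}^t\eta_\tau\beta_\tau\|w_{\tau-1}-w'_{\tau-1}\| .
\]

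The crucial difference from the smooth proof is that the first sum carries no factor $\beta_\tau$ — a subgradient step on the common non-smooth loss is only approximately non-expansive, losing $\mathcal{O}(G^2\eta_t^2)$ per iteration. Applying Lemma~\ref{lemma:key_sequence_bound} (after the same reindexing of the recursion as in the smooth case) gives $\|w_t-w'_t\|^2\le 8G^2\sum_{\tau=1}^t\eta_\tau^2+32G^2\bigl(\sum_{\tau=1}^t\eta_\tau\beta_\tau\bigr)^2$. Taking expectation over $\xi$ and using that under with-replacement sampling $\{\beta_t\}$ is an i.i.d.\ $\mathrm{Bernoulli}(1/N)$ sequence, so $\mathbb{E}_\xi[\beta_\tau]=\mathbb{E}_\xi[\beta_\tau^2]=1/N$ and $\mathbb{E}_\xi[\beta_\tau\beta_{\tau'}]=1/N^2$ for $\tau\ne\tau'$, one obtains $\mathbb{E}_\xi\bigl[(\sum_{\tau=1}^t\eta_\tau\beta_\tau)^2\bigr]\le\frac1N\sum_{\tau=1}^t\eta_\tau^2+\frac1{N^2}(\sum_{\tau=1}^t\eta_\tau)^2$, and hence
\[
\mathbb{E}_\xi\bigl[\|w_t-w'_t\|^2\bigr]\;\le\;\Bigl(8+\tfrac{32}{N}\Bigr)G^2\sum_{\tau=1}^t\eta_\tau^2+\frac{32G^2}{N^2}\Bigl(\sum_{\tau=1}^t\eta_\tau\Bigr)^2\;\le\;40G^2\sum_{\tau=1}^T\eta_\tau^2+\frac{32G^2}{N^2}\Bigl(\sum_{\tau=1}^T\eta_\tau\Bigr)^2 ,
\]
where the constant $40$ absorbs $8+32/N$ since $N\ge 1$; note that no step-size restriction is needed.

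Finally, by convexity of $\|\cdot\|^2$ the same bound holds for $\mathbb{E}_\xi[\|\bar w_T-\bar w'_T\|^2]$, and since it is uniform over all $S\doteq S'$ it holds with $S$ replaced by $S^{(i)}$ for every $i\in[N]$; then $G$-Lipschitzness of $\ell$ yields $\|\ell(\bar w_T;Z)-\ell(\bar w^{(i)}_T;Z)\|_2\le G\|\bar w_T-\bar w^{(i)}_T\|_2=G^2\sqrt{40\sum_{t=1}^T\eta_t^2+\frac{32}{N^2}(\sum_{t=1}^T\eta_t)^2}$, which is the claimed $\gamma_{\ms,N}$. The computation is routine once the two per-step bounds are in place; the main (conceptual rather than technical) point is the appearance of the $\sum_t\eta_t^2$ term without the $1/N$ factor present in the smooth bound, which is precisely the price of non-smoothness and cannot be removed. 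One could alternatively derive the pathwise estimate directly from the trajectory-deviation bound of~\citet{bassily2020stability} (Lemma~\ref{lemma:expansion_convex_nonsm}) instead of re-deriving the recursion, but in either approach it is the $L_2$-averaging over the sampling randomness — exploiting the i.i.d.\ Bernoulli structure of $\{\beta_t\}$ — that converts the worst-case deviation into the stated $L_2$-stability parameter.
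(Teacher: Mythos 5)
Your proof is correct and follows essentially the same route as the paper's: the paper simply cites the trajectory-deviation bound of \citet{bassily2020stability} (Lemma~\ref{lemma:expansion_convex_nonsm}) to obtain the pathwise estimate $\|w_t-w'_t\|\le 2G\sqrt{\sum_\tau\eta_\tau^2}+4G\sum_\tau\beta_\tau\eta_\tau$, then squares it, takes expectation over the i.i.d.\ Bernoulli indicators, and converts via Lipschitzness exactly as you do, arriving at the same constants $40$ and $32$. Your only deviation is to re-derive that pathwise bound from the two per-step estimates together with Lemma~\ref{lemma:key_sequence_bound} (as you yourself note), which makes the argument self-contained but is otherwise identical.
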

\begin{proof}
Let us define the sequences $\{w_t\}_{t\in[T]}$ and $\{w'_t\}_{t\in [T]}$ that are respectively generated over $S$ and $S'$ via $A_{\sgdw}$  via sample path $\xi=\{\xi_t\}_{t\in [T]}$. Suppose that $S\doteq S'$ and consider a hitting time variable $t_0 = \inf\{ t: z_{\xi_t}\neq z'_{\xi_t}\}$. Let $\beta_t=\beta_t(S,S',\xi) := \mathbf{1}_{\left\{z_{\xi_t}\neq z'_{\xi_t}\right\}}$ be the random indication function associated with event $z_{\xi_t}\neq z'_{\xi_t}$. Then $\{\beta_t\}$ is an i.i.d. sequence of $\texttt{Bernoulli}(1/N)$ random variables. Conditioned on $t_0$, it has been shown by~\citet[Lemma 3.1]{bassily2020stability} that
\begin{equation}\label{inequat:bassily_key}
\|w_t - w'_t\| \le 2G\sqrt{\sum_{\tau=t_0}^{t}\eta^2_\tau} + 4G\sum_{\tau=t_0+1}^t \beta_\tau \eta_\tau \le 2G\sqrt{\sum_{\tau=1}^{t}\eta^2_\tau} + 4G\sum_{\tau=1}^t \beta_\tau \eta_\tau.
\end{equation}
Based on the square of the bound~\eqref{inequat:bassily_key} we can show that
\[
\begin{aligned}
\mathbb{E}_{\xi_{[t]}}\left[\|w_t - w'_t\|^2 \right] \le& \mathbb{E}_{\xi_{[t]}}\left[8G^2\sum_{\tau=1}^{t}\eta^2_\tau + 32G^2\left(\sum_{\tau=1}^t \beta_\tau \eta_\tau\right)^2\right]\\
=& 8G^2\sum_{\tau=1}^{t}\eta^2_\tau + 32G^2 \mathbb{E}_{\xi_{[t]}}\left[\sum_{\tau=1}^t \beta^2_\tau \eta^2_\tau + \sum_{\tau \neq \tau'} \beta_\tau \beta_{\tau'}\eta_\tau\eta_{\tau'}\right]\\
=& 8G^2\sum_{\tau=1}^{t}\eta^2_\tau + 32G^2 \left(\frac{1}{N}\sum_{\tau=1}^t \eta^2_\tau + \frac{1}{N^2}\sum_{\tau \neq \tau'} \eta_\tau\eta_{\tau'}\right)\\
\le& 40G^2\sum_{\tau=1}^{t}\eta^2_\tau + \frac{32G^2}{N^2} \left(\sum_{\tau=1}^t \eta_\tau\right)^2,
\end{aligned}
\]
where we have used $\mathbb{E}_{\xi_t}[\beta_t] = \mathbb{E}_{\xi_t}[\beta^2_t] =\frac{1}{N}$. It follows directly from the convexity of squared loss that
\[
 \mathbb{E}_{\xi_{[T]}}\left[\|\bar w_T - \bar w'_T\|^2\right] \le 40G^2\sum_{t=1}^{T}\eta^2_t + \frac{32G^2}{N^2} \left(\sum_{t=1}^T \eta_t\right)^2.
\]
Since the above holds for any $S\doteq S'$, we have that for all $i\in [N]$,
\[
\mathbb{E}_{\xi,S,S^{(i)}} \left[\|\bar w_T - \bar w^{(i)}_T\|^2\right] \le 40G^2\sum_{t=1}^{T}\eta^2_t + \frac{32G^2}{N^2} \left(\sum_{t=1}^T \eta_t\right)^2,
\]
where $\{w^{(i)}_t\}_{t\in [T]}$ is generated over $S^{(i)}$ via $A_{\sgdw}$. Finally, since the loss is $G$-Lipschitz, it follows from the above that for all $i\in [N]$,
\[
\left\|\ell(\bar w_T;Z) - \ell(\bar w^{(i)}_T;Z) \right\|_2 \le G \left\|\bar w_T - \bar w^{(i)}_T\right\|_2 \le G^2\sqrt{40\sum_{t=1}^{T}\eta^2_t + \frac{32}{N^2} \left(\sum_{t=1}^T \eta_t\right)^2}.
\]
This proves the desired $L_2$-stability bound.
\end{proof}
Equipped with Lemma~\ref{lemma:uniform_stability_sgd_w_nsm_cvx}, we are now in the position to prove Corollary~\ref{corol:stability_sgd_w_nsm_cvx}.
\begin{proof}[Proof of Corollary~\ref{corol:stability_sgd_w_nsm_cvx}]
From Lemma~\ref{lemma:uniform_stability_sgd_w_nsm_cvx} we know that $A_{\sgdw}$ with non-smooth convex loss has $L_2$-stability by
\[
\gamma_{\ms, N} = G^2\sqrt{40\sum_{t=1}^{T}\eta^2_t + \frac{32}{N^2} \left(\sum_{t=1}^T \eta_t\right)^2}.
\]
The desired results then follow immediately via invoking Theorem~\ref{thrm:stability_genalization_error_highprob_bagging} with $\alpha=1/2$.
\end{proof}

\subsection{Proof of Corollary~\ref{corol:stability_sgd_w_sm_ncvx}}
\label{apdsect:proof_lemma:uniform_stability_sgd_w_sm_ncvx}

We first establish the following lemma on the mean(-square)-uniform stability of $A_{\sgdw}$ in the considered non-convex regime.
\begin{lemma}\label{lemma:uniform_stability_sgd_w_sm_ncvx}
Suppose that the loss function is $\ell(\cdot; \cdot)$ is $G$-Lipschitz and $L$-smooth with respect to its first argument. Consider $\eta_t\le 1/L$. For all $t\ge 1$, let
\[
u_t:=\eta^2_t + 2\eta_t\sum_{\tau =1}^{t-1} \exp\left(L\sum_{i=\tau+1}^{t-1} \eta_i \right) \eta_\tau.
\]
Then $A_{\sgdw}$ has $L_2$-stability by
\[
\gamma_{\ms,N} = 2G^2\sqrt{\frac{1}{N} \sum_{t=1}^T \exp\left(3L\sum_{\tau=t+1}^T \eta_\tau  \right) u_t}.
\]
\end{lemma}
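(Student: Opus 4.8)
The plan is to adapt the argument behind Lemma~\ref{lemma:uniform_stability_sgd_w_sm_cvx}, the only structural change being that without convexity a gradient step on an $L$-smooth loss is no longer non-expansive (as in Lemma~\ref{lemma:non_expansion_convex}) but merely $(1+\eta_t L)$-Lipschitz, so the distance between the two trajectories now accumulates a geometric factor that I will carry through exponential weights. I would first reduce the claimed $L_2$-stability to a second-moment iterate bound: since $\ell(\cdot;Z)$ is $G$-Lipschitz we have $\|\ell(\bar w_T;Z)-\ell(\bar w_T^{(i)};Z)\|_2\le G\|\bar w_T-\bar w_T^{(i)}\|_2$, and by convexity of $\|\cdot\|^2$ together with $\bar w_T=\frac1T\sum_{t\in[T]}w_t$ it is enough to bound $\mathbb{E}_\xi[\|w_t-w_t'\|^2]$ for the two sequences $\{w_t\},\{w_t'\}$ generated by $A_{\sgdw}$ from $w_0=w_0'$ over a fixed pair $S\doteq S'$ along a shared sampling path $\xi$; since the resulting bound will not depend on the pair, the supremum over $S\doteq S'$ (and hence the expectation over $S,S^{(i)},Z$) comes for free.

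\textbf{Per-step recursion and a pointwise bound.} Setting $\delta_t:=\|w_t-w_t'\|$ and letting $\beta_t:=\mathbf{1}_{\{z_{\xi_t}\neq z'_{\xi_t}\}}$ be the i.i.d.\ $\texttt{Bernoulli}(1/N)$ divergence indicator, the same two-case analysis as in~\eqref{inequat:distance_expansion_1}--\eqref{inequat:distance_expansion_2} gives, on $\{\beta_t=0\}$, $\delta_t\le(1+\eta_t L)\delta_{t-1}$ (using that $w\mapsto w-\eta_t\nabla\ell(w;z)$ is $(1+\eta_t L)$-Lipschitz by $L$-smoothness together with non-expansiveness of $\Pi_{\mathcal{W}}$), and on $\{\beta_t=1\}$, $\delta_t\le(1+\eta_t L)\delta_{t-1}+2G\eta_t$ by the triangle inequality and the $G$-Lipschitz gradient bound. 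Hence uniformly $\delta_t\le(1+\eta_t L)\delta_{t-1}+2G\eta_t\beta_t$, and unrolling from $\delta_0=0$ with $\beta_t\le1$ yields the \emph{deterministic} pointwise estimate
\[
\delta_{t-1}\;\le\;2G\sum_{\tau=1}^{t-1}\Big(\prod_{i=\tau+1}^{t-1}(1+\eta_i L)\Big)\eta_\tau\;\le\;2G\sum_{\tau=1}^{t-1}\exp\!\Big(L\!\!\sum_{i=\tau+1}^{t-1}\!\!\eta_i\Big)\eta_\tau ,
\]
which up to the factor $2G\eta_t$ is exactly the double-sum term of $u_t$.

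\textbf{Second moment and unrolling.} Next I would square the one-step recursion and take expectation over $\xi$: since $\beta_t$ (a function of $\xi_t$) is independent of $\delta_{t-1}$ (a function of $\xi_1,\dots,\xi_{t-1}$) and $\mathbb{E}[\beta_t]=\mathbb{E}[\beta_t^2]=1/N$, the term $4G^2\eta_t^2\beta_t^2$ contributes $4G^2\eta_t^2/N$ (the $\eta_t^2$ part of $u_t$), while the cross term $4G\eta_t(1+\eta_t L)\beta_t\delta_{t-1}$, after plugging in the deterministic bound above, supplies the double-sum part; writing $\Delta_t:=\mathbb{E}_\xi[\|w_t-w_t'\|^2]$ this produces a recursion of the form $\Delta_t\le(1+\eta_t L)^2\Delta_{t-1}+\frac{cG^2}{N}u_t$ for an absolute constant $c$. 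Iterating it and bounding $(1+\eta_t L)^2\le 1+3\eta_t L\le e^{3\eta_t L}$ — the one place the hypothesis $\eta_t\le1/L$ is genuinely used — gives $\Delta_t\le\frac{cG^2}{N}\sum_{\tau=1}^{t}\exp\!\big(3L\sum_{i=\tau+1}^{t}\eta_i\big)u_\tau$; since the right-hand side is nondecreasing in $t$, $\mathbb{E}_\xi[\|\bar w_T-\bar w_T'\|^2]\le\frac1T\sum_{t=1}^T\Delta_t$ is bounded by the same expression evaluated at $t=T$. Taking the supremum over $S\doteq S'$ and multiplying by $G$ then delivers the claimed $\gamma_{\ms,N}$ after collecting constants.

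\textbf{Main obstacle.} The delicate part is the bookkeeping of the accumulating $(1+\eta_t L)$ factors against the Bernoulli mass. One cannot feed $\delta_t^2$ into Lemma~\ref{lemma:key_sequence_bound} directly, because the $(1+\eta_t L)^2\delta_{t-1}^2$ term violates its hypothesis, so one must either rescale $\delta_t$ by $\prod_{i\le t}(1+\eta_i L)$ before unrolling or unroll the second-moment recursion by hand — while being careful to retain exactly one factor $1/N$ (from $\mathbb{E}[\beta_t]$; since $\delta_{t-1}$ is bounded \emph{pointwise}, it must not contribute a second $1/N$, otherwise the cross term collapses to lower order and the $u_t$ form is lost) and to fold the geometric blow-up cleanly into the weights $\exp(3L\sum_{\tau=t+1}^{T}\eta_\tau)$ and the quantities $u_t$. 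Everything else is the same chain of elementary inequalities used in the proof of Lemma~\ref{lemma:uniform_stability_sgd_w_sm_cvx}.
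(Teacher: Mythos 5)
Your proposal is correct and follows essentially the same route as the paper: a two-case per-step analysis giving a $(1+\eta_t L)$-expansion on the non-divergence event and an additive $2G\eta_t$ perturbation on the divergence event, a bound on $\|w_{t-1}-w'_{t-1}\|$ fed into the cross term of the squared recursion (which is exactly how $u_t$ arises in the paper as well), and a hand-unrolled second-moment recursion with $(1+\eta_t L)^2\le e^{3\eta_t L}$ under $\eta_t\le 1/L$. The only differences are cosmetic: the paper controls the cross term via the first-moment recursion $\Delta_{t-1}=\mathbb{E}_{\xi}[\|w_{t-1}-w'_{t-1}\|]$ rather than your deterministic pointwise bound, and your uniform recursion $\delta_t\le(1+\eta_t L)\delta_{t-1}+2G\eta_t\beta_t$ carries a harmless extra factor $(1+\eta_t L)\le 2$ into the double-sum part of $u_t$, costing only an absolute constant that is absorbed downstream.
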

\begin{proof}
Let us define the sequences $\{w_t\}_{t\in[T]}$ and $\{w'_t\}_{t\in [T]}$ that are respectively generated over $S$ and $S'$ via $A_{\sgdw}$  via sample path $\xi=\{\xi_t\}_{t\in [T]}$. Suppose that $S\doteq S'$. Let us consider $\Delta_t:=\mathbb{E}_{\xi_{[t]}}\left[\|w_t - w'_t\|\right]$. Then based on the arguments of~\citet[Theorem 3.8]{hardt2016train} we know that with probability $1-\frac{1}{N}$ over $\xi_t$, $\|w_t - w'_t\|\le (1+\eta_t L)\|w_{t-1} - w'_{t-1}\|$, and $\|w_t - w'_t\|\le \|w_{t-1} - w'_{t-1}\| + 2G\eta_t$ with probability $\frac{1}{N}$. Therefore we have
\[
\begin{aligned}
\Delta_t\le& \left(1 - \frac{1}{N}\right)(1+\eta_t L) \Delta_{t-1} + \frac{1}{N} \left(\Delta_{t-1} + 2G\eta_t\right)\\
=& \left(\left(1 - \frac{1}{N}\right)(1+\eta_t L) + \frac{1}{N}\right) \Delta_{t-1 } + \frac{2G\eta_t}{N} \\
=& \left(1 + \left(1-\frac{1}{N} \right)\eta_t L\right) \Delta_{t-1 } + \frac{2G\eta_t}{N} \\
\le& \exp\left(\left( 1-\frac{1}{N} \right)\eta_t L\right)\Delta_{t-1} + \frac{2G\eta_t}{N} \\
\le& \exp\left(\eta_t L\right)\Delta_{t-1} + \frac{2G\eta_t}{N},
\end{aligned}
\]
where we have used $1+x\le \exp(x)$. Then we can unwind the above recursion form to obtain that for all $t\ge 1$,
\begin{equation}\label{inequat:Delta_t_t0}
\Delta_t \le \sum_{\tau =1}^t \left\{\prod_{i=\tau+1}^t \exp\left( \eta_i L\right)\right\} \frac{2G\eta_\tau}{N}
= \frac{2G}{N}\sum_{\tau =1}^t \exp\left(L\sum_{i=\tau+1}^t \eta_i \right) \eta_\tau,
\end{equation}
where we have used $\Delta_{0}=0$. Now we consider $\tilde \Delta_t:=\mathbb{E}_{\xi_{[t]}}\left[\|w_t - w'_t\|^2\right]$. Then we can verify that with probability $1-\frac{1}{N}$ over $\xi_t$, $\|w_t - w'_t\|^2\le (1+\eta_t L)^2\|w_{t-1} - w'_{t-1}\|^2$, and with probability $\frac{1}{N}$,
\[
\|w_t - w'_t\|^2\le (\|w_{t-1} - w'_{t-1}\| + 2G\eta_t)^2 = \|w_{t-1} - w'_{t-1}\|^2 + 4G\eta_t\|w_{t-1} - w'_{t-1}\| + 4G^2\eta_t^2.
\]
Therefore we have
\[
\begin{aligned}
\tilde \Delta_t \le& \left(1 - \frac{1}{N}\right)(1+\eta_t L)^2 \tilde\Delta_{t-1} + \frac{1}{N} \left(\tilde\Delta_{t-1} + 4G\eta_t\Delta_{t-1} + 4G^2\eta_t^2 \right)\\
\le& \left(\left(1 - \frac{1}{N}\right)(1+\eta_t L)^2 + \frac{1}{N}\right) \tilde \Delta_{t-1} + \frac{4G^2}{N}\left(\underbrace{\eta^2_t + 2\eta_t\sum_{\tau =1}^{t-1} \exp\left(L\sum_{i=\tau+1}^{t-1} \eta_i \right) \eta_\tau}_{u_t}\right)\\
=& \left(1 + \left(1-\frac{1}{N} \right)(2\eta_t L + \eta^2_t L^2)\right) \tilde \Delta_{t-1} + \frac{4G^2u_t}{N} \\
\le& \exp\left(\left( 1-\frac{1}{N} \right)(2\eta_t L+\eta^2_tL^2)\right)\tilde \Delta_{t-1} + \frac{4G^2u_t}{N} \\
\le& \exp\left(2\eta_t L+\eta^2_tL^2\right)\tilde \Delta_{t-1} + \frac{4G^2u_t}{N},
\end{aligned}
\]
where in the second inequality we have used the bound~\eqref{inequat:Delta_t_t0} on $\Delta_t$. Recall that $\tilde \Delta_{0}=0$. Then we can unwind the above recursion form to obtain
\[
\tilde\Delta_t \le \frac{4G^2}{N} \sum_{\tau =1}^t \left\{\prod_{i=\tau+1}^t \exp\left( 2\eta_i L + \eta^2_i L^2\right)\right\} u_\tau \le \frac{4G^2}{N}\sum_{\tau =1}^t \exp\left(3L\sum_{i=\tau+1}^t \eta_i \right) u_\tau,
\]
where we have used $\eta_t\le 1/L$.
It follows immediately from the convexity that
\[
 \mathbb{E}_{\xi_{[T]}}\left[\|\bar w_T - \bar w'_T\|^2\right] \le \frac{\sum_{t=1}^{T} \mathbb{E}_{\xi_{[t]}}\left[\|w_t - w'_t\|^2\right] }{T}\le \frac{4G^2}{N} \sum_{t=1}^T \exp\left(3L\sum_{\tau=t+1}^T \eta_\tau  \right) u_t.
\]
Since the above holds for any $S\doteq S'$, we have that for all $i\in [N]$,
\[
\mathbb{E}_{\xi,S,S^{(i)}} \left[\|\bar w_T - \bar w^{(i)}_T\|^2\right] \le \frac{4G^2}{N} \sum_{t=1}^T \exp\left(3L\sum_{\tau=t+1}^T \eta_\tau  \right) u_t,
\]
where $\{w^{(i)}_t\}_{t\in [T]}$ is generated over $S^{(i)}$ via $A_{\sgdw}$. Finally, since the loss is $G$-Lipschitz, it follows from the above that for all $i\in [N]$,
\[
\left\|\ell(\bar w_T;Z) - \ell(\bar w^{(i)}_T;Z) \right\|_2 \le G \left\|\bar w_T - \bar w^{(i)}_T\right\|_2 \le 2G^2\sqrt{\frac{1}{N} \sum_{t=1}^T \exp\left(3L\sum_{\tau=t+1}^T \eta_\tau  \right) u_t}.
\]
This proves the desired $L_2$-stability bound.
\end{proof}
With Lemma~\ref{lemma:uniform_stability_sgd_w_sm_ncvx} in place, we proceed to prove the main result in Corollary~\ref{corol:stability_sgd_w_sm_ncvx}.
\begin{proof}[Proof of Corollary~\ref{corol:stability_sgd_w_sm_ncvx}]
From Lemma~\ref{lemma:uniform_stability_sgd_w_sm_ncvx} we know that $A_{\sgdw}$ with smooth non-convex loss has $L_2$-stability by
\[
 \gamma_{\ms, N}= 2G^2\sqrt{\frac{1}{N} \sum_{t=1}^T \exp\left(3L\sum_{\tau=t+1}^T \eta_\tau  \right) u_t}.
\]
The desired results then follow immediately via invoking Theorem~\ref{thrm:stability_genalization_error_highprob_bagging} with $\alpha=1/2$.
\end{proof}

\begin{algorithm}[b!]
\caption{$A_{\sgdwo}$: SGD under Without-Replacement Sampling}
\label{alg:sgd_wo}
\SetKwInOut{Input}{Input}\SetKwInOut{Output}{Output}\SetKw{Initialization}{Initialization}
\Input{Data set $S=\{Z_i\}_{i\in [N]} \overset{\text{i.i.d.}}{\sim} \mathcal{D}^N$, step-sizes $\{\eta_t\}_{t\ge 1}$, \#iterations $T$, initialization $w_0$.}
\Output{$\bar w_{T}=\frac{1}{T}\sum_{t\in [T]}w_t$.}

\For{$t=1, 2, ...,T$}{

Uniformly randomly sample an index $\xi_t\in [N]$ \emph{with} or \emph{without} replacement;

Compute $w_t = \Pi_{\mathcal{W}}\left( w_{t-1} - \eta_t \nabla_w \ell(w_{t-1}; Z_{\xi_t})\right)$.
}
\end{algorithm}

\section{Augmented Results for SGD under Without-Replacement Sampling}
\label{apdsect:results_for_SGD_wo}

In this section, we further consider applying our main results in Theorem~\ref{thrm:stability_genalization_error_highprob_bagging} to the variant of SGD under without-replacement sampling ($A_{\sgdwo}$), as is outlined in Algorithm~\ref{alg:sgd_wo}. For the sake of simplicity and readability, we only consider single-epoch processing with $T \le N$, and we focus on the case where the loss is convex but non-smooth. The extensions of our analysis to multi-epoch processing, i.e., $T \le rN$ for some integer $r\ge 1$ are more or less straightforward and thus are omit to avoid redundancy.

\subsection{Results for Convex and Smooth Loss}

We start by considering the regime where the loss function is convex and smooth. We need the following lemma on the $L_2$-stability of $A_{\sgdwo}$ which can be easily proved based on the result from~\citet[Lemma 3.1]{bassily2020stability}.

\begin{lemma}\label{lemma:uniform_stability_sgd_wo_sm_cvx}
Suppose that the loss function $\ell(\cdot; \cdot)$ is convex, $G$-Lipschitz and $L$-smooth with respect to its first argument. Assume that $\eta_t\le 2/L$ for all $t\ge 1$. Consider $T\le N$. Then $A_{\sgdwo}$ has $L_2$-stability by
\[
\gamma_{\ms,N} = 2G^2 \sqrt{\frac{1}{N}\sum_{t=1}^T \eta^2_t}.
\]
\end{lemma}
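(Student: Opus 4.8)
The plan is to run a hitting-time argument in the spirit of Lemma~\ref{lemma:expansion_convex_nonsm}, but to sharpen it by using smoothness to obtain genuine non-expansivity via Lemma~\ref{lemma:non_expansion_convex}. Fix a pair $S\doteq S'$ that differ only in the coordinate $i_0$, and let $\{w_t\}_{t\in[T]}$ and $\{w'_t\}_{t\in[T]}$ be the $A_{\sgdwo}$ trajectories generated on $S$ and $S'$ by a common without-replacement sample path $\xi=\{\xi_t\}_{t\in[T]}$ with $w_0=w'_0$. The first observation I would make is that, because sampling is without replacement and $T\le N$, each index of $[N]$ is drawn at most once; hence $i_0$ is either never drawn or drawn at a single step $t_0:=\inf\{t:\ \xi_t=i_0\}$, and for every $t<t_0$ both recursions apply the same update map to the same iterate, so $w_t=w'_t$.

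Next I would bound the one-step blow-up at $t_0$ and show that the gap cannot grow afterwards. Since $w_{t_0-1}=w'_{t_0-1}$, non-expansivity of $\Pi_{\mathcal{W}}$ together with the triangle inequality and $G$-Lipschitzness of $\ell$ gives $\|w_{t_0}-w'_{t_0}\|\le\eta_{t_0}\big(\|\nabla_w\ell(w_{t_0-1};Z_{i_0})\|+\|\nabla_w\ell(w_{t_0-1};Z'_{i_0})\|\big)\le 2G\eta_{t_0}$. For every $t>t_0$ the two recursions operate on the same sample $Z_{\xi_t}=Z'_{\xi_t}$, and since $\ell$ is convex and $L$-smooth with $\eta_t\le 2/L$, Lemma~\ref{lemma:non_expansion_convex} yields $\|w_t-w'_t\|\le\|w_{t-1}-w'_{t-1}\|$. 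Chaining these facts gives $\|w_t-w'_t\|\le 2G\eta_{t_0}\,\mathbf{1}_{\{t_0\le t\}}$ for all $t$ (read as $0$ when $i_0$ is never drawn), so by the triangle inequality on the averaged iterate $\|\bar w_T-\bar w'_T\|\le\frac{1}{T}\sum_{t=1}^T\|w_t-w'_t\|\le 2G\eta_{t_0}\,\mathbf{1}_{\{t_0\le T\}}$.

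Then I would take expectations over $\xi$ and convert to a loss bound. The events $\{t_0=t\}$ for $t\in[T]$ are pairwise disjoint, and by the symmetry of uniform without-replacement sampling $\mathbb{P}_\xi(t_0=t)=\mathbb{P}_\xi(\xi_t=i_0)=1/N$; therefore $\mathbb{E}_\xi\big[\|\bar w_T-\bar w'_T\|^2\big]\le 4G^2\sum_{t=1}^T\eta_t^2\,\mathbb{P}_\xi(t_0=t)=\frac{4G^2}{N}\sum_{t=1}^T\eta_t^2$. Since this holds for every $S\doteq S'$, it holds for $(S,S^{(i)})$ for each $i\in[N]$; combined with $G$-Lipschitzness of $\ell$ this gives $\big\|\ell(\bar w_T;Z)-\ell(\bar w^{(i)}_T;Z)\big\|_2\le G\,\big\|\bar w_T-\bar w^{(i)}_T\big\|_2\le 2G^2\sqrt{\frac{1}{N}\sum_{t=1}^T\eta_t^2}$, which is exactly the claimed $L_2$-stability parameter.

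The hard part is not the computation but getting the structural observation right: one must notice that single-epoch without-replacement sampling forces the ``data-mismatch'' indicator to fire at most once, which is precisely what allows the crude $2G\eta_{t_0}$ one-step bound to propagate unchanged through the remainder of the trajectory via smooth non-expansivity --- rather than accumulating into the weaker $2G\sqrt{\sum_{t\ge t_0}\eta_t^2}$ estimate that a direct appeal to Lemma~\ref{lemma:expansion_convex_nonsm} would produce (which would only yield an $\mathcal{O}\big(\frac{1}{N}\sum_t t\,\eta_t^2\big)$ stability rate). One should also verify carefully that the per-step hitting probability equals $1/N$ by symmetry; after that, everything else is a one-line expectation and an application of Lipschitzness.
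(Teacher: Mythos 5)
Your proof is correct and follows essentially the same route as the paper's: the same hitting-time decomposition exploiting that single-epoch without-replacement sampling makes the mismatched index fire at most once, the same $2G\eta_{t_0}$ one-step bound followed by non-expansivity from Lemma~\ref{lemma:non_expansion_convex}, the same $\mathbb{P}(t_0=t)=1/N$ computation, and the same final Lipschitz conversion. The only cosmetic difference is that you pass to the averaged iterate via the triangle inequality on norms rather than convexity of the squared norm; both yield the identical bound.
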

\begin{proof}
Let $\bar w_{T} (S,\xi)$ and $\bar w_{T} (S',\xi)$ respectively be the output generated over $S=\{z_i\}_{i\in [N]}$ and $S'=\{z'_i\}_{i\in [N]}$ by $A_{\sgdwo}$ via sample path $\xi=\{\xi_t\}_{t\in [T]}$. Recall that $T\le N$. Let us define a stopping time variable $t_0$ such that $z_{\xi_{t_0}}\neq z'_{\xi_{t_0}}$. Since $S\doteq S'$, the uniform randomness of $\xi_t$ implies that
\[
\mathbb{P}\left(t_0 = j \right) = \frac{1}{N}, \quad j\in [N].
\]
In the proof of Corollary~\ref{corol:stability_sgd_w_convex_smooth} we have already shown that $\|w_t - w'_t\|^2 \le \|w_{t-1} - w'_{t-1}\|^2$ if $z_{\xi_t} = z'_{\xi_t}$ and $\|w_t - w'_t\|^2\le \|w_{t-1} - w'_{t-1}\|^2 + 4G\eta_t \|w_{t-1} - w'_{t-1}\| + 4G^2\eta^2_t$ otherwise. Therefore, the without-replacement sampling implies that the following bound holds for any given $t_0 \le t \le T$:
\[
\|w_t - w'_t\|^2 \le 4G^2\eta^2_{t_0},
\]
and $\|w_t - w'_t\|^2 = 0$ for $0 \le t<t_0$. Then based on the law of total expectation we can show that
\[
\mathbb{E}_{\xi_{[t]}}\left[\|w_t - w'_t\|^2 \right] \le \frac{4G^2}{N}\sum_{t_0=1}^t \eta^2_{t_0} \le \frac{4G^2}{N}\sum_{t=1}^{T}\eta^2_t.
\]
The convexity of squared Euclidean norm leads to
\[
 \mathbb{E}_{\xi_{[T]}}\left[\|\bar w_T - \bar w'_T\|^2\right] \le \frac{\sum_{t=1}^{T} \eta_t \mathbb{E}_{\xi_{[t]}}\left[\|w_t - w'_t\|^2\right] }{\sum_{t=1}^T\eta_t}\le \frac{4G^2}{N}\sum_{t=1}^T \eta^2_t.
\]
Finally, since the loss is $G$-Lipschitz, it follows from the above that for all $i\in [N]$,
\[
\left\|\ell(\bar w_T;Z) - \ell(\bar w^{(i)}_T;Z) \right\|_2 \le G \left\|\bar w_T - \bar w^{(i)}_T\right\|_2 \le 2G^2\sqrt{\frac{1}{N}\sum_{t=1}^T \eta^2_t}.
\]
This proves the desired $L_2$-stability bound.
\end{proof}

\newpage

The following result is a direct consequence of Theorem~\ref{thrm:stability_genalization_error_highprob_bagging} and Theorem~\ref{thrm:stability_excess_risk_highprob_bagging} when invoking Algorithm~\ref{alg:randomized_model_selection} to $A_{\sgdwo}$ with convex and smooth loss.
\begin{corollary}\label{corol:stability_sgd_wo_convex_smooth}
Suppose that the loss function $\ell(\cdot; \cdot)$ is convex, $G$-Lipschitz and $L$-smooth with respect to its first argument, and is bounded in the range of $[0,M]$. Consider Algorithm~\ref{alg:randomized_model_selection} specified to $A_{\sgdwo}$ with $T= N$ and learning rate $\eta_t\le 2/L$ for all $t\ge 1$. Then for any $\delta \in (0,1)$ and $K \ge 2\log(\frac{4}{\delta}) $, with probability at least $1 - \delta$ over the randomness of $S$ and $\{A_{\sgdwo,k}\}_{k\in [K]}$, the generalization bound of Algorithm~\ref{alg:randomized_model_selection} is upper bounded as
\[
\left|R(A_{\sgdwo,k^*}(S_{k^*})) - R_S(A_{\sgdwo,k^*}(S_{k^*}))\right|  \lesssim  G^2 \sqrt{\frac{1}{N}\sum_{t=1}^N\eta^2_t} + M\sqrt{\frac{\log(K/\delta)}{N}}.
\]
Moreover assume that $\mathcal{W}$ is bounded with diameter $D$. Then with constant learning rate $\eta_t\equiv \min\{\frac{2}{L}, \frac{D}{G\sqrt{N}}\} $, the excess risk of the modified output~\eqref{equat:output_modify_subbagging} of Algorithm~\ref{alg:randomized_model_selection} satisfies
\[
\left|R(A_{\sgdwo,k^*}(S_{k^*})) - R(w^*)\right|  \lesssim \frac{GDK}{\sqrt{N}}  + M\sqrt{\frac{\log(K/\delta)}{N}} + \frac{D^2L}{N}.
\]
\end{corollary}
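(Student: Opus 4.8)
The plan is to obtain both displayed inequalities by specializing Theorem~\ref{thrm:stability_genalization_error_highprob_bagging} and Theorem~\ref{thrm:stability_excess_risk_highprob_bagging} with $\alpha = 1/2$, feeding in the $L_2$-stability parameter of $A_{\sgdwo}$ supplied by Lemma~\ref{lemma:uniform_stability_sgd_wo_sm_cvx}. The only ingredient beyond those three results is a standard in-expectation bound on the empirical optimization error $\Delta_{\opt}$ of smooth convex SGD with a constant truncated step size, which enters only in the excess-risk part.

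First I would record that when Algorithm~\ref{alg:randomized_model_selection} runs $A_{\sgdwo}$ on a subset $S_k$ of size $N/K$, Lemma~\ref{lemma:uniform_stability_sgd_wo_sm_cvx} (applied with its ``$N$'' instantiated as $N/K$) gives $L_2$-stability $\gamma_{\ms,N/K} = 2G^2\sqrt{\tfrac{K}{N}\sum_{t=1}^{T}\eta_t^2}$. Substituting this into Theorem~\ref{thrm:stability_genalization_error_highprob_bagging} with $\alpha = 1/2$ and $K \ge 2\log(4/\delta)$, the stability-driven term becomes
\[
\frac{1}{\alpha K}\,\gamma_{\ms,N/K} \;=\; \frac{4G^2}{\sqrt{K}}\sqrt{\frac{1}{N}\sum_{t=1}^{T}\eta_t^2} \;\le\; 4G^2\sqrt{\frac{1}{N}\sum_{t=1}^{N}\eta_t^2},
\]
using $K \ge 1$ and $T \le N$, while the leftover contribution $\tfrac{1}{\alpha K}M\sqrt{K/N} = 2M/\sqrt{KN}$ is dominated by the confidence term $M\sqrt{\log(K/\delta)/N}$ because $\log(K/\delta)$ is bounded below by the absolute constant $\log 2$ and hence $1/\sqrt{K}\lesssim\sqrt{\log(K/\delta)}$. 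Collecting the terms yields the first inequality.

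For the excess-risk bound I would invoke Theorem~\ref{thrm:stability_excess_risk_highprob_bagging} with $\alpha = 1/2$, which asks for $\gamma_{\ms,N/K}$ and $\Delta_{\opt}$ under the constant step size $\eta_t \equiv \eta = \min\{2/L,\, D/(G\sqrt{N})\}$. From Lemma~\ref{lemma:uniform_stability_sgd_wo_sm_cvx} one gets $\gamma_{\ms,N/K} \le 2G^2\eta\sqrt{K} \le 2GD\sqrt{K}/\sqrt{N} \le 2GDK/\sqrt{N}$. For $\Delta_{\opt}$ I would use the textbook rate of projected SGD on an $L$-smooth convex objective over a set of diameter $D$, namely $\Delta_{\opt} \lesssim \tfrac{D^2}{\eta T} + G^2\eta$ valid for $\eta \le 2/L$; a short case split on which branch of the minimum defining $\eta$ is active — observing that when $\eta = 2/L$ the constraint $\eta \le D/(G\sqrt{N})$ forces $G^2/L \lesssim GD/\sqrt{N}$ — shows $\Delta_{\opt} \lesssim GD/\sqrt{N} + D^2L/N$. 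Plugging both estimates into Theorem~\ref{thrm:stability_excess_risk_highprob_bagging} and absorbing $GD/\sqrt{N}$ into $GDK/\sqrt{N}$ gives the second inequality (the absolute value there is vacuous since $R(\cdot)-R(w^*)\ge 0$).

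The substitutions and constant chasing are routine; the step that deserves care is the $\Delta_{\opt}$ estimate — in particular handling the two regimes of the truncated step size so that the $G^2/L$-type residual of the smooth-convex SGD rate is absorbed — together with keeping the accounting between the iteration count and the subset size $N/K$ consistent with the hypotheses of Lemma~\ref{lemma:uniform_stability_sgd_wo_sm_cvx}, so that the sums over $t\in[N]$ appearing in the displays are legitimately read as upper bounds via monotonicity of the partial sums.
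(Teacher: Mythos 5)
Your proposal follows essentially the same route as the paper: instantiate Lemma~\ref{lemma:uniform_stability_sgd_wo_sm_cvx} to obtain the $L_2$-stability parameter of $A_{\sgdwo}$ and feed it into Theorem~\ref{thrm:stability_genalization_error_highprob_bagging} (resp.\ Theorem~\ref{thrm:stability_excess_risk_highprob_bagging}) with $\alpha=1/2$; your bookkeeping of the subset size $N/K$ versus the nominal $N$, and the absorption of the $2M/\sqrt{KN}$ remainder into the confidence term, are if anything more explicit than the paper's one-line substitution.

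The one step where you genuinely diverge, and where there is a gap, is the bound on $\Delta_{\opt}$. You invoke the ``textbook rate'' $\Delta_{\opt}\lesssim D^2/(\eta T)+G^2\eta$ for projected SGD, but that rate is proved under with-replacement (i.i.d.\ index) sampling, where $\nabla_w\ell(w_{t-1};Z_{\xi_t})$ is conditionally unbiased for $\nabla R_S(w_{t-1})$. For $A_{\sgdwo}$ the indices are drawn without replacement, so conditioned on the past the sampled gradient is a \emph{biased} estimate of the empirical-risk gradient (the remaining pool excludes the points already visited, on which $w_{t-1}$ depends), and the standard descent/regret argument does not apply verbatim to $R_S$. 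The paper sidesteps this by directly citing Theorem~3 of \citet{nagaraj19a} for single-epoch without-replacement SGD, which gives exactly $\mathbb{E}\left[R_S(\bar w_T)-R_S(w^*)\right]\lesssim D^2L/N+GD/\sqrt{N}$ under the stated constant step size. Your two-regime case split on the truncated step size (using $2/L\le D/(G\sqrt{N})$ to absorb the $G^2/L$ residual) is correct and is precisely the computation needed once the right optimization-error bound is in hand; to close the gap you should either cite the without-replacement result or supply an online-to-batch argument (a single pass over a without-replacement permutation of an i.i.d.\ sample is itself an i.i.d.\ stream, which yields a population-risk bound that can be transferred to $R_S$ at the cost of an additional stability term).
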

\begin{proof}
For the considered convex and smooth losses, from Lemma~\ref{lemma:uniform_stability_sgd_wo_sm_cvx} we know that $A_{\sgdwo}$ with $T=N$ iterations has $L_2$-stability by
\[
\gamma_{\ms, N} = 2G^2 \sqrt{\frac{1}{N}\sum_{t=1}^N \eta^2_t}.
\]
The first generalization error bound then follows immediately via invoking Theorem~\ref{thrm:stability_genalization_error_highprob_bagging} with $\alpha=1/2$. The second excess risk bound can be established by invoking Theorem~\ref{thrm:stability_excess_risk_highprob_bagging} with $\alpha=1/2$ and the following in-expectation optimization error bound of $A_{\sgdwo}$ with convex and smooth loss functions under the given constant learning rate~\citep[Theorem 3]{nagaraj19a}:
\[
\mathbb{E} \left[R_S(\bar w_T) - R_S(w^*)\right] \lesssim \frac{D^2 L}{N} + \frac{GD}{\sqrt{N}}.
\]
This complete the proof.
\end{proof}
\begin{remark}
Specially for constant learning rates $\eta_t\equiv \eta \asymp \frac{1}{\sqrt{N}}$ and $K\asymp \log\left(\frac{1}{\delta}\right)$, Corollary~\ref{corol:stability_sgd_wo_convex_smooth} admits a high-probability generalization bound of order $\mathcal{O}\left(\sqrt{\frac{\log(1/\delta)}{N}} + \frac{1}{\sqrt{N}}\right)$ and an excess bounds dominated by $\mathcal{O}\left(\sqrt{\frac{\log(1/\delta)}{N}} + \frac{1}{N}\right)$. For time varying learning rates $\eta_t \propto \frac{1}{\sqrt{t}}$, the generalization bound scales as $\mathcal{O}\left(\sqrt{\frac{\log(N)}{N}} + \sqrt{\frac{\log(1/\delta)}{N}} \right)$.
\end{remark}

\newpage

\subsection{Results for Convex and Non-smooth Loss}

We now turn to study the case of convex and non-smooth losses. The following lemma is about the $L_2$-stability of $A_{\sgdwo}$ in this case.

\begin{lemma}\label{lemma:uniform_stability_sgd_wo_ns_cvx}
Suppose that the loss function $\ell(\cdot; \cdot)$ is convex and $G$-Lipschitz with respect to its first argument. Consider $T\le N$. Then $A_{\sgdwo}$ has $L_2$-stability by
\[
\gamma_{\ms,N} = 2G^2\sqrt{\frac{1}{N}\sum_{t_0=1}^T \sum_{t=t_0}^{T}\eta^2_t}.
\]
\end{lemma}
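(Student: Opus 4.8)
The plan is to follow the template of Lemma~\ref{lemma:uniform_stability_sgd_wo_sm_cvx}, but since the loss is no longer smooth I cannot use non-expansiveness of the update; instead I will control the drift between the two coupled trajectories by the online-gradient deviation bound of Lemma~\ref{lemma:expansion_convex_nonsm} (exactly as in the with-replacement analysis of Lemma~\ref{lemma:uniform_stability_sgd_w_nsm_cvx}, with the crucial simplification that without-replacement sampling touches the differing data point at most once). Fix $i\in[N]$ and an arbitrary pair $S\doteq S'$ differing in a single element, and couple the two runs of $A_{\sgdwo}$ by feeding them a common without-replacement sample path $\xi=\{\xi_t\}_{t\in[T]}$, yielding iterates $\{w_t\}$, $\{w'_t\}$ with $w_0=w'_0$. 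Introduce the hitting time $t_0:=\inf\{t\le T:\ Z_{\xi_t}\neq Z'_{\xi_t}\}$. Because $T\le N$ and sampling is without replacement, the differing index is drawn at most once, so by exchangeability of a single-epoch schedule $\mathbb{P}(t_0=j)=1/N$ for every $j\in[T]$, with probability $(N-T)/N$ left for the event that it is never drawn (in which case $w_t\equiv w'_t$).

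First I would apply Lemma~\ref{lemma:expansion_convex_nonsm} with $\beta_t=\mathbf{1}\{Z_{\xi_t}\neq Z'_{\xi_t}\}$. Since without-replacement sampling never revisits the differing index, $\beta_t=\mathbf{1}\{t=t_0\}$ and the residual $\beta$-sum in that lemma is empty, so applied on the horizon $[1,t]$ it gives, conditionally on $t_0$, $\|w_t-w'_t\|\le 2G\sqrt{\sum_{\tau=t_0}^{t}\eta_\tau^2}$ for $t\ge t_0$ and $\|w_t-w'_t\|=0$ for $t<t_0$. Squaring and taking expectation over $\xi$ by conditioning on the value of $t_0$ then yields
\[
\mathbb{E}_\xi[\|w_t-w'_t\|^2]\ \le\ \frac{4G^2}{N}\sum_{t_0=1}^{t}\sum_{\tau=t_0}^{t}\eta_\tau^2\ \le\ \frac{4G^2}{N}\sum_{t_0=1}^{T}\sum_{t=t_0}^{T}\eta_t^2,
\]
a bound uniform in $t$.

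Since this per-iterate bound is independent of $t$, convexity of the squared Euclidean norm transfers it to the averaged iterate $\bar w_T=\frac1T\sum_{t\in[T]}w_t$, giving $\mathbb{E}_\xi[\|\bar w_T-\bar w'_T\|^2]\le \frac{4G^2}{N}\sum_{t_0=1}^{T}\sum_{t=t_0}^{T}\eta_t^2$; taking $S'=S^{(i)}$ (legitimate as $S\doteq S'$ was arbitrary) and averaging also over $S,S^{(i)}$, the $G$-Lipschitz continuity of $\ell$ in its first argument gives $\|\ell(\bar w_T;Z)-\ell(\bar w_T^{(i)};Z)\|_2\le G\,\|\bar w_T-\bar w_T^{(i)}\|_2\le 2G^2\sqrt{\tfrac1N\sum_{t_0=1}^{T}\sum_{t=t_0}^{T}\eta_t^2}$, which is the asserted $L_2$-stability parameter in the sense of Definition~\ref{def:mean_uniform_stability}. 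The one delicate point I anticipate is bookkeeping: reconciling the internal shift of function indices in Lemma~\ref{lemma:expansion_convex_nonsm} with the iteration indexing of Algorithm~\ref{alg:sgd_wo}, and checking that the regime $T<N$, where the perturbed point may never be sampled, only loosens the bound rather than introducing an extra term (indeed it contributes zero to the expectation).
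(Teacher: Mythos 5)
Your proposal is correct and follows essentially the same route as the paper's proof: couple the two trajectories on a common sample path, introduce the hitting time $t_0$ with $\mathbb{P}(t_0=j)=1/N$, invoke Lemma~\ref{lemma:expansion_convex_nonsm} (noting the residual $\beta$-sum vanishes under without-replacement sampling), apply the law of total expectation, and pass to $\bar w_T$ via convexity of the squared norm and the Lipschitz property of the loss. Your extra care about the event that the differing index is never sampled (contributing zero) is a minor refinement of the same argument.
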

\begin{proof}
Let $\bar w_{T} (S,\xi)$ and $\bar w_{T} (S',\xi)$ respectively be the output generated over $S=\{z_i\}_{i\in [N]}$ and $S'=\{z'_i\}_{i\in [N]}$ by $A_{\sgdwo}$ via sample path $\xi=\{\xi_t\}_{t\in [T]}$. Recall that $T\le N$. Let us define a stopping time variable $t_0$ such that $z_{\xi_{t_0}}\neq z'_{\xi_{t_0}}$. Since $S\doteq S'$, the uniform randomness of $\xi_t$ and the without-replacement sampling strategy yield
\[
\mathbb{P}\left(t_0 = j \right) = \frac{1}{N}, \quad j\in [N].
\]
For any $t_0 \le t \le T$, it follows from Lemma~\ref{lemma:expansion_convex_nonsm} and without-replacement sampling that
\[
\|w_t - w'_t\|^2 \le 4G^2\sum_{\tau=t_0}^{t}\eta^2_\tau.
\]
We use the convention $\sum_{\tau=t_0}^{t}\eta^2_\tau = 0$ for $0\le t < t_0$. Then according to the law of total expectation we must have
\[
\mathbb{E}_{\xi_{[t]}}\left[\|w_t - w'_t\|^2 \right] \le \frac{4G^2}{N}\sum_{t_0=1}^t \sum_{\tau=t_0}^{t}\eta^2_\tau \le \frac{4G^2}{N}\sum_{t_0=1}^T \sum_{\tau=t_0}^{T}\eta^2_\tau.
\]
The convexity of squared Euclidean norm leads to
\[
 \mathbb{E}_{\xi_{[T]}}\left[\|\bar w_T - \bar w'_T\|^2\right] \le \frac{\sum_{t=1}^{T} \mathbb{E}_{\xi_{[t]}}\left[\|w_t - w'_t\|^2\right] }{T}\le \frac{4G^2}{N}\sum_{t_0=1}^T \sum_{t=t_0}^{T}\eta^2_t.
\]
Since the loss is $G$-Lipschitz, it follows from the above that for all $i\in [N]$,
\[
\left\|\ell(\bar w_T;Z) - \ell(\bar w^{(i)}_T;Z) \right\|_2 \le G \left\|\bar w_T - \bar w^{(i)}_T\right\|_2 \le 2G^2\sqrt{\frac{1}{N}\sum_{t_0=1}^T \sum_{t=t_0}^{T}\eta^2_t}.
\]
This proves the desired $L_2$-stability bound.
\end{proof}

With the above lemma in hand, we can establish the following result as a direct consequence of Theorem~\ref{thrm:stability_genalization_error_highprob_bagging} when invoking Algorithm~\ref{alg:randomized_model_selection} to $A_{\sgdwo}$ with convex and non-smooth losses.
\begin{corollary}\label{corol:stability_sgd_wo_convex_nonsmooth}
Suppose that the loss function $\ell(\cdot; \cdot)$ is convex and $G$-Lipschitz with respect to its first argument, and is bounded in the range of $[0,M]$. Consider Algorithm~\ref{alg:randomized_model_selection} specified to $A_{\sgdwo}$ with $T = N$. Then for any $\delta \in (0,1)$ and $K \ge 2\log(\frac{4}{\delta}) $,  with probability at least $1 - \delta$ over the randomness of $S$ and $\{A_{\sgdwo,k}\}_{k\in [K]}$, the output of Algorithm~\ref{alg:randomized_model_selection} satisfies the generalization bound
\[
\left|R(A_{\sgdwo,k^*}(S_{k^*})) - R_S(A_{\sgdwo,k^*}(S_{k^*}))\right| \lesssim G^2 \sqrt{\frac{1}{N}\sum_{t_0=1}^N \sum_{t=t_0}^{N}\eta^2_t} + M\sqrt{\frac{\log(K/\delta)}{N}}.
\]
\end{corollary}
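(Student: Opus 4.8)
The plan is to read off the $L_2$-stability parameter of $A_{\sgdwo}$ from Lemma~\ref{lemma:uniform_stability_sgd_wo_ns_cvx} and feed it into the generic high-probability generalization bound of Theorem~\ref{thrm:stability_genalization_error_highprob_bagging}, exactly mirroring the proofs of Corollaries~\ref{corol:stability_sgd_w_convex_smooth}, \ref{corol:stability_sgd_w_nsm_cvx} and \ref{corol:stability_sgd_w_sm_ncvx}. First I would observe that, under the stated convexity and $G$-Lipschitz hypotheses together with the single-epoch budget $T\le N$, Lemma~\ref{lemma:uniform_stability_sgd_wo_ns_cvx} guarantees that $A_{\sgdwo}$ run over a sample of size $n$ has $L_2$-stability by $\gamma_{\ms,n}=2G^2\sqrt{\frac{1}{n}\sum_{t_0=1}^{T}\sum_{t=t_0}^{T}\eta_t^2}$. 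In particular, since Algorithm~\ref{alg:randomized_model_selection} runs $A_{\sgdwo}$ on each disjoint subset $S_k$ of size $N/K$, with $T=N$ we have $\gamma_{\ms,N/K}\le 2G^2\sqrt{\frac{K}{N}\sum_{t_0=1}^{N}\sum_{t=t_0}^{N}\eta_t^2}$, where enlarging the summation ranges to $[N]$ is legitimate because all terms are nonnegative.

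Next I would invoke Theorem~\ref{thrm:stability_genalization_error_highprob_bagging} with $\alpha=1/2$. The assumption $K\ge 2\log(4/\delta)=\frac{1}{1-\alpha}\log(4/\delta)$ is precisely the hypothesis of that theorem, so with probability at least $1-\delta$ over $S$ and $\{A_{\sgdwo,k}\}_{k\in[K]}$,
\[
\left|R(A_{\sgdwo,k^*}(S_{k^*}))-R_S(A_{\sgdwo,k^*}(S_{k^*}))\right|\lesssim \frac{1}{K}\left(\gamma_{\ms,\frac{N}{K}}+M\sqrt{\frac{K}{N}}\right)+M\sqrt{\frac{\log(K/\delta)}{N}}.
\]
Substituting the above bound on $\gamma_{\ms,N/K}$ and using $1/\sqrt{K}\le 1$ yields $\frac{1}{K}\gamma_{\ms,N/K}\lesssim G^2\sqrt{\frac{1}{N}\sum_{t_0=1}^{N}\sum_{t=t_0}^{N}\eta_t^2}$, while the residual term $\frac{1}{K}M\sqrt{K/N}=M/\sqrt{KN}\le M/\sqrt{N}\le M\sqrt{\log(K/\delta)/N}$ is absorbed into the confidence term. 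Collecting the two surviving terms gives the claimed bound.

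Since the genuine work is entirely contained in the two invoked results, I do not expect a substantive obstacle; the only point demanding care is the bookkeeping between the ambient sample size $N$ and the per-subset size $N/K$ — concretely, checking that the single-epoch iteration budget of Lemma~\ref{lemma:uniform_stability_sgd_wo_ns_cvx} is honored on each $S_k$ and that the $1/K$ prefactor produced by subbagging correctly offsets the $\sqrt{K/N}$-type rescaling of the per-subset stability, so that the final rate may be expressed with summations over $[N]$. This is the step I would verify most carefully, although it is routine.
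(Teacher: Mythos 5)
Your proposal is correct and follows essentially the same route as the paper's own (one-line) proof: read off the $L_2$-stability parameter from Lemma~\ref{lemma:uniform_stability_sgd_wo_ns_cvx} and invoke Theorem~\ref{thrm:stability_genalization_error_highprob_bagging} with $\alpha=1/2$. Your explicit bookkeeping of the per-subset stability $\gamma_{\ms,N/K}$ against the $1/K$ prefactor is in fact more careful than the paper's terse argument, which simply states $\gamma_{\ms,N}$ and declares the result immediate.
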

\begin{proof}
For the considered convex and Lipschitz loss functions, from Lemma~\ref{lemma:uniform_stability_sgd_wo_ns_cvx} we know that $A_{\sgdwo}$ with $T=N$ iterations has $L_2$-stability by
\[
\gamma_{\ms, N} = 2G^2\sqrt{\frac{1}{N}\sum_{t_0=1}^N \sum_{t=t_0}^{N}\eta^2_t}.
\]
The results then follow immediately via invoking Theorem~\ref{thrm:stability_genalization_error_highprob_bagging} with $\alpha=1/2$.
\end{proof}
\begin{remark}
Specially for constant learning rates $\eta_t\equiv \eta$ and setting $K\asymp \log\left(\frac{1}{\delta}\right)$, Corollary~\ref{corol:stability_sgd_wo_convex_nonsmooth} admits a high-probability generalization bound of scale
\[
\mathcal{O}\left(\eta \sqrt{N} + \sqrt{\frac{\log(1/\delta)}{N}}\right).
\]
For time decaying learning rates $\eta_t \propto \frac{1}{t}$, the generalization bound scales as
\[
\mathcal{O}\left(\sqrt{\frac{\log(N)}{N}} + \sqrt{\frac{\log(1/\delta)}{N}} \right)
\]
with detail bound $\delta$.
\end{remark}
\begin{remark}
Regarding the excess risk bound, under the conditions of Corollary~\ref{corol:stability_sgd_wo_convex_nonsmooth} and $K\asymp \log\left(\frac{1}{\delta}\right)$, Theorem~\ref{thrm:stability_excess_risk_highprob_bagging} combined with Lemma~\ref{lemma:uniform_stability_sgd_wo_ns_cvx} yields that the modified output of Algorithm~\ref{alg:randomized_model_selection} as defined by~\eqref{equat:output_modify_subbagging} satisfies the following bound with probability at least $1-\delta$:
\[
\left|R(A_{\sgdwo,k^*}(S_{k^*})) - R(w^*)\right|  \lesssim G^2\sqrt{\frac{\log(1/\delta)}{N}\sum_{t_0=1}^N \sum_{t=t_0}^N \eta^2_t} + \Delta_{\opt}   + M\sqrt{\frac{\log(1/\delta)}{N}}.
\]
In the special case of bounded-norm generalized linear models, \citet{shamir2016without} established an in-expectation empirical risk sub-optimality bound $\Delta_{\opt}  \lesssim \frac{1}{\sqrt{N}}$ under suitable learning rates. For generic convex and non-smooth losses, however, it still remains unclear to us if similar sub-optimality bounds are available for SGD under without-replacement sampling.
\end{remark}
\subsection{Results for Non-convex and Smooth Loss}

Finally, we study the performance of Algorithm~\ref{alg:randomized_model_selection} for $A_{\sgdwo}$ on smooth but not necessarily convex loss functions. We first establish the following lemma on the mean(-square)-uniform stability of $A_{\sgdwo}$ in the considered non-convex regime.
\begin{lemma}\label{lemma:uniform_stability_sgd_wo_sm_ncvx}
Suppose that the loss function $\ell(\cdot; \cdot)$ is $G$-Lipschitz and $L$-smooth with respect to its first argument. Consider $T\le N$. Then $A_{\sgdwo}$ has $L_2$-stability by
\[
\gamma_{\ms, N} = 2G^2 \sqrt{\frac{1}{N} \sum_{t=1}^T \exp\left(2L\sum_{\tau=t+1}^T \eta_\tau  \right) \eta^2_t}.
\]
\end{lemma}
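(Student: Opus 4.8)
The plan is to adapt the template of Lemma~\ref{lemma:uniform_stability_sgd_w_sm_ncvx}, exploiting the simplification that under without-replacement sampling with $T\le N$ the two trajectories can split at \emph{at most one} iteration. Fix $S\doteq S'$ and let $\{w_t\}_{t\in[T]}$, $\{w'_t\}_{t\in[T]}$ be the iterates of $A_{\sgdwo}$ driven by a common sample path $\xi=\{\xi_t\}_{t\in[T]}$ with $w_0=w'_0$. Let $t_0:=\inf\{t:z_{\xi_t}\neq z'_{\xi_t}\}$; since the indices $\xi_1,\dots,\xi_T$ are distinct, the single coordinate in which $S$ and $S'$ differ is visited at most once, so $w_t=w'_t$ for all $t<t_0$, and $\mathbb{P}(t_0=j)=1/N$ for every $j\in[T]$ (with $t_0>T$ otherwise).

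First I would bound the distance iterate-by-iterate. At $t=t_0$ the updates start from the common point $w_{t_0-1}=w'_{t_0-1}$ and use different data, so non-expansiveness of $\Pi_{\mathcal{W}}$ together with $\|\nabla_w\ell\|\le G$ gives $\|w_{t_0}-w'_{t_0}\|\le 2G\eta_{t_0}$. For $t>t_0$ both updates use the same data point; since $\ell(\cdot;z)$ is $L$-smooth, the gradient map is $(1+\eta_t L)$-Lipschitz (the non-convex counterpart of Lemma~\ref{lemma:non_expansion_convex}, cf.\ \citet[Theorem~3.8]{hardt2016train}), which with the non-expansive projection yields $\|w_t-w'_t\|\le(1+\eta_t L)\|w_{t-1}-w'_{t-1}\|$. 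Unwinding and using $1+x\le e^x$ gives, on the event $\{t_0\le t\}$,
\[
\|w_t-w'_t\|\le 2G\eta_{t_0}\exp\Big(L\sum_{\tau=t_0+1}^{t}\eta_\tau\Big),
\]
and $\|w_t-w'_t\|=0$ otherwise. Squaring and taking expectation over $\xi$ via $\mathbb{P}(t_0=j)=1/N$ then gives $\mathbb{E}_\xi[\|w_t-w'_t\|^2]\le \tfrac{4G^2}{N}\sum_{j=1}^{t}\exp\big(2L\sum_{\tau=j+1}^{t}\eta_\tau\big)\eta_j^2$.

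Finally, convexity of $\|\cdot\|^2$ applied to the uniform average $\bar w_T=\tfrac1T\sum_{t\in[T]}w_t$ yields $\mathbb{E}_\xi[\|\bar w_T-\bar w'_T\|^2]\le\tfrac1T\sum_{t=1}^{T}\mathbb{E}_\xi[\|w_t-w'_t\|^2]$; bounding the inner exponent by $\exp(2L\sum_{\tau=j+1}^{T}\eta_\tau)$ (since $t\le T$ and $\eta_\tau\ge0$), swapping the order of summation, and using $\sum_{t=1}^{T}\sum_{j\le t}(\cdot)\le T\sum_{j=1}^{T}(\cdot)$ collapses this to $\tfrac{4G^2}{N}\sum_{t=1}^{T}\exp\big(2L\sum_{\tau=t+1}^{T}\eta_\tau\big)\eta_t^2$. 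This holds for every $S\doteq S'$, hence for $\|\bar w_T-\bar w^{(i)}_T\|_2^2$ for all $i\in[N]$, and $G$-Lipschitzness of the loss ($\|\ell(\bar w_T;Z)-\ell(\bar w^{(i)}_T;Z)\|_2\le G\|\bar w_T-\bar w^{(i)}_T\|_2$) delivers the stated $\gamma_{\ms,N}$. The computation is largely mechanical; the only points to watch are the probabilistic bookkeeping that a single uniformly-distributed split time $t_0$ suffices under without-replacement sampling (no compounded branching as in the with-replacement analysis) and keeping the nested exponential factors correctly indexed through the final double-sum reduction.
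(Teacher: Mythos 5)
Your proposal is correct and follows essentially the same route as the paper's proof: condition on the single uniformly distributed split time $t_0$ (which occurs at most once under without-replacement sampling with $T\le N$), bound the split step by $2G\eta_{t_0}$ and the subsequent steps by the $(1+\eta_\tau L)$-expansiveness of the smooth gradient map, apply $1+x\le e^x$, average over $t_0$ with the law of total expectation, and finish with convexity of the squared norm and $G$-Lipschitzness of the loss. Your explicit justification of the at-most-one-split structure and the citation of the non-convex expansiveness bound (rather than the convex non-expansion lemma the paper nominally invokes) are, if anything, slightly more careful than the paper's write-up.
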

\begin{proof}
Let us define the sequences $\{w_t\}_{t\in[T]}$ and $\{w'_t\}_{t\in [T]}$ that are respectively generated over $S$ and $S'$ via $A_{\sgdwo}$  via sample path $\xi=\{\xi_t\}_{t\in [T]}$. Suppose that $S\doteq S'$. Recall that $T\le N$. Let us define a stopping time variable $t_0$ such that $z_{\xi_{t_0}}\neq z'_{\xi_{t_0}}$. Since $S\doteq S'$, the without-replacement sampling implies that
\[
\mathbb{P}\left(t_0 = j \right) = \frac{1}{N}, \quad j\in [N].
\]
In view of Lemma~\ref{lemma:non_expansion_convex} we know that $\|w_t - w'_t\|\le (1+\eta_t L)\|w_{t-1} - w'_{t-1}\|$ if $Z_{\xi_t} = Z'_{\xi_t}$, and $\|w_{t_0} - w'_{t_0}\|\le  2G\eta_{t_0}$ otherwise due to the assumption that the loss is $G$-Lipschitz. Therefore, it can be directly verified that the following holds for any $t_0 \le t \le T$:
\[
\left\|w_t - w'_t\right\|^2 \le 4G^2\prod_{\tau=t_0+1}^{t}(1+\eta_{\tau} L )^2\eta^2_{t_0},
\]
where we have used $\prod_{\tau=t_0+1}^{t}(1+\eta_{\tau}L)^2 = 1$ for $t=t_0$. For $t<t_0$, it is trivial that $\|w_t - w'_t\|=0$. Therefore the law of total expectation yields
\[
\begin{aligned}
\mathbb{E}_{\xi_{[t]}}\left[\|w_t - w'_t\|^2 \right] \le& \frac{4G^2}{N}\sum_{t_0=1}^t \prod_{\tau=t_0+1}^{t}(1+\eta_{\tau} L )^2\eta^2_{t_0} \\
\le& \frac{4G^2}{N}\sum_{t_0=1}^t \prod_{\tau=t_0+1}^{t}\exp\left(2\eta_{\tau} L \right)\eta^2_{t_0} \\
\le& \frac{4G^2}{N}\sum_{t_0=1}^T \exp\left(2L\sum_{\tau=t_0+1}^{T}\eta_{\tau} \right)\eta^2_{t_0} ,
\end{aligned}
\]
where we have used $1+x\le \exp(x)$. The convexity of Euclidean norm leads to
\[
 \mathbb{E}_{\xi_{[T]}}\left[\|\bar w_T - \bar w'_T\|\right] \le \frac{\sum_{t=1}^{T} \eta_t \mathbb{E}_{\xi_{[t]}}\left[\|w_t - w'_t\|\right] }{\sum_{t_0=1}^T \eta_t} \le \frac{4G^2}{N} \sum_{t_0=1}^T \exp\left(2L\sum_{\tau=t_0+1}^T \eta_\tau \right) \eta^2_{t_0}.
\]
Since the loss is $G$-Lipschitz, it follows from the above that for all $i\in [N]$,
\[
\left\|\ell(\bar w_T;Z) - \ell(\bar w^{(i)}_T;Z) \right\|_2 \le G \left\|\bar w_T - \bar w^{(i)}_T\right\|_2 \le 2G^2\sqrt{\frac{1}{N}\sum_{t_0=1}^T \exp\left(2L\sum_{\tau=t_0+1}^T \eta_\tau \right) \eta^2_{t_0}}.
\]
This proves the desired $L_2$-stability bound.
\end{proof}

With Lemma~\ref{lemma:uniform_stability_sgd_wo_sm_ncvx} in place, we can derive the following result as a direct application of Theorem~\ref{thrm:stability_genalization_error_highprob_bagging} to $A_{\sgdwo}$ with Lipschitz and smooth losses. 
\begin{corollary}\label{corol:stability_sgd_wo_sm_ncvx}
Suppose that the loss function $\ell(\cdot; \cdot)$ is $G$-Lipschitz and $L$-smooth with respect to its first argument, and is bounded in the range of $[0,M]$. Consider Algorithm~\ref{alg:randomized_model_selection} specified to $A_{\sgdwo}$ with $T=N$. Then for any $\delta \in (0,1)$ and $K \ge 2\log(\frac{4}{\delta}) $,  with probability at least $1 - \delta$ over the randomness of $S$ and $\{A_{\sgdwo,k}\}_{k\in [K]}$, the output satisfies
\[
\left|R(A_{\sgdwo,k^*}(S_{k^*})) - R_S(A_{\sgdwo,k^*}(S_{k^*}))\right| \lesssim G^2\sqrt{\frac{1}{N} \sum_{t=1}^N \exp\left(L\sum_{\tau=t+1}^N \eta_\tau \right) \eta^2_t} + M\sqrt{\frac{\log(K/\delta)}{N}}.
\]
\end{corollary}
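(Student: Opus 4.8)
The plan is to reuse the two-step template already applied in the proofs of Corollaries~\ref{corol:stability_sgd_w_convex_smooth}--\ref{corol:stability_sgd_w_sm_ncvx} and Corollaries~\ref{corol:stability_sgd_wo_convex_smooth}--\ref{corol:stability_sgd_wo_convex_nonsmooth}: first certify the $L_2$-stability of the base learner $A_{\sgdwo}$ on a subsample, and then feed it into the generic high-probability bound of Theorem~\ref{thrm:stability_genalization_error_highprob_bagging}.

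First I would quote Lemma~\ref{lemma:uniform_stability_sgd_wo_sm_ncvx}, which already does all the analytic work in the $G$-Lipschitz, $L$-smooth, non-convex case: it controls the expected squared trajectory deviation $\mathbb{E}_{\xi_{[T]}}[\|\bar w_T-\bar w'_T\|^2]$ between two runs of $A_{\sgdwo}$ over data sets differing in a single point, by randomizing over the hitting time $t_0$ at which the without-replacement index sequences first disagree (uniform on $[N]$) and iterating the one-step near-expansion factor $(1+\eta_\tau L)$, and then turns this into an $L_2$-stability parameter $\gamma_{\ms,m}$ for a learner trained on $m$ points via Lipschitzness of the loss. The only adaptation needed is that Algorithm~\ref{alg:randomized_model_selection} trains each base model $A_{\sgdwo,k}$ on a fold $S_k$ of size $m=N/K$, so the quantity entering Theorem~\ref{thrm:stability_genalization_error_highprob_bagging} is $\gamma_{\ms,N/K}$, which is a factor $\sqrt{K}$ larger than $\gamma_{\ms,N}$.

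Next I would substitute $\gamma_{\ms,N/K}$ into Theorem~\ref{thrm:stability_genalization_error_highprob_bagging} with $\alpha=1/2$ and $T=N$. The key bookkeeping is that the $\sqrt{K}$ blow-up of $\gamma_{\ms,N/K}$ is exactly absorbed by the $1/K$ prefactor of the stability term: writing $\gamma_{\ms,N/K}\asymp G^2\sqrt{(K/N)\,\Sigma}$ with $\Sigma:=\sum_{t=1}^N\exp\big(L\sum_{\tau=t+1}^N\eta_\tau\big)\eta_t^2$ (the exponent's absolute constant being that of Lemma~\ref{lemma:uniform_stability_sgd_wo_sm_ncvx}), one gets $\frac{1}{\alpha K}\gamma_{\ms,N/K}=2G^2\sqrt{\Sigma/(KN)}\lesssim G^2\sqrt{\Sigma/N}$, and likewise $\frac{1}{\alpha K}M\sqrt{K/N}=2M/\sqrt{KN}\le 2M/\sqrt N$, which is dominated by the confidence term $M\sqrt{\log(K/\delta)/N}$ from the theorem. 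Collecting the surviving terms yields exactly the claimed bound $G^2\sqrt{\Sigma/N}+M\sqrt{\log(K/\delta)/N}$, and the condition $K\ge 2\log(4/\delta)$ is precisely $K\ge\frac{1}{1-\alpha}\log(4/\delta)$ at $\alpha=1/2$.

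There is no genuine obstacle specific to this corollary: all of the difficulty is packaged into Lemma~\ref{lemma:uniform_stability_sgd_wo_sm_ncvx} (the trajectory-perturbation argument for without-replacement SGD on non-convex smooth losses) and into Theorem~\ref{thrm:stability_genalization_error_highprob_bagging}. The one point that must not be botched is the cancellation of the fold count $K$ between $\gamma_{\ms,N/K}$ and the $1/K$ prefactor, which is what keeps the final rate governed by the full sample size $N$ rather than by the subsample size $N/K$; the exponential factor $\exp\big(L\sum_{\tau=t+1}^N\eta_\tau\big)$ is then carried through verbatim from the stability lemma.
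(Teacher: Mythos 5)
Your proposal is correct and follows essentially the same route as the paper: quote Lemma~\ref{lemma:uniform_stability_sgd_wo_sm_ncvx} for the $L_2$-stability parameter and plug it into Theorem~\ref{thrm:stability_genalization_error_highprob_bagging} with $\alpha=1/2$. Your explicit bookkeeping of the $\sqrt{K}$ inflation of $\gamma_{\ms,N/K}$ being cancelled by the $1/(\alpha K)$ prefactor is actually more careful than the paper's one-line proof, which states the stability at sample size $N$ and leaves that substitution implicit.
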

\begin{proof}
For the considered smooth and Lipschitz loss functions, from Lemma~\ref{lemma:uniform_stability_sgd_wo_sm_ncvx} we know that $A_{\sgdwo}$ with $T=N$ rounds of iteration has $L_2$-stability by
\[
\gamma_{\ms, N} = 2G^2 \sqrt{\frac{1}{N} \sum_{t=1}^T \exp\left(2L\sum_{\tau=t+1}^T \eta_\tau  \right) \eta^2_t}.
\]
The desired results then follow immediately via invoking Theorem~\ref{thrm:stability_genalization_error_highprob_bagging} with $\alpha=1/2$.
\end{proof}
\begin{remark}
For $K\asymp\log\left(\frac{1}{\delta}\right)$ and the choice of constant learning rates $\eta_t\equiv \frac{1}{LN}$, Corollary~\ref{corol:stability_sgd_wo_sm_ncvx} admits high-probability generalization bounds of scale $\mathcal{O}\left(\frac{1}{N} + \sqrt{\frac{\log(1/\delta)}{N}}\right)$. For the choice of time decaying learning rates $\eta_t= \frac{1}{L\nu t}$ with arbitrary $\nu>2$, it can be verified that the corresponding generalization bound is of scale
\[
\mathcal{O}\left(\frac{1}{\nu N^{1/2-1/\nu}} + \sqrt{\frac{\log(1/\delta)}{N}}\right).
\]
\end{remark}

\end{document}